\documentclass[conference]{IEEEtran}
\usepackage{times}

\usepackage[numbers,compress]{natbib}
\usepackage{multicol}
\usepackage[bookmarks=true]{hyperref}
\usepackage{cuted}

\usepackage{titlesec}
\usepackage[all=normal,paragraphs=tight]{savetrees}

\usepackage[utf8]{inputenc} 
\usepackage[T1]{fontenc}    
\usepackage{hyperref}       
\usepackage{nameref}
\usepackage{tocloft}
\usepackage{url}            
\usepackage{booktabs}       
\usepackage{amsfonts}       
\usepackage{nicefrac}       
\usepackage{microtype}      
\usepackage{xcolor}
\usepackage{xcolor-material}
\usepackage{colortbl}                   
\usepackage{multicol}

\usepackage{graphicx} 

\definecolor{commentblue}{HTML}{598be7}

\usepackage{algorithm}
\usepackage[endLComment={},commentColor=commentblue]{algpseudocodex}

\usepackage{amsmath,amssymb}
\usepackage{amsthm}
\usepackage[font=small,skip=0pt]{caption}

\usepackage{dsfont}
\usepackage{soul}
\usepackage{enumitem}
\usepackage{subcaption}
\usepackage{microtype,wrapfig}

\usepackage{dblfloatfix}

\usepackage{mathtools}
\usepackage{framed}
\usepackage{xspace}

\usepackage[short]{optidef}       

\usepackage{crossreftools}

\usepackage{placeins}

\usepackage{multirow}
\selectcolormodel{natural}
\usepackage{ninecolors}
\selectcolormodel{rgb}

\usepackage{etoolbox}

\usepackage{enumitem}
\usepackage{braket}                 

\usepackage{bm}

\usepackage{siunitx}
\usepackage{thm-restate}
\usepackage[framemethod=TikZ]{mdframed}
\usepackage[most]{tcolorbox}
\usepackage{thm-restate}

\usepackage[scale=0.95]{roboto-mono}

\declaretheorem[name=Lemma]{lemma}
\declaretheorem[name=Definition]{defi}

\declaretheorem[name=Corollary]{coroll}
\declaretheorem[name=Remark,style=normalstyle]{rmk}

\definecolor{MaterialBlueGray10}{HTML}{CFD8DC}
\definecolor{MaterialBlueGray900}{HTML}{263238}
\definecolor{MaterialBlue10}{HTML}{E3F2FD}
\definecolor{MaterialBlue900}{HTML}{0D47A1}
\definecolor{SteelMist}{HTML}{F0EFF6}
\definecolor{SteelSlate}{HTML}{6F7582}

\mdfdefinestyle{ThmFrame}{%
    linecolor=white,
    outerlinewidth=0pt,
    roundcorner=2pt,
    leftmargin=-6pt,
    rightmargin=-2pt,
    innertopmargin=1pt,
    innerbottommargin=1pt,
    innerrightmargin=1pt,
    innerleftmargin=0pt,
    backgroundcolor=MaterialBlue10!40
}
\mdfdefinestyle{DefFrame}{%
    linecolor=white,
    outerlinewidth=0pt,
    roundcorner=2pt,
    leftmargin=-6pt,
    rightmargin=-2pt,
    innertopmargin=1pt, %
    innerbottommargin=1pt, %
    innerrightmargin=1pt,
    innerleftmargin=1pt,
    backgroundcolor=SteelMist!40
}
\mdfdefinestyle{ProofFrame}{%
    linecolor=white,
    outerlinewidth=0pt,
    roundcorner=2pt,
    leftmargin=-3pt,
    rightmargin=-2pt,
    innertopmargin=1pt, %
    innerbottommargin=1pt, %
    innerrightmargin=1pt,
    innerleftmargin=1pt,
    backgroundcolor=MaterialBlue10!40
}

\tcbset{
  ThmFrame/.style={
    enhanced,
    breakable,
    colback=MaterialBlue10!20,
    colframe=MaterialBlue900!35,
    boxrule=0pt,
    boxsep=7pt,
    left=4pt,
    top=1pt,
    arc=12pt,
    outer arc=12pt,
    bottom=3pt,
    overlay unbroken={
      \begin{scope}
        \clip[rounded corners=12pt] (frame.south west) rectangle (frame.north east);
        \fill[MaterialBlue900!35]
          (frame.south west) rectangle ([xshift=3pt]frame.north west);
      \end{scope}
    },
    overlay first={
      \begin{scope}
        \clip[rounded corners=12pt] (frame.south west) rectangle (frame.north east);
        \fill[MaterialBlue900!35]
          (frame.south west) rectangle ([xshift=3pt]frame.north west);
      \end{scope}
    },
    overlay middle={
      \fill[MaterialBlue900!35]
        (frame.south west) rectangle ([xshift=3pt]frame.north west);
    },
    overlay last={
      \begin{scope}
        \clip[rounded corners=12pt] (frame.south west) rectangle (frame.north east);
        \fill[MaterialBlue900!35]
          (frame.south west) rectangle ([xshift=3pt]frame.north west);
      \end{scope}
    },
  }
}
\tcbset{
  DefFrame/.style={
    enhanced,
    breakable,
    colback=SteelMist!35,
    colframe=SteelSlate!35,
    boxrule=0pt,
    boxsep=7pt,
    left=4pt,
    top=1pt,
    right=3pt,
    arc=12pt,
    outer arc=12pt,
    bottom=3pt,
    overlay unbroken={
      \begin{scope}
        \clip[rounded corners=12pt] (frame.south west) rectangle (frame.north east);
        \fill[SteelSlate!35]
          (frame.south west) rectangle ([xshift=3pt]frame.north west);
      \end{scope}
    },
    overlay first={
      \begin{scope}
        \clip[rounded corners=12pt] (frame.south west) rectangle (frame.north east);
        \fill[SteelSlate!35]
          (frame.south west) rectangle ([xshift=3pt]frame.north west);
      \end{scope}
    },
    overlay middle={
      \fill[SteelSlate!35]
        (frame.south west) rectangle ([xshift=3pt]frame.north west);
    },
    overlay last={
      \begin{scope}
        \clip[rounded corners=12pt] (frame.south west) rectangle (frame.north east);
        \fill[SteelSlate!35]
          (frame.south west) rectangle ([xshift=3pt]frame.north west);
      \end{scope}
    },
  }
}
\tcbset{
  ProofFrame/.style={
    enhanced,
    breakable,
    colback=MaterialBlue10!20,
    colframe=MaterialBlue900!55,
    boxrule=0pt,
    boxsep=7pt,
    top=1pt,
    arc=1pt,
    outer arc=1pt,
    bottom=3pt,
    sharp corners=all
  }
}

\DeclareMathOperator*{\argmax}{arg\,max}

\newcommand{\lf}{\left}
\newcommand{\rg}{\right}

\definecolor{colorPredA}{rgb}{0.365,0.494,0.71}
\definecolor{colorPredB}{rgb}{0.333,0.6588,0.4078}
\definecolor{colorPredV}{HTML}{FFAB40}

\newcommand{\predA}[1]{\textcolor[rgb]{0.365,0.494,0.71}{\mathsf{#1}}}
\newcommand{\predB}[1]{\textcolor[rgb]{0.333,0.6588,0.4078}{\mathsf{#1}}}
\newcommand{\predV}[1]{\textcolor[HTML]{FFAB40}{\mathsf{#1}}}

\usepackage{cleveref}

\crefname{thm}{thm.}{thms.}
\Crefname{thm}{Thm.}{Thms.}
\crefname{figure}{fig.}{figs.}
\Crefname{figure}{Fig.}{Figs.}

\title{Bellman Value Decomposition for Task Logic \\ in Safe Optimal Control}

\author{%
  \begin{tabular}[t]{c}
    {William~Sharpless$^{*,1}$,\: Oswin~So$^{*,2}$,\: Dylan~Hirsch$^1$,\: Sylvia~Herbert$^1$,\: Chuchu~Fan$^2$}\\
    \normalfont{$^{1}$UCSD,\;\; $^{2}$MIT,\;\; $^*$Equal contribution,\;\; $\texttt{wsharpless@ucsd.edu}$}
  \end{tabular}
}

\date{November 2025}

\begin{document}
\maketitle

\thispagestyle{plain}
\pagestyle{plain}

\begin{strip}
\begin{center}
    \vspace{-4.2em}
    \includegraphics[width=\linewidth, trim=0pt 7pt 0pt -5pt
    ]{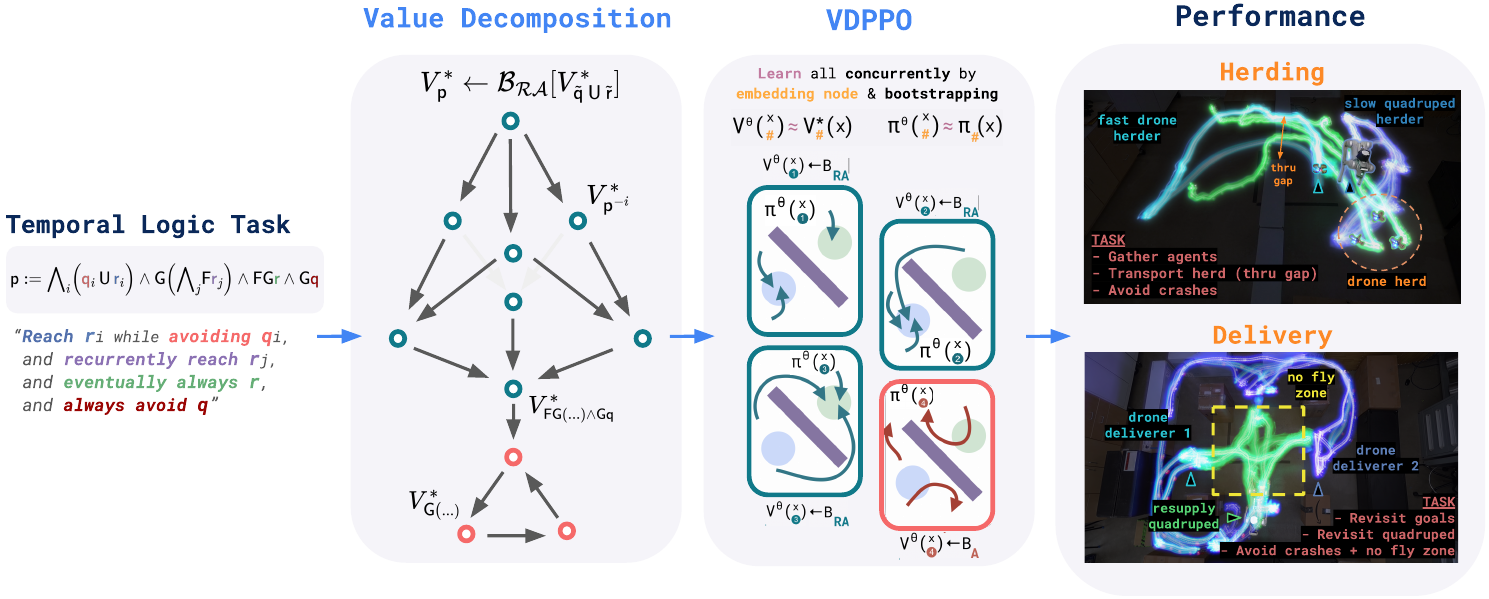}
    \vspace{0.1ex}
    \captionof{figure}{
    \textbf{Value-Decomposition and \texttt{VDPPO}.} The Bellman Value for a range of temporal logic (e.g., multi-goal, recurrence, staying, safety) decomposes into a Value graph connected by atomic Bellman equations (Thms. 1–4).
    We propose \texttt{VDPPO}, an algorithm that exploits this structure to learn policies for complex, high-dimensional tasks. Our approach is validated on hardware with \texttt{Herding} and \texttt{Delivery}, two complex tasks involving a heterogeneous team of drones and a quadruped.
    }
    \vspace{-1.5em}
    \label{fig:front}
\end{center}
\end{strip}

\begin{abstract}
Real-world tasks involve nuanced combinations of goal and safety specifications. 
In high dimensions, the challenge is exacerbated: formal automata become cumbersome, and the combination of sparse rewards tends to require laborious tuning.
In this work, we consider the innate structure of the Bellman Value as a means to naturally organize the problem for improved automatic performance.
Namely, \textbf{we prove the Value for a complex task defined in temporal logic can be decomposed into a graph of Values}, connected by a set of well-known Bellman equations (BEs): the \texttt{Reach}-\texttt{Avoid} BE, the \texttt{Avoid} BE, and a novel type, the \texttt{Reach}-\texttt{Avoid}-\texttt{Loop} BE.
To solve the Value and optimal policy, we propose \texttt{\textbf{VDPPO}}, which embeds the decomposed Value graph into a \textbf{two-layer neural net}, bootstrapping the implicit dependencies.
We conduct a variety of simulated and hardware experiments to test our method on complex, high-dimensional tasks involving heterogeneous teams and nonlinear dynamics. Ultimately, we find this approach greatly improves performance over existing baselines, balancing safety and liveness automatically.  \url{https://willsharpless.github.io/valdec-site/}
\end{abstract}

\section{Introduction and Related Work}

Popular modern approaches to robotic learning, like Reinforcement Learning (RL), typically optimize expected cumulative reward \cite{SuttonRL}, making them ill-suited for safety-critical or temporally structured tasks that require worst  or best-case satisfaction over long horizons. Such objectives are naturally expressed using Temporal Logic (TL) \cite{LTL-and-beyond}, but TL itself does not prescribe how to act. Moreover, efforts to combine RL \& TL tend to face a trade-off between sparse binary rewards that slow learning and hand-crafted dense rewards that can misalign objectives.

Hamilton–Jacobi Reachability (HJR) \cite{mitchell2005time, fisac2015reach} provides optimal controllers for basic safety and liveness tasks via max–min Bellman equations, yielding dense and informative learning signals. Recent work showed that certain TL tasks can be solved exactly by decomposing their Value functions into sequences of simple HJR problems \cite{sharpless2025dual}. As depicted in Fig.~\ref{fig:front} we generalize this idea to a broad class of TL specifications by introducing a Value function decomposition algebra and a corresponding algorithm \texttt{VDPPO}, and demonstrate effectiveness in simulation and real-world drone and quadruped experiments.

\textbf{Constrained, Multi-Objective, and Goal-Conditioned RL.}
Constrained Markov Decision Processes (CMDPs) maximize discounted rewards subject to constraints, typically via Lagrangian relaxation \cite{Altman-CMDPs,Abbeel-Constrained-Policy-Optimization,Safe-RL-CMDPs,Abbeel-Lagrangian-CMDP,CMDP3,CMDP4,CMDP5,CMDP6,CMDP7,CMDP11,CMDP12,CMDP13,pmlr-v168-castellano22a,pmlr-v238-mcmahan24a}, but require careful tuning and are ill-suited to general TL objectives. Multi-objective RL instead Pareto-optimizes multiple reward sums \cite{Model-based-multi-objective-2014,Pareto-Dominating-Policies-2014,Distributional-Multi-Objective,Multi-Objective-2016,MORL2,Generalized-Algorithm-Multi-Objective-2019,Liu_2025}, yet does not naturally encode TL structure. Goal-conditioned RL learns policies over a family of goals \cite{Goal-Conditioned-Problems-and-Solutions,Multi-Goal-Reinforcement-Learning,Exploration-via-Hindsight-GCRL,GCRL1,GCRL2,GCRL6,GCRL4,GCRL1,GCRL2,GCRL3}, but differs fundamentally from TL settings, where all specifications must be jointly satisfied.

  \textbf{RL with TL specifications.} A large body of works studies RL with TL specifications \cite{STL1,STL2,STL3,STL13,Rabin-Automaton-Sastry,Bozkurt_2020}, including approaches based on Non-Markovian Reward Decision Processes \cite{rewarding-behaviors,Decision-theoretic-planning,Guiding-search-LTL,reward-machines,LTL-and-beyond}, approximated quantitative semantics \cite{STL5,STL14,Cai_2021}, modified Bellman equations \cite{cdby1,cdby2,cdby3}, or multiple discounted rewards \cite{cdby4,cdby5,cdby6}. In contrast, our method exactly decomposes TL Value functions into simpler Values for which explicit Bellman equations exist, avoiding semantic approximation and long-horizon reward sparsity. See \Cref{apx:related} and \cite{sharpless2025dual} for additional discussion.

\textbf{Hamilton–Jacobi Reachability.}
HJR was originally developed to compute Value functions for \texttt{Reach}, \texttt{Avoid}, and \texttt{Reach}-\texttt{Avoid} problems in continuous time and space \cite{mitchell2005time,fisac2015reach}, corresponding to the quantitative semantics of \texttt{Eventually}, \texttt{Always}, and \texttt{Until} predicates \cite{chen2018signal}. Recent work has successfully integrated HJR into RL frameworks \cite{so2024solving,hsu2021safety,fisac2019bridging,Ganai2023,yu2022reachability,Zhu2024-ck}. Our work builds on these results by decomposing Value functions for complex TL objectives into sequences of simpler HJR problems.

\vspace{0.1em}
\section{Contributions}

\begin{enumerate}[leftmargin=*, itemsep=8pt, topsep=6pt]
\item We establish a formal connection between Temporal Logic and the Bellman Value by deriving algebraic rules for admissible operations, dubbed the \textcolor[HTML]{A64D79}{\texttt{Value Decomposition Rules}} (\textcolor[HTML]{A64D79}{\texttt{VDR}}) (\Cref{lem:equiv}).
\item We use the \textcolor[HTML]{A64D79}{\texttt{VDR}} to prove that, for a broad class of predicates, the Value decomposes into a graph of atomic BE (Thms.~\ref{thm:n-ra}, \ref{thm:n-ra-loop}): the \texttt{Avoid} BE, \texttt{Reach–Avoid} BE, and a novel \texttt{Reach–Avoid–Loop} BE for recurrence specifications (Def.~\ref{def:raloopbe}, Thm.~\ref{lem:raloopbe}).
\item We introduce \texttt{\textbf{VDPPO}}, an algorithm that embeds the decomposed Value graph (DVG) in two \textit{two-layer neural nets}, and demonstrate its success through extensive simulation and real-world hardware experiments, improving speed and achievement over baselines.

\end{enumerate}

\section{Preliminaries}

Given a discrete-time system $x_{t+1} = f(x_t, a_t)$ with finite states $x_t \in \mathcal{X} \subseteq \mathbb{R}^n$ and action $a_t \in \mathcal{A} \subseteq \mathbb{R}^m$, a trajectory beginning at $x$ is a sequence of states $\xi_{x}^{\alpha} := (x, ...) \in \mathbb{X} := \mathcal{X}^\mathbb{N}$ arising from actions $\alpha = (a, ...) \in \mathbb{A} := \mathcal{A}^\mathbb{N}$. 
We let $\xi_x(t)$ and $\alpha(t)$ be the state and action at time $t$. 

To specify desired properties of a trajectory, let an atomic proposition (AP) $\mathsf{\textcolor[rgb]{0.365,0.494,0.713}{r}}: \mathbb{R}^n \to \{\text{true}, \text{false}\}$ be defined by a bounded predicate function $\textcolor[rgb]{0.365,0.494,0.713}{r}: \mathbb{R}^n \to \mathbb{R}$, also known as a target or reward function in HJR or RL. 
Given a trajectory and time $(\xi_{x}, t)$, $\mathsf{\textcolor[rgb]{0.365,0.494,0.713}{r}}$ is satisfied, written $(\xi_{x}, t) \models \mathsf{\textcolor[rgb]{0.365,0.494,0.713}{r}}$, iff $\textcolor[rgb]{0.365,0.494,0.713}{r}(\xi_{x}(t)) \ge 0$, and thus, $\mathsf{\textcolor[rgb]{0.365,0.494,0.713}{r}}$ represents the arrival of a trajectory at a region (defined by the $0$-level-set of $\textcolor[rgb]{0.365,0.494,0.713}{r}$). Let $\mathsf{\textcolor[rgb]{0.8,0.4,0.4}{q}}$ with $\textcolor[rgb]{0.8,0.4,0.4}{q}$ denote an AP to maintain, s.t. the complement represents an unsafe region. 

To represent complex tasks, TL defines a language for the composition of predicates \cite{maler2004monitoring}. Predicates may be composed via negation (\texttt{NOT}, $\lnot$), conjunction (\texttt{AND}, $\land$), the  \texttt{Until} operator ($\mathsf{U}$) and the next operator (\texttt{NEXT}, $\mathsf{X}$).
With these operations, one may also define disjunction (\texttt{OR},$\lor$), finally/eventually ($\mathsf{F}$), and globally/always ($\mathsf{G}$). 
All operators may be defined via a robustness function $\rho: \mathbb{R}^n \to \mathbb{R}$ \cite{donze2010robust}, a scalar measure of satisfaction, equivalent to the payoff in HJB optimal control \cite{mitchell2005time,fisac2015reach}.
    \begin{defi} \label{def:tlpayoff}
        For any predicate $\mathsf{p}$, let the robustness score $\rho[\mathsf{p}]: \mathbb{X} \times \mathbb{N} \to \mathbb{R}$ be defined inductively with the following rules.

        \vspace{1.5ex}
        {\centering
        \includegraphics[width=\linewidth]
        {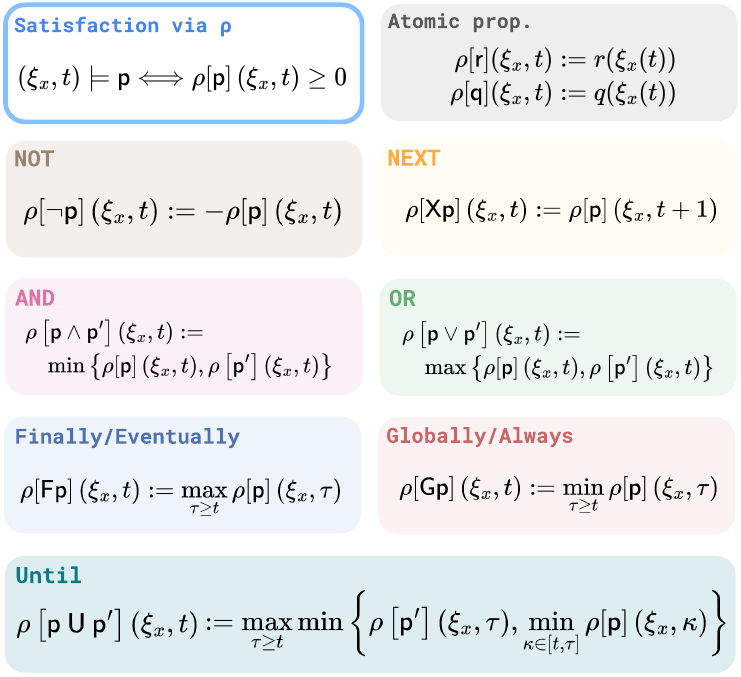}
        }
    \end{defi}
With this syntax, one may express the satisfaction of complex specifications over trajectories formally and succinctly\footnote{Note that $\mathsf{F}\mathsf{\textcolor[rgb]{0.365,0.494,0.713}{r}} = \top \, \mathsf{U} \, \mathsf{\textcolor[rgb]{0.365,0.494,0.713}{r}}$, where $\top$ is true, and thus it often suffices to consider only $\mathsf{U}$ and $\mathsf{G}$ in analysis. Similarly, $\mathsf{G}\mathsf{\textcolor[rgb]{0.8,0.4,0.4}{q}} \land \mathsf{G}\mathsf{\textcolor[rgb]{0.8,0.4,0.4}{q}}' = \mathsf{G}(\mathsf{\textcolor[rgb]{0.8,0.4,0.4}{q}} \land \mathsf{\textcolor[rgb]{0.8,0.4,0.4}{q}}') = \mathsf{G}\mathsf{\textcolor[rgb]{0.8,0.4,0.4}{q}}''$ so we write always specifications succinctly. Also note, $\mathsf{U}$ is defined per HJR convention to include the arrival time s.t. $\kappa \in [t, \tau]$, not $\kappa \in [t, \tau)$.}. Importantly, we use the the robustness score $\rho$ because this is the payoff used in the corresponding HJR optimal control problem \cite{mitchell2005time,fisac2015reach}
(see \Cref{apx:tlbg} for details).






\section{Problem Formulation}

In this work, we consider the problem of synthesizing optimal actions $\alpha$, such that for any initial state $x$ the resulting trajectory $\xi_{x}^\alpha$ maximizes the payoff $\rho$ for a given predicate. We later extend this to a policy $\pi:\mathcal{X} \to \mathcal{A}$ (\Cref{apx:policy}). We assume the system begins at $t=0$ and evolves indefinitely. For brevity, we let $\rho[\mathsf{p}](\xi) := \rho[\mathsf{p}](\xi, 0)$. This leads to the following infinite-horizon Optimal Control Problem (OCP),
\begin{align*}
    \begin{array}{cl}
    {\operatorname{maximize}}_\alpha & \rho[\mathsf{p}](\xi_{x}^\alpha), \\
    \text { s.t. } &  \xi_x^\alpha(t+1) = f \lf( \xi_x^\alpha(t), \alpha(t) \rg).
    \end{array}
\end{align*}
Note, because $\rho$ is defined by temporal extrema ($\max$/$\min$ over time), this program induces behavior characterized by its \textit{outlying performance}, in contrast with a sum-based OCP (in canonical RL \cite{SuttonRL}) which selects for average behavior. This objective is explicitly captured by the Bellman Value function $V^*$, the ``high score'' function for the given infinite-horizon OCP.
    \begin{defi} \label{def:bellmanvalue}
        For a predicate $\mathsf{p}$, we aim to solve
        \begin{equation} \label{eq:bellmanvalue}
            V^*[\mathsf{p}](x) := \max_\alpha \rho[\mathsf{p}](\xi_{x}^\alpha),
        \end{equation}
        which gives the optimal $\rho$ over infinite action sequences from any given initial state.
    \end{defi}

We define the Value for a general TL predicate $\mathsf{p}$ ($V^*_\mathsf{p}$ for brevity), but for basic operations, this object has been studied in-depth \cite{mitchell2005time,fisac2015reach,bansal2017hamilton}. Namely, the \texttt{Reach} ($\mathcal{R}$), \texttt{Avoid} ($\mathcal{A}$) and \texttt{Reach-Avoid} ($\mathcal{R}\mathcal{A}$) Values -- for respectively (1) reaching a target, (2) avoiding an obstacle, and (3) avoiding an obstacle until a target is reached -- correspond to the Values for $\mathsf{F}$, $\mathsf{G}$, and $\mathsf{U}$.
\vspace{1.5ex}
{\centering
\includegraphics[width=\linewidth, trim=0 4mm 0 -3mm]
{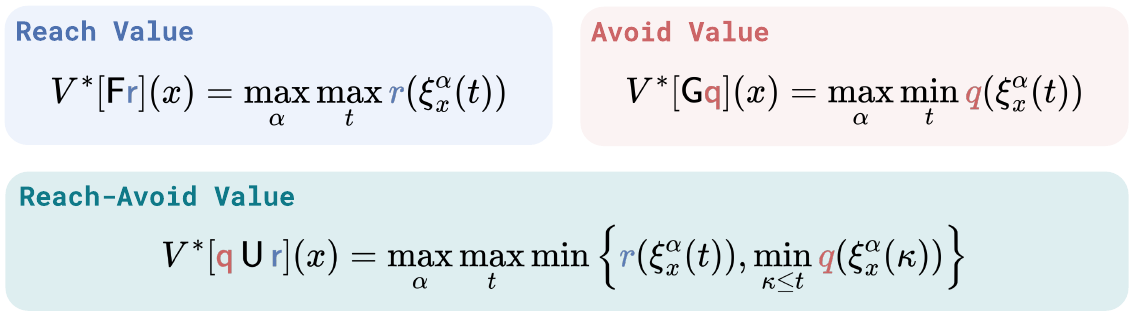}
}
In this context, the following Bellman operations for these extrema-based Values have been derived \cite{fisac2019bridging, hsu2021safety} (note, $\mathcal{R}$ is a special case of $\mathcal{R}\mathcal{A}$).

    \begin{defi} \label{def:atomicbellmaneqn}
        Let the $\mathcal{A}$ and $\mathcal{R}\mathcal{A}$ Bellman operators be
        \begin{align*}
            \mathcal{B}_{\mathcal{A}}^\gamma[V] &:= (1-\gamma)\textcolor[rgb]{0.8,0.4,0.4}{q} + \gamma \min\{V^{\scriptscriptstyle +}, \textcolor[rgb]{0.8,0.4,0.4}{q}\}, \\
            \mathcal{B}_{\mathcal{R}\mathcal{A}}^\gamma[V] &:= (1-\gamma)\min \{\textcolor[rgb]{0.365,0.494,0.713}{r}, \textcolor[rgb]{0.8,0.4,0.4}{q}\} + \gamma \min \{\max\{\textcolor[rgb]{0.365,0.494,0.713}{r}, V^{\scriptscriptstyle +}\}, \textcolor[rgb]{0.8,0.4,0.4}{q}\},
        \end{align*}
        where $V^{\scriptscriptstyle +}(x):=\max_a V(f(x,a))$.
    \end{defi}
These operators are of crucial importance as they are contractive and yield the namesake Value in the limit of discounting \cite{fisac2019bridging, hsu2021safety}. In other words, for the $\mathcal{A}$-Value $V^*[\mathsf{G}\mathsf{\textcolor[rgb]{0.8,0.4,0.4}{q}}]$ and the $\mathcal{R}\mathcal{A}$-Value $V^*[\mathsf{\mathsf{\textcolor[rgb]{0.8,0.4,0.4}{q}}\,U\,\mathsf{\textcolor[rgb]{0.365,0.494,0.713}{r}}}]$ defined by \eqref{eq:bellmanvalue}, we have that
\begin{align*}
    V^\gamma[\mathsf{G}\mathsf{\textcolor[rgb]{0.8,0.4,0.4}{q}}] \leftarrow \mathcal{B}^\gamma_\mathcal{A}[V^\gamma[\mathsf{G}\mathsf{\textcolor[rgb]{0.8,0.4,0.4}{q}}]], \quad V^\gamma[\mathsf{\textcolor[rgb]{0.8,0.4,0.4}{q}}\,\mathsf{U}\,\mathsf{\textcolor[rgb]{0.365,0.494,0.713}{r}}] \leftarrow \mathcal{B}^\gamma_{\mathcal{R}\mathcal{A}}[V^\gamma[\mathsf{\textcolor[rgb]{0.8,0.4,0.4}{q}}\,\mathsf{U}\,\mathsf{\textcolor[rgb]{0.365,0.494,0.713}{r}}]],
\end{align*}
are unique fixed points, satisfying $\lim_{\gamma \to 1} V^\gamma = V^*$ \cite{fisac2019bridging, hsu2021safety}. Unlike canonical RL \cite{SuttonRL}, these operators propagate extrema of $\textcolor[rgb]{0.365,0.494,0.713}{r}$ and $\textcolor[rgb]{0.8,0.4,0.4}{q}$, encouraging behavior defined by outlying performance. 

In a recent work \cite{sharpless2025dual}, it was demonstrated that for simple conjunctions $\mathsf{F}\mathsf{\textcolor[rgb]{0.365,0.494,0.713}{r}} \land \mathsf{G}\mathsf{\textcolor[rgb]{0.8,0.4,0.4}{q}}$ (reach a goal and avoid obstacles) and $\mathsf{F}\textcolor[rgb]{0.365,0.494,0.713}{\mathsf{r}_1} \land \mathsf{F}\textcolor[rgb]{0.365,0.494,0.713}{\mathsf{r}_2}$ (reach two goals), one may decompose the corresponding Values into the ``atomic'' BE, which in some ways resembles the base case for what follows. In this work, we generalize this principle, demonstrating that the $\mathcal{A}$-BE and $\mathcal{R}\mathcal{A}$-BE, along with the novel \texttt{Reach-Avoid-Loop} BE (Lem. \ref{lem:raloopbe}), serve as a set of ``atomic'' building blocks to decompose the Value of complex TL predicates.

\section{Optimality versus Satisfaction}

The decomposition of formal logic is well-studied in several contexts, including formal verification \cite{baier2008principles}, automata theory \cite{clarke1999model}, and temporal logic trees (TLT) \cite{bombara2016decision}. This body of work has established a rich framework for understanding the structure and properties of temporal logic formulas, and has led to performant decompositional learning methods for complex tasks \cite{meng2025tgpo}. However, the algebra of TL, which is equivalent to the algebra over the robustness score, is fundamentally distinct from that of the Value function due to the presence of the maximum over action sequences or control policies in \eqref{def:bellmanvalue}. This distinction is not only relevant to theoretical analysis but can lead to safety failures and sub-optimality in real world applications.
We illustrate this with the following remark and offer concrete counterexamples in \Cref{apx:counter}.

\begin{rmk} \label{rmk:counter}

The following TL identity always holds: $$\rho[\mathsf{ F}\mathsf{\textcolor[rgb]{0.365,0.494,0.713}{r}} \land \mathsf{G}\mathsf{\textcolor[rgb]{0.8,0.4,0.4}{q}}](\xi_x^\alpha) = \min \{\rho[\mathsf{ F}\mathsf{\textcolor[rgb]{0.365,0.494,0.713}{r}}](\xi_x^\alpha), \rho[\mathsf{ G}\mathsf{\textcolor[rgb]{0.8,0.4,0.4}{q}}](\xi_x^\alpha)\}.$$
By contrast, for the corresponding Value, we have $$V^*[\mathsf{ F}\mathsf{\textcolor[rgb]{0.365,0.494,0.713}{r}} \land \mathsf{G}\mathsf{\textcolor[rgb]{0.8,0.4,0.4}{q}}](x) \le \min \{V^*[\mathsf{ F}\mathsf{\textcolor[rgb]{0.365,0.494,0.713}{r}}](x), V^*[\mathsf{ G}\mathsf{\textcolor[rgb]{0.8,0.4,0.4}{q}}](x)\},$$ where the inequality is indeed strict when no single choice of action sequence can both reach $\mathsf{\textcolor[rgb]{0.365,0.494,0.713}{r}}$ and avoid $\lnot\mathsf{\textcolor[rgb]{0.8,0.4,0.4}{q}}$.
\end{rmk}

\section{RESULTS} \label{sec:results}

\begin{figure*}[t]
\begin{center}
    \includegraphics[
    width=\textwidth, 
    trim=0pt -8pt 0pt 0pt
    ]{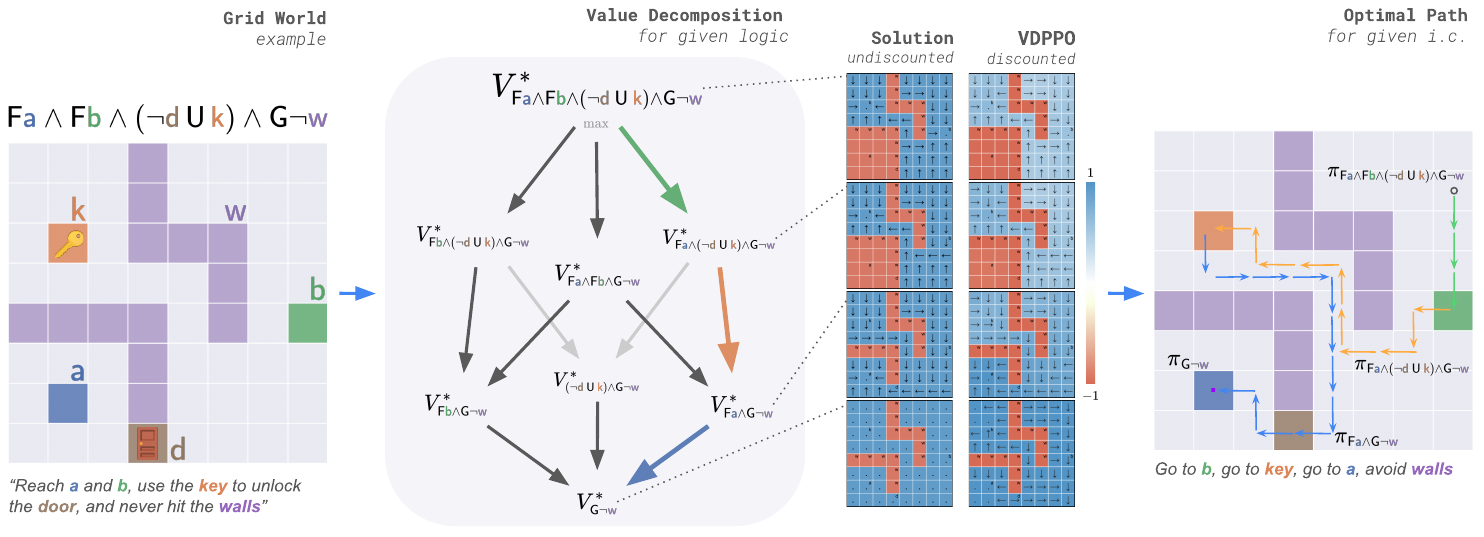}
    \captionof{figure}{\textbf{Multi-\texttt{Reach}-\texttt{Avoid} Value Decomposition.} Here we illustrate the primary decomposition result (Thm.~\ref{thm:n-ra}), with a \texttt{GridWorld} example (left) for a given specification. The corresponding DVG is shown (center left) with each node representing a decomposed Value, and edges representing dependencies. In the center right, a subset of decomposed Values solved with dynamic programming are shown, along with the discounted solution produced by \texttt{VDPPO}. On the right, the optimal path for a given initial condition is shown.
    }
    \label{fig:nRAdemofig}
    \vspace{-2em}
\end{center}
\end{figure*}

We present our main results regarding the decomposition of the Value for complex TL predicates. We begin by discussing the relationship between the Value and TL algebras, and then present the decompositional results. In general, we seek to express the Value for a complex predicate in terms of simpler components that are themselves composed with the fundamental Bellman equations.
All proofs of the results are in \Cref{apx:proofs}.

\subsection{Agreeable Algebra}

We begin by clarifying the similarity between the decomposition of the Value and TL algebra. Although not immediately apparent, these rules will serve as the foundation of complex decompositions presented later. The presence of the $\max_\alpha$ in \eqref{eq:bellmanvalue} does not always distinguish the Value algebra from that of TL, for example, when the TL is also defined by maxima, as with \texttt{OR} and a ``right''-$\mathsf{U}$ (for which, $\mathsf{F}$ is a special case). In any case, when $\max_\alpha$ commutes over the given $\rho$, a decomposition follows that mirrors that of the TL. To state these rules, it is first important consider the atomic proposition associated with a given Value.
\begin{defi} \label{def:atomicbellmaneqn}
    Let $\textcolor[HTML]{FFAB40}{\mathsf{v}}[\mathsf{p}]$ be the AP for $V^*[\mathsf{p}]$, s.t.
    \begin{equation*}
        \rho[\textcolor[HTML]{FFAB40}{\mathsf{v}}[\mathsf{p}]](\xi_{x}, t) := V^*[\mathsf{p}](\xi_{x}(t)).
    \end{equation*}
\end{defi}
\noindent We now give the algebraic properties of Value decomposition.
\begin{tcolorbox}[DefFrame]
    \begin{restatable}{thm}{lemmaequiv} \label{lem:equiv}
        \textnormal{\textcolor[HTML]{A64D79}{\texttt{Value Decomposition Rules}} (\textcolor[HTML]{A64D79}{\texttt{VDR}})}
        \\
        For general $\predA{p}$ and $\predB{p'}$, the following holds:
        \begin{enumerate}[itemsep=8pt, topsep=6pt]
            \item $V^*[\predA{p} \, \mathsf{U} \, \predB{p'}](x) = V^*[\predA{p} \, \mathsf{U} \, {\predV{v}[{\predB{p'}}}]](x)$ \hfill \textnormal{(\texttt{right-U})} \label{vdr:ru}
            \item $V^*[\predA{p} \lor \predB{p'}](x) = V^*[{\predV{v}[{\predA{p}}]} \lor \,{\predV{v}[{\predB{p'}}]}](x)$ \hfill \textnormal{(\texttt{OR})} \label{vdr:or}
            \item if $\forall\alpha,\beta\in \mathcal{A}^\mathbb{N},\,\rho[\predA{p}](\xi_x^\alpha)=\rho[\predA{p}](\xi_x^\beta)$,\\
            $V^*[\predA{p} \land \predB{p'}](x) = V^* [{\predV{v}[{\predA{p}}]} \land \predB{p'}](x)$  \hfill \textnormal{(\texttt{weak-AND})} \label{vdr:wand}
            \item $V^*[\mathsf{X}\predA{p}](x) = V^*[\mathsf{X}{\predV{v}[{\predA{p}}]}](x)$ \hfill \textnormal{(\texttt{NEXT})} \label{vdr:next}
        \end{enumerate}
    \end{restatable}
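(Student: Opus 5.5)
The overall strategy is to establish, for each rule, two opposite inequalities. The common tool is an \emph{interchange} of $\max_\alpha$ with the temporal extrema in the robustness score. A deterministic, time-invariant system gives two structural facts, which I will use repeatedly.

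The first is the \emph{splicing} fact. For any prefix of actions $\alpha(0),\dots,\alpha(\kappa-1)$ and any continuation $\beta$, the concatenated sequence produces the trajectory $\xi_x^\alpha$ up to time $\kappa$. From then on it produces $\xi_{\xi_x^\alpha(\kappa)}^\beta$, shifted by $\kappa$.

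The second is the \emph{suffix} fact. For a predicate $\mathsf{p}$, the score $\rho[\mathsf{p}](\xi_x^\alpha,\kappa)$ depends only on the suffix trajectory from $\xi_x^\alpha(\kappa)$, so it equals $\rho[\mathsf{p}](\xi_{\xi_x^\alpha(\kappa)}^{\alpha_{\kappa:}})$. Consequently $\rho[\mathsf{p}](\xi_x^\alpha,\kappa)\le V^*[\mathsf{p}](\xi_x^\alpha(\kappa)) = \rho[\predV{v}[\mathsf{p}]](\xi_x^\alpha,\kappa)$.

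This inequality immediately gives the ``$\le$'' direction of every rule. In each case the left-hand robustness is a monotone (nondecreasing) combination of $\min$/$\max$ applied to the arguments, so replacing $\rho[\mathsf{p}]$ by the larger $\rho[\predV{v}[\mathsf{p}]]$ cannot decrease it. Taking $\max_\alpha$ of both sides preserves the inequality.

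For the reverse direction I would go rule by rule. In each case I take the maximising $\alpha$ on the right, or an $\epsilon$-optimal one if maxima are not attained (finiteness of $\mathcal{X}$ should make them attained). I then build an $\alpha'$ for the left-hand side by splicing.

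For \texttt{right-U}, write $\rho[\predA{p}\,\mathsf{U}\,\predV{v}[\predB{p'}]] = \max_\tau \min\{V^*_{\predB{p'}}(\xi(\tau)),\min_{\kappa\le\tau}\rho[\predA{p}](\xi,\kappa)\}$. Fix the optimal $\tau$. Keep $\alpha$ on $[0,\tau)$, then switch to the action sequence that is optimal for $\predB{p'}$ from $\xi(\tau)$.

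The delicate point is that $\rho[\predA{p}](\xi,\kappa)$ for $\kappa\le\tau$ depends on the future beyond $\tau$, so switching may change it. I would handle this in the way the statement is naturally read: $\predA{p}$ is a general predicate whose satisfaction over $[0,\tau]$ is evaluated on the spliced trajectory. In other words, I would verify the identity in the form where the maximisation jointly over prefix and suffix is exchanged. If this fails, the fallback is to restrict to $\predA{p}$ whose robustness is determined at the current state, i.e.\ an AP or a Value AP, which is the usage in the later theorems. I expect this to be the main obstacle.

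For \texttt{OR}, $\max_\alpha \max\{\rho_{\predA{p}},\rho_{\predB{p'}}\} = \max\{V^*_{\predA{p}}(x),V^*_{\predB{p'}}(x)\}$, because maxima commute. At time $0$ the right-hand side is exactly this quantity.

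For \texttt{NEXT}, $\max_{\alpha}\rho[\mathsf{p}](\xi,1) = \max_{a}\max_{\beta} \rho[\mathsf{p}](\xi_{f(x,a)}^\beta) = \max_a V^*_{\mathsf{p}}(f(x,a))$ by splicing. This equals $V^*[\mathsf{X}\predV{v}[\mathsf{p}]](x)$.

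For \texttt{weak-AND}, the hypothesis says $\rho[\predA{p}](\xi_x^\alpha)$ equals a constant $c$ independent of $\alpha$, so $c=V^*_{\predA{p}}(x)$. Then $\max_\alpha\min\{c,\rho_{\predB{p'}}\} = \min\{c,V^*_{\predB{p'}}(x)\}$, which is the right-hand side. Both sides therefore reduce to $\min\{V^*_{\predA{p}}(x),V^*_{\predB{p'}}(x)\}$.
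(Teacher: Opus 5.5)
Your treatment of \texttt{OR}, \texttt{NEXT} and \texttt{weak-AND} is correct and is essentially the paper's computation. The same holds for the ``$\le$'' direction via $\rho[\mathsf{p}](\xi_x^\alpha,\kappa)\le V^*[\mathsf{p}](\xi_x^\alpha(\kappa))$.

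The gap is in \texttt{right-U}. Your primary plan, exchanging the joint maximisation over prefix and suffix for a general left argument, cannot work, because the ``$\ge$'' direction is false in that generality. Take the canoe state $x$ of Appendix~\ref{apx:counter}: you can stay put forever, or cross only by hitting a rock. There $V^*[\mathsf{Fr}](x)\ge 0$ and $V^*[\mathsf{Gq}](x)\ge 0$, but $V^*[\mathsf{Fr}\land\mathsf{Gq}](x)<0$. Use $\mathsf{Gq}$ as the left argument and $\mathsf{Fr}$ as the right argument. Since $\kappa\mapsto\rho[\mathsf{Gq}](\xi,\kappa)$ is nondecreasing and $\tau\mapsto\rho[\mathsf{Fr}](\xi,\tau)$ is nonincreasing, $\rho[\mathsf{Gq}\,\mathsf{U}\,\mathsf{Fr}](\xi,0)=\min\{\rho[\mathsf{Fr}](\xi,0),\rho[\mathsf{Gq}](\xi,0)\}$. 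Hence $V^*[\mathsf{Gq}\,\mathsf{U}\,\mathsf{Fr}](x)=V^*[\mathsf{Fr}\land\mathsf{Gq}](x)<0$. On the other side, taking $\tau=0$ gives $V^*[\mathsf{Gq}\,\mathsf{U}\,\mathsf{v}[\mathsf{Fr}]](x)\ge\min\{V^*[\mathsf{Fr}](x),V^*[\mathsf{Gq}](x)\}\ge 0$. So no splice can succeed in general: the suffix that is optimal for $\mathsf{p}'$ from $\xi(\tau)$ may destroy $\min_{\kappa\le\tau}\rho[\mathsf{p}](\xi,\kappa)$.

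You must therefore commit to your fallback and state it as a hypothesis of rule~1 rather than hedging. The hypothesis is that the left argument is purely propositional, i.e.\ a Boolean combination of APs and Value APs. Then $\min_{\kappa\le\tau}\rho[\mathsf{p}](\xi_x^\alpha,\kappa)$ depends only on $\xi_x^\alpha(0),\dots,\xi_x^\alpha(\tau)$, hence only on $\alpha(0),\dots,\alpha(\tau-1)$. Your splice keeps $\alpha$ before $\tau$ and then switches to an optimal sequence for $\mathsf{p}'$ from $\xi_x^\alpha(\tau)$. This leaves that term unchanged and makes the $\mathsf{p}'$ term equal $V^*[\mathsf{p}'](\xi_x^\alpha(\tau))$, which closes the argument.

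This is exactly the unstated assumption in the paper's own derivation. In the line where $\max_{\alpha_{t^+}}$ is moved inside the $\min$ onto the $\mathsf{b}$-term, the $\mathsf{a}$-term is written as $\rho[\mathsf{a}](\xi_x^{\alpha_{t^-}},\kappa)$, a function of the prefix alone. That is legitimate only for a state-determined left argument. The restriction costs nothing downstream. Every application (Thms.~\ref{thm:n-ra}, \ref{thm:n-ra-loop}, \ref{thm:master}, Lem.~\ref{thm:raloop}) has a conjunction of state predicates on the left. Alg.~\ref{alg:policy_composition} also invokes the rule only with a PP left argument.
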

\end{tcolorbox}
\noindent Intuitively, the \textcolor[HTML]{A64D79}{{\texttt{VDR}}} offer four ways to decompose the Value by ``pushing it'' through the logic. In fact, each symbolic manipulation corresponds to distributing the infinite action sequence argument to the logical components of the total predicate. Then by definition, the residual objects are the component Values (see \Cref{apx:aa-proof} for proof). 

Together, this set of operations makes some compositions simple to consider. For example, one may see the Value for $\mathsf{FG}$, i.e. the ``Reach-Stay'' task, readily decomposes by \textcolor[HTML]{A64D79}{{\texttt{VDR}}}-\ref{vdr:ru}, 
\begin{equation} \label{eq:reachstay}
    V^*[\mathsf{FG}\mathsf{\textcolor[rgb]{0.8,0.4,0.4}{q}}](x) = V^*[\mathsf{F}\textcolor[HTML]{59bfb4}{\tilde{\mathsf{r}}}](x), \qquad \textcolor[HTML]{59bfb4}{\tilde{\mathsf{r}}}:=\textcolor[HTML]{FFAB40}{\mathsf{v}}[\mathsf{G}\mathsf{\textcolor[rgb]{0.8,0.4,0.4}{q}}].    
\end{equation}
Hence, this Value may be solved by, first, solving the $\mathcal{A}$-Value for $\mathsf{\textcolor[rgb]{0.8,0.4,0.4}{q}}$, $V^*[\mathsf{G}\mathsf{\textcolor[rgb]{0.8,0.4,0.4}{q}}]$, and then, second, using this to define the target $\textcolor[HTML]{59bfb4}{\tilde{\mathsf{r}}}:=\textcolor[HTML]{FFAB40}{\mathsf{v}}[\mathsf{G}\mathsf{\textcolor[rgb]{0.8,0.4,0.4}{q}}]$ of a $\mathcal{R}$-Value, $V^*[\mathsf{F}\textcolor[HTML]{59bfb4}{\tilde{\mathsf{r}}}]$. More generally, we may know that the Value for a series of \texttt{Until} predicates is equivalent to a chain of $\mathcal{R}\mathcal{A}$ Values (Cor.~\ref{cor:corountilchain} in \Cref{apx:aa-proof}). 

This example illustrates that decomposition via the \textcolor[HTML]{A64D79}{{\texttt{VDR}}} transforms a Value with no apparent Bellman equation into a hierarchy of Values composed of ``atomic'' Bellman equations (the DVG), amounting to a \textit{macro-scale} dynamic programming process. 
These results, however, do not directly apply when the TL is defined by $\min$ as with \texttt{AND}, and thus are insufficient for many common predicates. 
The solution will be to transform the logic into a special, amenable form.
We show several examples of this transformation in the following sections.

\subsection{Multi-\texttt{Reach}-\texttt{Avoid} Value Decomposition}

We next present the first major result of the work concerning the decomposition of the Value for the \texttt{AND} of $N$ \texttt{Until} predicates with a safety spec., or equivalently, the $N$-$\mathcal{R}\mathcal{A}$-$\mathcal{A}$ Value. This is a generalization of the dual-objective base cases that preceded this work \cite{sharpless2025dual}. Akin to later results, the decomposition is achieved by transforming the logic into an amenable form, and then applying the algebraic rules of \Cref{lem:equiv} to decompose the Value into a single $\mathcal{R}\mathcal{A}$-Value.

\begin{tcolorbox}[DefFrame]
    \begin{restatable}{thm}{thmnra}\label{thm:n-ra}
        For the predicate $\mathsf{p} := \bigwedge_{i \in \mathcal{I}} (\mathsf{\textcolor[rgb]{0.8,0.4,0.4}{q}}_i\, \mathsf{U}\, \mathsf{\textcolor[rgb]{0.365,0.494,0.713}{r}}_i) \land \mathsf{G}\mathsf{\textcolor[rgb]{0.8,0.4,0.4}{q}}$ with $N=|\mathcal{I}|$, the corresponding Value satisfies
        {\setlength{\belowdisplayskip}{0pt}%
        \setlength{\belowdisplayshortskip}{0pt}%
        \begin{align*}
            &V^*\left[\bigwedge\nolimits_i (\mathsf{\textcolor[rgb]{0.8,0.4,0.4}{q}}_i\, \mathsf{U}\, \mathsf{\textcolor[rgb]{0.365,0.494,0.713}{r}}_i)\land \mathsf{G}\mathsf{\textcolor[rgb]{0.8,0.4,0.4}{q}}\right]\!(x) \\ 
            & \qquad = V^* \Big[
            \textcolor[HTML]{e070a7}{\underbrace{\textcolor{black}{
                \left(\bigwedge\nolimits_i \mathsf{\textcolor[rgb]{0.8,0.4,0.4}{q}}_i \land \mathsf{\textcolor[rgb]{0.8,0.4,0.4}{q}}\right)
            }}_{\tilde{\mathsf{q}}}}
            \,\mathsf{U}\, 
            \textcolor[HTML]{59bfb4}{\underbrace{\textcolor{black}{
                \left(\bigvee\nolimits_i \left(\mathsf{\textcolor[rgb]{0.365,0.494,0.713}{r}}_i \land \textcolor[HTML]{FFAB40}{\mathsf{v}}[{\mathsf{p}_{-i}}] \right) \right)
            }}_{\tilde{\mathsf{r}}}}
            \Big](x)
        \end{align*}}
        where $\mathsf{p}_{-i} := \bigwedge_{j \in \mathcal{I} \setminus \{i\}} \mathsf{\textcolor[rgb]{0.8,0.4,0.4}{q}}_j\, \mathsf{U}\, \mathsf{\textcolor[rgb]{0.365,0.494,0.713}{r}}_j \land \mathsf{G}\mathsf{\textcolor[rgb]{0.8,0.4,0.4}{q}}$.
    \end{restatable}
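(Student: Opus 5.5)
Our plan is to work first at the level of trajectories, rewriting the robustness of $\mathsf{p}$ into an ``Until'' form, and only then to pass through $\max_\alpha$ using the \textcolor[HTML]{A64D79}{\texttt{VDR}}. The guiding idea is that every satisfying trajectory must complete \emph{some} goal $i$ first. At that first completion time $\tau$ the remaining obligation is exactly $\mathsf{p}_{-i}$ evaluated on the suffix. Before $\tau$, all of the $\mathsf{q}_j$ and $\mathsf{q}$ must hold.

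\textbf{Step 1 (trajectory identity).} For a fixed trajectory $\xi$, we prove
\[
\rho[\mathsf{p}](\xi) = \max_{\tau}\min\Big\{ \max_i \min\{ r_i(\xi(\tau)),\, \rho[\mathsf{p}_{-i}](\xi,\tau)\},\ \min_{\kappa\le\tau} \min\{\textstyle\min_j q_j(\xi(\kappa)),\, q(\xi(\kappa))\}\Big\}.
\]
Here $\rho[\mathsf{p}_{-i}](\xi,\tau)$ is the robustness of the residual specification on the suffix starting at $\tau$.

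For ``$\le$'', we expand $\rho[\mathsf{p}]=\min_i\max_{\tau_i}\min\{r_i(\xi(\tau_i)),\min_{\kappa\le\tau_i}q_i\}$ together with $\min_\kappa q$. We then choose the maximizing $\tau_i$ for each $i$ and let $i^*$ be the index with the smallest $\tau_{i^*}$. For $j\ne i^*$ we have $\tau_j\ge\tau_{i^*}$, so the $j$-th until evaluated from $\tau_{i^*}$ attains at least the same value. The $\mathsf{G}q$ term restricted to the suffix is at least the full one. The prefix minima of the $q_j$ up to $\tau_{i^*}$ are dominated by the prefix minima up to $\tau_j$.

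For ``$\ge$'', given $\tau$ and $i$, we use $\tau_i=\tau$ for goal $i$. For $j\ne i$, we concatenate the prefix condition on $[0,\tau]$ with the suffix until, using the convention $\kappa\in[t,\tau]$ from the footnote. For $\mathsf{G}q$, we concatenate the prefix and the suffix. The technical care lies in the min/max bookkeeping over $\kappa$ across the split point.

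\textbf{Step 2 (pushing $\max_\alpha$ through).} Step 1 shows $\rho[\mathsf{p}]=\rho[\tilde{\mathsf{q}}\,\mathsf{U}\,\bigvee_i(\mathsf{r}_i\land \mathsf{p}_{-i}^{\text{shift}})]$, where $\mathsf{p}_{-i}$ is evaluated from the arrival time. We will apply \texttt{right-U} (\textcolor[HTML]{A64D79}{\texttt{VDR}}-\ref{vdr:ru}) to replace the right-hand predicate by its value AP. We then apply \texttt{OR} (\textcolor[HTML]{A64D79}{\texttt{VDR}}-\ref{vdr:or}) to split the disjunction into the values of $\mathsf{r}_i\land\mathsf{p}_{-i}$. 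Finally, we apply \texttt{weak-AND} (\textcolor[HTML]{A64D79}{\texttt{VDR}}-\ref{vdr:wand}) with $\mathsf{r}_i$ playing the role of the action-independent predicate, since $r_i$ evaluated at time $0$ depends only on $x$. This yields $V^*[\mathsf{r}_i\land\mathsf{p}_{-i}](y)=\min\{r_i(y),V^*[\mathsf{p}_{-i}](y)\}$, which is exactly $\rho[\mathsf{r}_i\land\mathsf{v}[\mathsf{p}_{-i}]]$ at $y$. Reassembling these pieces gives the right-hand side of the statement.

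\textbf{Main obstacle.} We expect the hardest step to be justifying the nested use of the rules, in particular the claim that the maximizing action sequence may be chosen separately before and after $\tau$. This is the content of \texttt{right-U}, which can be invoked directly since the paper assumes it. The real risk lies in Step 1's ``$\le$'' direction: the first-arrival argument must handle ties and the case where the maxima over $\tau_i$ are not attained. For the latter we would argue with $\epsilon$-optimal times, or rely on the finiteness of $\mathcal{X}$ and boundedness to ensure suprema are attained along the fixed trajectory.
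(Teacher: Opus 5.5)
Your proposal is correct, and its second half matches the paper's proof. Once $\mathsf{p}$ is written as $\tilde{\mathsf{q}}\,\mathsf{U}\,\bigvee_i(\mathsf{r}_i\land\mathsf{p}_{-i})$, the paper applies \texttt{right-U}, \texttt{OR}, and \texttt{weak-AND} in the same order you do, with $\mathsf{r}_i$ as the action-independent conjunct.

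The difference is in how the rewrite is obtained. The paper cites Lem.~\ref{lem:symbolic:n_U_and}, a \emph{Boolean} equivalence proved by induction on $n$ from the two-goal case. It attaches $\mathsf{G}\mathsf{q}$ through Cor.~\ref{lem:symbolic:U_and_G} by viewing it as $\mathsf{q}\,\mathsf{U}\,\mathsf{G}\mathsf{q}$. That route is modular, checkable with Spot, and reused for Thm.~\ref{thm:master}. On its own, however, it only equates satisfaction, so using it inside $V^*$ tacitly requires equal robustness scores. That can be recovered by applying the equivalence to every shifted family $r\mapsto r-\lambda$. This is valid here because the formula is negation-free and, for $|\mathcal{X}|<\infty$, all extrema along a fixed trajectory are attained.

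Your Step~1 instead proves the quantitative identity directly, for all $N$ at once, through the earliest-completing index $i^*$. It builds in the inclusive-arrival convention and handles the $\mathsf{G}\mathsf{q}$ conjunct inline. That is exactly the ingredient the value-level argument needs. Your bookkeeping in both directions is sound, including the split at $\tau$ for $j\neq i$ and attainment of the maxima over $\tau_i$ via finiteness of $\mathcal{X}$.
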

\end{tcolorbox}
\begin{proof}
    We first transform $\mathsf{p}$ into an amenable form (Lem.~\ref{lem:symbolic:n_U_and}),
    {\setlength{\belowdisplayskip}{2pt}%
    \setlength{\belowdisplayshortskip}{0pt}%
    \begin{equation*}
        \mathsf{p} := 
        \bigwedge\nolimits_{i} (\mathsf{\textcolor[rgb]{0.8,0.4,0.4}{q}}_i\, \mathsf{U}\, \mathsf{\textcolor[rgb]{0.365,0.494,0.713}{r}}_i) \land \mathsf{G}\mathsf{\textcolor[rgb]{0.8,0.4,0.4}{q}}
        \equiv 
        \textcolor[HTML]{e070a7}{\underbrace{\textcolor{black}{
                    \left(\bigwedge\nolimits_i \mathsf{\textcolor[rgb]{0.8,0.4,0.4}{q}}_i \land \mathsf{\textcolor[rgb]{0.8,0.4,0.4}{q}}\right)
                }}_{\tilde{\mathsf{q}}}} 
        \,\mathsf{U}\, \Bigl( \bigvee\nolimits_i \bigl( \mathsf{\textcolor[rgb]{0.365,0.494,0.713}{r}}_i \land \mathsf{p}_{-i} \bigr) \Bigr).
    \end{equation*}}
    We may then simply apply \textcolor[HTML]{A64D79}{{\texttt{VDR}}}-\ref{vdr:ru}, \textcolor[HTML]{A64D79}{{\texttt{VDR}}}-\ref{vdr:or}, and \textcolor[HTML]{A64D79}{{\texttt{VDR}}}-\ref{vdr:wand},
    \begin{align*}
    V^* [\mathsf{p}](x)
    &= V^* \Big[
            \textcolor[HTML]{e070a7}{\tilde{\mathsf{q}}}
            \,\mathsf{U}\, 
                \textcolor[HTML]{FFAB40}{\mathsf{v}}[\vee_i \left(\mathsf{\textcolor[rgb]{0.365,0.494,0.713}{r}}_i \land {\mathsf{p}_{-i}}\right)]
            \Big](x), \\
    &= V^* \Big[
            \textcolor[HTML]{e070a7}{\tilde{\mathsf{q}}}
            \,\mathsf{U} 
                \left(\vee_i \, \textcolor[HTML]{FFAB40}{\mathsf{v}}[\mathsf{\textcolor[rgb]{0.365,0.494,0.713}{r}}_i \land {\mathsf{p}_{-i}}] \right)
            \Big](x), \\
    &= V^* \Big[
            \textcolor[HTML]{e070a7}{\tilde{\mathsf{q}}}
            \,\mathsf{U} 
                \left(\vee_i \left(\mathsf{\textcolor[rgb]{0.365,0.494,0.713}{r}}_i \land \textcolor[HTML]{FFAB40}{\mathsf{v}}[{\mathsf{p}_{-i}}] \right) \right)
            \Big](x).
            \qedhere
    \end{align*}
\end{proof}
This result gives an equivalence between the $N$-$\mathcal{R}\mathcal{A}$ Value and the Value function of a single $\mathcal{R}\mathcal{A}$ Value, which has abstract reach and avoid predicates $\textcolor[HTML]{59bfb4}{\tilde{\mathsf{r}}}$ \& $\textcolor[HTML]{e070a7}{\tilde{\mathsf{q}}}$ in the sense that they no longer represent physical goals or obstacles. Instead, the new reach predicate $\textcolor[HTML]{59bfb4}{\tilde{\mathsf{r}}}$ is defined by the \texttt{OR} of $N$ \texttt{AND}'s that each correspond to reaching one of the predicates $\textcolor[rgb]{0.365,0.494,0.713}{\mathsf{r}}_i$ and being able to satisfy the remaining logic $\mathsf{p}_{-i}$, i.e. having $V^*\left[{\mathsf{p}_{-i}}\right] \ge 0$. 
The new avoid predicate $\textcolor[HTML]{e070a7}{\tilde{\mathsf{q}}}$ is defined by the \texttt{AND} of all $N$ avoid predicates and hence implies that we need to avoid all $\textcolor[rgb]{0.8,0.4,0.4}{\mathsf{q}}_i$. Intuitively, \Cref{thm:n-ra} breaks down the optimal Value into the goal of safely reaching any of the predicates while being able to satisfy the rest of the predicate of $N-1$ \texttt{Until}'s, denoted $\mathsf{p}_{-i}$, where $\textcolor[rgb]{0.365,0.494,0.713}{\mathsf{r}}_i$ \& $\textcolor[rgb]{0.8,0.4,0.4}{\mathsf{q}}_i$ have been ``popped off'' the original predicate. 

\begin{figure*}[t]
\begin{center}
    \includegraphics[
    width=\textwidth, 
    trim=0pt -8pt 0pt 0pt
    ]{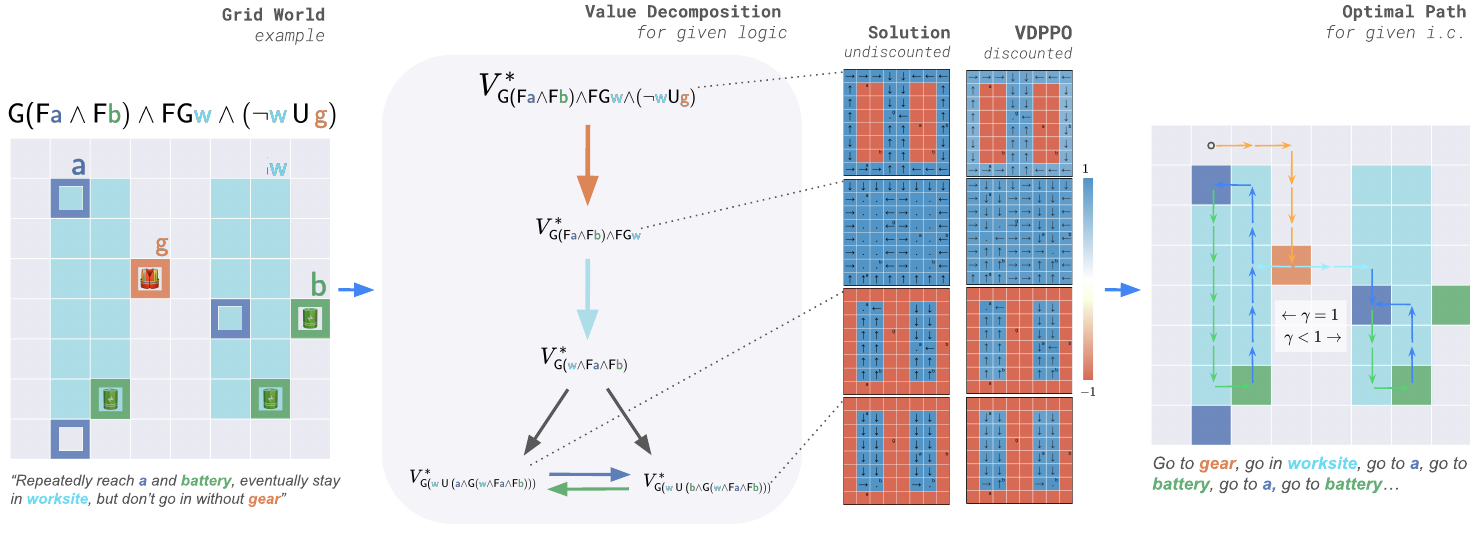}
    \captionof{figure}{\textbf{E.g. Recurrent \texttt{Reach}-\texttt{Avoid} Value Decomposition.} We illustrate the recursive decomposition result (\Cref{thm:n-ra-loop}), with a \texttt{GridWorld} example (left) for a given specification. The plots here are analogous to those of Fig.~\ref{fig:nRAdemofig}, with the DVG (center left), decomposed Values (center right), and optimal path (right). Note, the optimal path for the discounted case differs due to the subtle effect of discounting the Value associated with a $\mathsf{G}$ composition, which selects for shorter loops (Appendix~\ref{apx:nral-disct}).
    }
    \label{fig:alwaysdemofig}
    \vspace{-2em}
\end{center}
\end{figure*}

Notably, \Cref{thm:n-ra} can be applied recursively, and, therefore, we may reapply it iteratively to break the $N$-$\mathcal{R}\mathcal{A}$ Value into $N$ decomposable sub-Values and so forth, giving $2^N -1 $ Values in total. Crucially, as each of these Values is equivalent to a special \texttt{Until} Value, \textit{they may each be solved with the discounted $\mathcal{R}\mathcal{A}$-BE} with their respective rewards. We demonstrate this result in \Cref{fig:nRAdemofig} with a simple $\mathtt{GridWorld}$ problem, where the true solution may be solved via dynamic programming.

\subsection{Recurrent \texttt{Reach}-\texttt{Avoid} Value Decompositions} \label{sec:alwayscomp}

In this section, we consider the family of recurrence relation operations corresponding to the composition of $\mathsf{G}$ with $\mathsf{U}$ (for which $\mathsf{GF}$ is a special case). To always-eventually satisfy a predicate implies that a trajectory must continue to satisfy it indefinitely. These compositions are particularly important as they encompass the liveness (i.e., goal-satisfaction) property, where certain states must be revisited or regenerated in some sense. Moreover, this operation is significantly less strict than the Reach-Stay specification $\mathsf{F}\mathsf{G}$ in \eqref{eq:reachstay}, and thus more desirable, when continuous goal achievement is impossible or unknown.

The temporal coupling of the outer $\mathsf{G}$ with the inner TL makes the Value of these compositions behave differently. Note, by definition, the robustness function alone satisfies
\begin{equation}
    \rho[\mathsf{G}\mathsf{F}\mathsf{\textcolor[rgb]{0.365,0.494,0.713}{r}}](\xi_x) = \min_{t \in \mathbb{N}} \max_{t' \geq t} \rho[\mathsf{\textcolor[rgb]{0.365,0.494,0.713}{r}}](\xi_x, t') = \limsup_{s \to \infty} \rho[\mathsf{\textcolor[rgb]{0.365,0.494,0.713}{r}}](\xi_x, s),
\end{equation}
and, hence, the Value is constant along trajectories, regardless of when they begin.
Yet, this coupling also gives rise to an implicit characterization amenable to the \textcolor[HTML]{A64D79}{{\texttt{VDR}}}: e.g. $\mathsf{p} := \mathsf{G}(\mathsf{\textcolor[rgb]{0.8,0.4,0.4}{q}}\, \mathsf{U}\, \mathsf{\textcolor[rgb]{0.365,0.494,0.713}{r}})$ satisfies $\mathsf{p} \equiv \mathsf{\textcolor[rgb]{0.8,0.4,0.4}{q}}\, \mathsf{U}\, (\mathsf{\textcolor[rgb]{0.365,0.494,0.713}{r}} \land \mathsf{X}\mathsf{p})$, 
thus, we may apply \textcolor[HTML]{A64D79}{{\texttt{VDR}}}-\ref{vdr:ru},\ref{vdr:wand},\ref{vdr:next} to have the following.





\begin{tcolorbox}[DefFrame]
    \begin{restatable}{lemma}{thmraloop} \label{thm:raloop}
        For the predicate $\mathsf{p} := \mathsf{G} \left( \mathsf{\textcolor[rgb]{0.8,0.4,0.4}{q}}\, \mathsf{U}\, \mathsf{\textcolor[rgb]{0.365,0.494,0.713}{r}} \right)$ the corresponding Value satisfies
        \begin{equation*}
            V^*\left[\mathsf{p}\right](x) = V^* \left[\mathsf{\textcolor[rgb]{0.8,0.4,0.4}{q}} \,\mathsf{U}\, (\mathsf{\textcolor[rgb]{0.365,0.494,0.713}{r}} \land \mathsf{X}\textcolor[HTML]{FFAB40}{\mathsf{v}}[\mathsf{p}\right])](x).
        \end{equation*}
    \end{restatable}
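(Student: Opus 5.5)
The proof has two stages. First we establish a purely robustness-level identity, $\mathsf{G}(\mathsf{q}\,\mathsf{U}\,\mathsf{r}) \equiv \mathsf{q}\,\mathsf{U}\,(\mathsf{r} \land \mathsf{X}\mathsf{G}(\mathsf{q}\,\mathsf{U}\,\mathsf{r}))$. Then we push $V^*$ through it with the \textcolor[HTML]{A64D79}{\texttt{VDR}} (\Cref{lem:equiv}).

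\textbf{Step 1: the identity.} Write $g(t) := \rho[\mathsf{q}\,\mathsf{U}\,\mathsf{r}](\xi,t) = \max_{\tau\ge t}\min\{r(\xi(\tau)), \min_{\kappa\in[t,\tau]} q(\xi(\kappa))\}$. Then $\rho[\mathsf{p}](\xi,t)=\min_{s\ge t} g(s)$, and the right-hand side is
\[
h(t)=\max_{\tau\ge t}\min\Bigl\{r(\xi(\tau)),\ \min_{\kappa\in[t,\tau]}q(\xi(\kappa)),\ \min_{s\ge \tau+1} g(s)\Bigr\}.
\]
We prove $h(t)=\min_{s\ge t}g(s)$ by showing both inequalities.

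For $h(t)\le \min_{s\ge t}g(s)$, fix $\tau$ and $s\ge t$. If $s\ge\tau+1$, the term $\min_{s'\ge\tau+1}g(s')$ already bounds the bracket by $g(s)$. If $t\le s\le\tau$, the same $\tau$ serves as a witness inside $g(s)$, and $\min_{[s,\tau]}q\ge\min_{[t,\tau]}q$, so the bracket is again at most $g(s)$.

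For $h(t)\ge \min_{s\ge t}g(s)=:m$, let $\tau^*$ be a maximizing witness for $g(t)$, assuming for now that the max is attained. Then $r(\xi(\tau^*))\ge g(t)\ge m$, $\min_{[t,\tau^*]}q\ge m$, and $\min_{s\ge\tau^*+1}g(s)\ge m$. Hence $h(t)\ge m$.

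\textbf{Step 2: decomposition.} Apply \texttt{right-U} (\textcolor[HTML]{A64D79}{\texttt{VDR}}-\ref{vdr:ru}) with $\predB{p'}=\mathsf{r}\land\mathsf{X}\mathsf{p}$, which gives $V^*[\mathsf{q}\,\mathsf{U}\,\predV{v}[\mathsf{r}\land\mathsf{X}\mathsf{p}]]$. Next, $\mathsf{r}$ is an AP evaluated at time $0$, so its robustness is independent of the action sequence. Its Value AP is therefore $\mathsf{r}$ itself, and \texttt{weak-AND} (\textcolor[HTML]{A64D79}{\texttt{VDR}}-\ref{vdr:wand}) gives $V^*[\mathsf{r}\land\mathsf{X}\mathsf{p}] = V^*[\mathsf{r}\land \predV{v}[\mathsf{X}\mathsf{p}]]$. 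Then \texttt{NEXT} (\textcolor[HTML]{A64D79}{\texttt{VDR}}-\ref{vdr:next}) identifies $V^*[\mathsf{X}\mathsf{p}]$ with $V^*[\mathsf{X}\predV{v}[\mathsf{p}]]$, which is $\max_a V^*[\mathsf{p}](f(x,a))$.

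Finally, we reassemble, applying \texttt{right-U} in reverse so that the target $\predV{v}[\mathsf{r}\land\mathsf{X}\predV{v}[\mathsf{p}]]$ collapses back to $\mathsf{r}\land\mathsf{X}\predV{v}[\mathsf{p}]$. This uses the idempotence $\predV{v}[\predV{v}[\cdot]]$ behaviour implicit in the \textcolor[HTML]{A64D79}{\texttt{VDR}}, and yields the claimed $V^*[\mathsf{q}\,\mathsf{U}\,(\mathsf{r}\land\mathsf{X}\predV{v}[\mathsf{p}])]$.

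\textbf{Main obstacle.} The delicate point is Step 1's lower bound, since it requires attainment of the max in $g(t)$ over an infinite horizon. If the supremum is not attained, we use an $\varepsilon$-argument: pick $\tau_\varepsilon$ with value within $\varepsilon$ of $g(t)$, conclude $h(t)\ge m-\varepsilon$, and let $\varepsilon\to 0$. Finiteness of $\mathcal{X}$ then ensures the values are in a finite set, so the sup is attained anyway.

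A second subtlety is composing \texttt{weak-AND} with \texttt{NEXT}. Here we must check that conjoining with $\mathsf{r}$ at time $0$ does not interact with the choice of the tail actions. It does not, because $\rho[\mathsf{r}]$ depends only on $\xi(0)=x$.
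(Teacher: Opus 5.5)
Your proposal is correct and follows the paper's route: first the rewriting $\mathsf{G}(\mathsf{q}\,\mathsf{U}\,\mathsf{r}) \equiv \mathsf{q}\,\mathsf{U}\,(\mathsf{r}\land\mathsf{X}\mathsf{G}(\mathsf{q}\,\mathsf{U}\,\mathsf{r}))$ (the paper's Lemma~\ref{lem:symbolic:G_of_U}), then \texttt{right-U}, \texttt{weak-AND} and \texttt{NEXT}. Where you differ, it is to your credit: the paper proves the rewriting only as a Boolean equivalence by double entailment, whereas you prove it as an equality of robustness scores along every trajectory, which is what the Value equality actually requires; you also make explicit the reverse \texttt{right-U} step that the paper's final line uses silently, though the fact it needs is $\mathsf{v}[\mathsf{r}\land\mathsf{X}\mathsf{p}]=\mathsf{v}[\mathsf{r}\land\mathsf{X}\mathsf{v}[\mathsf{p}]]$ pointwise, which follows from \texttt{NEXT} plus \texttt{weak-AND} rather than from any idempotence.
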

\end{tcolorbox}
This result demonstrates that the Value function associated with the predicate $\mathsf{G} \left( \mathsf{\textcolor[rgb]{0.8,0.4,0.4}{q}}\, \mathsf{U}\, \mathsf{\textcolor[rgb]{0.365,0.494,0.713}{r}} \right)$ can be characterized implicitly. For intuition, one may consider this Value as a special $\mathcal{R}\mathcal{A}$ Value that aims to reach an intersection of the target predicate $\mathsf{\textcolor[rgb]{0.365,0.494,0.713}{r}}$ and its own satisfiable set (denoted by $\textcolor[HTML]{FFAB40}{\mathsf{v}}[{\mathsf{p}}]$) at the next step, and hence, maintain the ability to satisfy it again in the future. More generally, we may expand this result to the case involving a composition of $\mathsf{G}$ with $N$-\texttt{AND}-\texttt{Until}'s by considering a loop of predicates $\mathsf{p}_i^\ell$.
\begin{defi} \label{def:predloop}
    Given a predicate, $\mathsf{p} := \mathsf{G} ( \bigwedge_{i \in \mathcal{I}} (\mathsf{\textcolor[rgb]{0.8,0.4,0.4}{q}}_i\, \mathsf{U}\, \mathsf{\textcolor[rgb]{0.365,0.494,0.713}{r}}_i))$, let a loop of $N:=|\mathcal{I}|$ predicates $\mathsf{p}^\ell_{i}$ be 
    defined\footnote{This depends on the inclusion of the end-point in $\mathsf{U}$ (Def.~\ref{def:tlpayoff}) following HJR convention, which implies $\mathsf{p} \equiv \mathsf{G} (\bigwedge_{i} \mathsf{F}\mathsf{\textcolor[rgb]{0.365,0.494,0.713}{r}}_i \land \bigwedge\nolimits_i \mathsf{\textcolor[rgb]{0.8,0.4,0.4}{q}}_i)$. Otherwise, let $\mathsf{p}^\ell_{i} := (\bigwedge\nolimits_j \mathsf{\textcolor[rgb]{0.8,0.4,0.4}{q}}_j \land \tilde{\mathsf{w}}_i) \, \mathsf{U} \, (\mathsf{\textcolor[rgb]{0.365,0.494,0.713}{r}}_i \land \tilde{\mathsf{w}}_i \land \mathsf{X}\mathsf{p}_{i+1}^\ell)$ where $\tilde{\mathsf{w}}_i:= \wedge_{j \ne i} 
    (\mathsf{\textcolor[rgb]{0.8,0.4,0.4}{q}}_j \land \mathsf{\textcolor[rgb]{0.365,0.494,0.713}{r}}_j)$.} s.t. $\mathsf{p}_{N+1}^\ell:=\mathsf{p}_1^\ell$ and 
    \begin{equation}
        \mathsf{p}^\ell_{i} := \left(\bigwedge\nolimits_{j\in \mathcal{I}} \mathsf{\textcolor[rgb]{0.8,0.4,0.4}{q}}_j \right) \, \mathsf{U} \, \Big(\mathsf{\textcolor[rgb]{0.365,0.494,0.713}{r}}_i \land \mathsf{X}\mathsf{p}_{i+1}^\ell \Big).
    \end{equation}
\end{defi}
\begin{tcolorbox}[DefFrame]
    \begin{restatable}{thm}{thmnraloop} \label{thm:n-ra-loop}
        For predicate $\mathsf{p} := \mathsf{G}\left(\bigwedge_{i \in \mathcal{I}} (\mathsf{\textcolor[rgb]{0.8,0.4,0.4}{q}}_i\, \mathsf{U}\, \mathsf{\textcolor[rgb]{0.365,0.494,0.713}{r}}_i)\right)$, the corresponding Value satisfies $\forall i,$
        {\setlength{\belowdisplayskip}{0pt}%
        \setlength{\belowdisplayshortskip}{0pt}%
        \begin{align*}
            &V^*\left[\mathsf{G}\left(\bigwedge\nolimits_{i} (\mathsf{\textcolor[rgb]{0.8,0.4,0.4}{q}}_i\, \mathsf{U}\, \mathsf{\textcolor[rgb]{0.365,0.494,0.713}{r}}_i)\right)\right]\!(x) \\ 
            & = V^*\left[\mathsf{p}_i^\ell\right]\!(x) = V^* \Big[
            \textcolor[HTML]{e070a7}{\underbrace{\textcolor{black}{
                \left(\bigwedge\nolimits_{j\in\mathcal{I}} \mathsf{\textcolor[rgb]{0.8,0.4,0.4}{q}}_{j} \right)
            }}_{\tilde{\mathsf{q}}}}
            \,\mathsf{U}\, 
            \textcolor[HTML]{59bfb4}{\underbrace{\textcolor{black}{
                \Big(\mathsf{\textcolor[rgb]{0.365,0.494,0.713}{r}}_i \land \mathsf{X}\textcolor[HTML]{FFAB40}{\mathsf{v}}[{\mathsf{p}^{\ell}_{i+1}}] \Big)
            }}_{\tilde{\mathsf{r}}}}
            \Big](x).
        \end{align*}}
        where $\mathsf{p}^{\ell}_{i}$ and $\mathsf{p}^{\ell}_{i+1}$ are defined in Def.~\ref{def:predloop}.
    \end{restatable}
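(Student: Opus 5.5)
Following the template of \Cref{thm:n-ra} and Lem.~\ref{thm:raloop}, the proof splits into two parts: a purely logical equivalence $\mathsf{p} \equiv \mathsf{p}_i^\ell$ at the level of robustness scores, and then a mechanical application of the \textcolor[HTML]{A64D79}{\texttt{VDR}} to push $V^*$ through the \texttt{Until}, \texttt{AND} and \texttt{NEXT}.

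\textbf{Step 1: robustness-level identity.} First use the HJR convention of Def.~\ref{def:tlpayoff} (end-point included in $\mathsf{U}$). Then $\mathsf{G}(\mathsf{q}_i\,\mathsf{U}\,\mathsf{r}_i) \equiv \mathsf{G}\mathsf{F}\mathsf{r}_i \land \mathsf{G}\mathsf{q}_i$. One direction holds because at every time some future arrival at $\mathsf{r}_i$ exists. The converse holds because $\mathsf{q}_i$ holds everywhere and $\mathsf{r}_i$ recurs. This gives $\rho[\mathsf{p}](\xi) = \min\{\min_i \limsup_s r_i(\xi(s)),\ \inf_s \min_j q_j(\xi(s))\}$, which is the identity recorded in the footnote of Def.~\ref{def:predloop}.

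Next show that the cyclic definition $\mathsf{p}_i^\ell$ has the same robustness. Unroll the recursion: $\rho[\mathsf{p}_i^\ell]$ is a sup over increasing arrival times $\tau_i<\tau_{i+1}<\dots$ (cycling through indices mod $N$) of the min over all visits of $r_{k}(\xi(\tau_k))$, together with $\tilde q$ over all times in between. Since the recursion is circular, $\mathsf{p}_i^\ell$ must be interpreted as a greatest fixed point, i.e.\ the infinite unrolling. I would state this interpretation explicitly.

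To compare the two expressions:
\begin{itemize}
\item \textbf{$\le$ direction.} Any infinite cyclic visiting schedule visits each $\mathsf{r}_k$ infinitely often. Hence its min is at most each $\limsup$ of $r_k$, and at most $\inf \tilde q$.
\item \textbf{$\ge$ direction.} Fix $\epsilon>0$. Since $\limsup_s r_k > \rho[\mathsf{p}]-\epsilon$, for every $k$ and every time there is a later time where $r_k$ exceeds that level. A greedy cyclic schedule starting with index $i$ then achieves at least $\rho[\mathsf{p}]-\epsilon$.
\end{itemize}
Letting $\epsilon\to 0$ gives $\rho[\mathsf{p}](\xi)=\rho[\mathsf{p}_i^\ell](\xi)$ for every trajectory and every $i$. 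The first equality of the theorem follows immediately, since the two objectives agree pointwise in $\alpha$.

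\textbf{Step 2: apply the VDR.} Write $\mathsf{p}_i^\ell = \tilde{\mathsf q}\,\mathsf U\,(\mathsf r_i \land \mathsf X \mathsf p_{i+1}^\ell)$.
\begin{enumerate}
\item Apply \textcolor[HTML]{A64D79}{\texttt{VDR}}-\ref{vdr:ru} to replace the right argument by $\predV{v}[\mathsf r_i\land \mathsf X\mathsf p_{i+1}^\ell]$.
\item Apply \textcolor[HTML]{A64D79}{\texttt{VDR}}-\ref{vdr:wand}. Here $\rho[\mathsf r_i](\xi_x^\alpha)=r_i(x)$ does not depend on $\alpha$, so $\predV{v}[\mathsf r_i\land \mathsf X\mathsf p_{i+1}^\ell] $ corresponds to $\mathsf r_i \land \predV{v}[\mathsf X\mathsf p_{i+1}^\ell]$ as an AP. More precisely, the Values of these two predicates agree at every state, which is what \textcolor[HTML]{A64D79}{\texttt{VDR}}-\ref{vdr:ru} needs when used in reverse.
\item Apply \textcolor[HTML]{A64D79}{\texttt{VDR}}-\ref{vdr:next} to obtain $\predV{v}[\mathsf X\mathsf p_{i+1}^\ell]\equiv \mathsf X\predV{v}[\mathsf p_{i+1}^\ell]$.
\end{enumerate}
Steps 2 and 3 use the equality of Values of the inner predicates as functions of the state, so they can be substituted as APs. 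Re-applying \textcolor[HTML]{A64D79}{\texttt{VDR}}-\ref{vdr:ru} in reverse then yields the claimed form. This is exactly the chain used in Lem.~\ref{thm:raloop}, which is the case $N=1$.

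\textbf{Main obstacle.} The main obstacle is Step 1: making the self-referential $\mathsf{p}_i^\ell$ well-defined and proving the $\ge$ direction with sup/limsup not necessarily attained. This requires the $\epsilon$-argument together with the fact that optimizing over schedules commutes with the limsup characterization. I would handle it by working with $\epsilon$-suboptimal schedules and taking the limit at the end.
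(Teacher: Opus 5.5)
Your proof is correct and has the same skeleton as the paper's: rewrite $\mathsf{p}$ as the loop predicate $\mathsf{p}_i^\ell$, then push $V^*$ through the \texttt{Until}, \texttt{AND} and \texttt{NEXT} with \texttt{VDR}-1, \texttt{VDR}-3 and \texttt{VDR}-4 (right-U, weak-AND, NEXT). The genuine difference is how you justify the rewrite.

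The paper simply cites Lem.~\ref{lem:symbolic:G_of_double_U}. That lemma is a satisfaction-level ($\models$) equivalence, proved only for $N=2$. It is also stated in a slightly different form: its avoid and reach sets are built from $\mathsf{q}_j\lor\mathsf{r}_j$, its proof uses the strict-\texttt{Until} convention, and its reach set contains $\mathsf{p}$ rather than $\mathsf{X}\mathsf{p}^\ell_{i+1}$. The paper then tacitly treats this Boolean equivalence as giving equality of Values.

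You instead prove $\rho[\mathsf{p}](\xi)=\rho[\mathsf{p}_i^\ell](\xi)$ for every trajectory. Your argument covers all $N$, matches the exact form of Def.~\ref{def:predloop}, and uses the $\limsup$ characterization together with an $\epsilon$-greedy cyclic schedule. You also pin down the circular definition by infinite unrolling. That step is genuinely needed: for constant $r_i$ and $\tilde{q}$, every constant not exceeding the true score solves the one-step recursion.

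What your route buys is equality of $\rho$ itself, which is what Value equality requires, not merely agreement of its sign. Getting full equality from a Boolean identity needs an extra threshold/attainment argument, e.g.\ finiteness of $\mathcal{X}$. The paper's route is shorter and reuses a TL identity that a tool like Spot can check.

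Two small points of precision. First, derive the $\limsup$ formula for $\rho[\mathsf{p}]$ directly. Use the bounds $\inf_s q(s)\le\min_{\kappa\in[t,\tau]}q(\kappa)\le q(t)$ inside $\inf_t\sup_{\tau\ge t}$. Do not obtain it from the Boolean identity $\mathsf{G}(\mathsf{q}_i\,\mathsf{U}\,\mathsf{r}_i)\equiv\mathsf{GF}\mathsf{r}_i\land\mathsf{G}\mathsf{q}_i$, which alone does not fix robustness values.

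Second, in Step~2 the \texttt{NEXT} substitution is not an identity of APs. The quantity $\mathsf{v}[\mathsf{X}\mathsf{p}^\ell_{i+1}]=\max_a V^*[\mathsf{p}^\ell_{i+1}](f(\cdot,a))$ is a state function, whereas $\mathsf{X}\mathsf{v}[\mathsf{p}^\ell_{i+1}]$ depends on the trajectory. What does hold is $\mathsf{v}[\mathsf{r}_i\land\mathsf{X}\mathsf{p}^\ell_{i+1}]=\mathsf{v}[\mathsf{r}_i\land\mathsf{X}\mathsf{v}[\mathsf{p}^\ell_{i+1}]]$ as functions of the state. Your reverse application of \texttt{VDR}-1 then closes the argument correctly, and it is more careful than the paper's one-line chain.
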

\end{tcolorbox}
\begin{proof}
    Here, the amenable logical form is the loop of predicates $\mathsf{p}_i^\ell$: $\forall i$, we have $\mathsf{p}=\mathsf{p}_i^\ell$ (Lem.~\ref{lem:symbolic:G_of_double_U}).
    We may then apply \textcolor[HTML]{A64D79}{{\texttt{VDR}}}-\ref{vdr:ru}, \textcolor[HTML]{A64D79}{{\texttt{VDR}}}-\ref{vdr:wand}, and \textcolor[HTML]{A64D79}{{\texttt{VDR}}}-\ref{vdr:next},
    \begin{align*}
    V^* [\mathsf{p}_i^\ell](x)
    &= V^* \Big[
            \textcolor[HTML]{e070a7}{\tilde{\mathsf{q}}}
            \,\mathsf{U}\, 
                \textcolor[HTML]{FFAB40}{\mathsf{v}}[\mathsf{\textcolor[rgb]{0.365,0.494,0.713}{r}}_i \land {\mathsf{X}\mathsf{p}_{i+1}^\ell}]
            \Big](x), \\
    &= V^* \Big[
            \textcolor[HTML]{e070a7}{\tilde{\mathsf{q}}}
            \,\mathsf{U} 
                \left(\mathsf{\textcolor[rgb]{0.365,0.494,0.713}{r}}_i \land \textcolor[HTML]{FFAB40}{\mathsf{v}}[{\mathsf{X}\mathsf{p}_{i+1}^\ell}] \right)
            \Big](x), \\
    &= V^* \Big[
            \textcolor[HTML]{e070a7}{\tilde{\mathsf{q}}}
            \,\mathsf{U} 
                \left(\mathsf{\textcolor[rgb]{0.365,0.494,0.713}{r}}_i \land \mathsf{X}\textcolor[HTML]{FFAB40}{\mathsf{v}}[{\mathsf{p}_{i+1}^\ell}] \right) 
            \Big](x).
            \qedhere
    \end{align*}
\end{proof}
\begin{figure*}[t]
\begin{center}
    \vspace{-1em}
    \centering
    \includegraphics[
    width=\linewidth,
    trim=0pt -6pt 0pt 0pt,
    ]{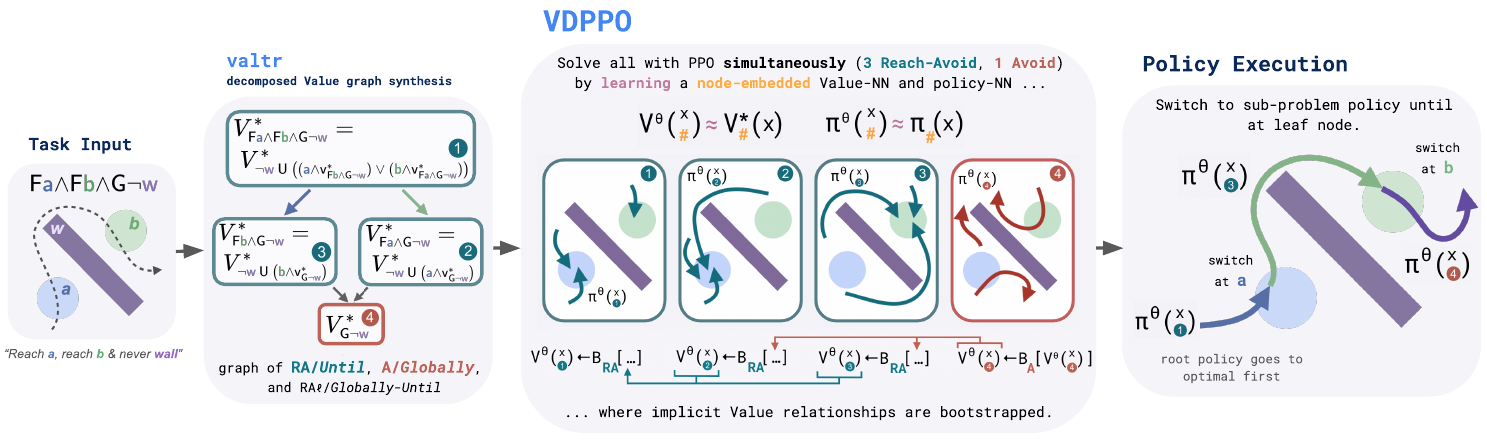}
    \captionof{figure}{
    \textbf{Graphical Depiction of \texttt{valtr} and \texttt{VDPPO}.} TL is inputted, e.g., reaching two goals and avoiding a wall: $\mathsf{F\textcolor[rgb]{0.302,0.447,0.690}{a}} \land \mathsf{F\textcolor[rgb]{0.333,0.6588,0.4078}{b}}  \land  \mathsf{G}\lnot \mathsf{\textcolor[rgb]{0.549,0.447,0.702}{w}}$ (far left). Next, \texttt{valtr} decomposes the logic based on the \textcolor[HTML]{A64D79}{{\texttt{VDR}}}, yielding a graph of $\mathcal{RA}$, $\mathcal{A}$, and $\mathcal{RA}_\ell$ nodes, e.g., $4$ Values with $1$ $\mathcal{A}$ leaf and $3$ $\mathcal{RA}$ parents (mid-left). Next, \texttt{VDPPO} learns an actor and critic for all Values simultaneously by embedding the graph. The Bellman update for each node bootstraps the corresponding children (mid-right). The policy is executed beginning at the root node and switches upon predicate satisfaction (far right).
    }
    \label{fig:algplot0}
    \vspace{-2em}
\end{center}
\end{figure*}
This result allows us to consider the problem of recurrently reach-avoiding $N$ tasks as a loop of $N$ coupled Values. 
Perhaps bizarrely, the implicit nature implies that the loop order is arbitrary (discounting in Def.~\ref{def:raloopbe}, however, gives higher Value to shorter loops, see Appendix \Cref{apx:nral-disct}).

Although these results appear similar to the previous decompositions, it is important to note that they are fundamentally different due to the implicit definition. These characterizations do \textit{not} offer a method to solve the Value, and in some state spaces, may be meaningless. 
To certify the well-posedness in certain scenarios (e.g. finite state spaces), we show in \Cref{apx:fxd-proof} that these Values are equivalent to the limit of finite recurrence, however, this is not generally a practical procedure.
Moreover, a straightforward application of the discounted $\mathcal{R}\mathcal{A}$-BE yields a BE that is \textit{not necessarily contractive}, due to the appearance of the Value in both $(1-\gamma)$ and $\gamma$ terms. To address these challenges, we propose a novel contractive Bellman Equation, which we call the  \texttt{Reach-Avoid-Loop} ($\mathcal{R}\mathcal{A}_\ell$) BE, which is guaranteed to yield the desired Value in the limit of discounting.
\begin{defi} \label{def:raloopbe}
    Given a set of $N$ values $\{V_i\}^N$, let the $\mathcal{R}\mathcal{A}_\ell$ Bellman operator be defined for each element as
    \begin{align*}
        \mathcal{B}^\gamma_{\mathcal{R}\mathcal{A}_\ell}& [V_i] := (1- \gamma) \min \{\textcolor[rgb]{0.365,0.494,0.713}{r}_i, \min\nolimits_j \textcolor[rgb]{0.8,0.4,0.4}{q}_j\} \: + \\
        & \qquad \gamma \min \Big\{ \max \Big\{  \min \left\{ \textcolor[rgb]{0.365,0.494,0.713}{r}_i, V^{\scriptscriptstyle +}_{i+1} \right\}, V^{\scriptscriptstyle +}_i \Big\}, \min\nolimits_j \textcolor[rgb]{0.8,0.4,0.4}{q}_j \Big\}.
    \end{align*} 
\end{defi}
\begin{tcolorbox}[DefFrame]
    \begin{restatable}{thm}{lemraloopbe} \label{lem:raloopbe}
        The $\mathcal{R}\mathcal{A}_\ell$-BE given in Def.~$\ref{def:raloopbe}$ is contractive such that for a set of $N$ values $V^\gamma[\mathsf{p}^\ell_i]$, $$V^\gamma[\mathsf{p}^\ell_i] \leftarrow \mathcal{B}^\gamma_{\mathcal{R}\mathcal{A}_\ell}[V^\gamma[\mathsf{p}^\ell_i]]$$ is a unique fixed point, satisfying $\forall i$, $$\lim_{\gamma \to 1} V^\gamma[\mathsf{p}^\ell_i] = V^*\left[\mathsf{G}\left(\bigwedge\nolimits_{i} (\mathsf{\textcolor[rgb]{0.8,0.4,0.4}{q}}_i\, \mathsf{U}\, \mathsf{\textcolor[rgb]{0.365,0.494,0.713}{r}}_i)\right)\right].$$
    \end{restatable}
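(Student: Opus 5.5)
The proof has two parts: contraction with a unique fixed point, and identification of the limit as $\gamma \to 1$.

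\textbf{Contraction.} We treat the $N$ values as one vector-valued function $\mathbf{V}=(V_1,\dots,V_N)$ and equip it with the norm $\|\mathbf{V}\|:=\max_i \sup_x |V_i(x)|$. In $\mathcal{B}^\gamma_{\mathcal{R}\mathcal{A}_\ell}[V_i]$, the first term $(1-\gamma)\min\{r_i,\min_j q_j\}$ does not depend on $\mathbf{V}$. This is exactly where the new operator differs from a naive $\mathcal{R}\mathcal{A}$-BE with target $\min\{r_i, V^+_{i+1}\}$, in which $\mathbf{V}$ would also appear in the $(1-\gamma)$ term. The second term is $\gamma$ times a composition of $\max$, $\min$, and the map $V\mapsto V^+=\max_a V(f(\cdot,a))$. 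Each of these is $1$-Lipschitz in the sup norm, since $|\max_a g-\max_a h|\le\sup|g-h|$ and likewise for $\min$. Hence
\[
\|\mathcal{B}^\gamma[\mathbf{V}]-\mathcal{B}^\gamma[\mathbf{W}]\|\le\gamma\max\{\|V_{i+1}-W_{i+1}\|,\|V_i-W_i\|\}\le\gamma\|\mathbf{V}-\mathbf{W}\|.
\]
Boundedness of $r$ and $q$ makes the space of bounded functions invariant under the operator. Banach's fixed-point theorem then gives a unique fixed point $\mathbf{V}^\gamma$, which we denote $V^\gamma[\mathsf{p}^\ell_i]$.

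\textbf{Limit.} Theorem~\ref{thm:n-ra-loop} already gives $V^*[\mathsf{p}^\ell_i]=V^*[\mathsf{p}]$ for every $i$, so it suffices to show $\lim_{\gamma\to1}V^\gamma_i=V^*[\mathsf{p}^\ell_i]$.

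1. Unroll the fixed point as an explicit discounted payoff. Following the \cite{fisac2019bridging,hsu2021safety} argument for the $\mathcal{R}\mathcal{A}$-BE, $V^\gamma_i(x)$ is the value of a game over action sequences together with a mode index that advances $i\to i+1$ whenever the "take $\min\{r_i,V^+_{i+1}\}$" branch is used. The payoff is a discounted min/max of $r$ along the trajectory, and $\min_j q_j$ is applied at every step.

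2. Show that as $\gamma\to1$ this payoff converges to the undiscounted robustness of $\mathsf{p}^\ell_i$ along an infinite loop. This uses the finite-recurrence characterization of Appendix~\ref{apx:fxd-proof}: $V^*[\mathsf{p}^\ell_i]$ equals the limit of the $k$-times unrolled predicates $\mathsf{p}^{\ell,(k)}_i$, whose inner values are ordinary nested $\mathcal{R}\mathcal{A}$ Values. By Corollary~\ref{cor:corountilchain}, each unrolled predicate is a chain of $\mathcal{R}\mathcal{A}$-BEs, and the known $\gamma\to1$ convergence applies to each.

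3. Prove the two-sided bound. Monotonicity of $\mathcal{B}^\gamma$ in $\mathbf{V}$ lets us sandwich $\mathbf{V}^\gamma$ between $k$-fold iterates started from the constants $\pm\|r,q\|_\infty$. These iterates correspond to the discounted values of the $k$-unrolled chains.

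\textbf{Main obstacle.} The hard step is exchanging $\lim_{\gamma\to1}$ with $\lim_{k\to\infty}$. Discounting is the difficulty: with the loop traversed infinitely often, a $\gamma^t$ factor attenuates late loop completions, so the contraction error need not be uniform in $k$. My first attempt would use finiteness of $\mathcal{X}$ (assumed in the Preliminaries). There the optimal loop can be taken periodic with period at most $N|\mathcal{X}|$, so the discounted value of a periodic optimal strategy differs from the undiscounted one by $O(1-\gamma)$ uniformly. Conversely, any discounted strategy yields a lower bound that, in the limit, is dominated by $V^*$ via the $\limsup$ characterization of $\mathsf{G}$ compositions.
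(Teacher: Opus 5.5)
Your contraction argument is correct and cleaner than the paper's. Since $|\max_a V(f(x,a))-\max_a W(f(x,a))|\le\sup_y|V(y)-W(y)|$, the map $V\mapsto V^{+}$ is nonexpansive in the sup norm, so you get modulus $\gamma$ directly. The paper instead routes through Lipschitz constants of $V$, $W$, $f$ and only obtains $\gamma L$.

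\textbf{The gap.} The limit half cannot be closed, because the claimed limit is false for the operator of Def.~\ref{def:raloopbe}. The error is in Steps~1--2: the unrolled payoff is not merely a discounted min/max of $r$.
\begin{itemize}
\item Each step also adds the running term $(1-\gamma)\gamma^t\min\{r_{m(t)},\min_j q_j\}(x_t)$ of the current mode $m(t)$.
\item The switching branch hands off to $V_{i+1}$, which carries its own running term.
\item In the $\mathcal{R}\mathcal{A}$-BE this sum is cut off at the reach time, which is why it vanishes as $\gamma\to1$ in \cite{hsu2021safety}.
\item Here the mode index either keeps cycling or stays in one mode forever via the $V_i^{+}$ branch. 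The sum then runs over the whole horizon and tends to a long-run average of the current-mode reward, not a $\limsup$.
\end{itemize}

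\textbf{Counterexamples.} Take $N=2$, one action, and $q_1\equiv q_2\equiv 1$, so that $V^*=\min\{\limsup_t r_1(x_t),\limsup_t r_2(x_t)\}$.
\begin{itemize}
\item[(i)] A single self-looping state $s$ with $r_1(s)=1$, $r_2(s)=-1$. Then $(V_1,V_2)=(1,-1)$ satisfies the $\mathcal{R}\mathcal{A}_\ell$-BE exactly for every $\gamma$, because mode~1 never switches. Hence $\lim_{\gamma\to1}V_1^\gamma(s)=1\neq-1=V^*(s)$. This contradicts your claim that discounted strategies are dominated by $V^*$ in the limit.
\item[(ii)] A cycle $a\to b\to c\to a$ with $(r_1,r_2)=(1,-1),(-1,1),(-1,-1)$ at $a,b,c$. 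After discarding dominated branches, the fixed point satisfies $V_1(a)=(1-\gamma)+\gamma V_2(b)$, $V_2(b)=(1-\gamma)+\gamma V_1(c)$, and $V_1(c)=-(1-\gamma)+\gamma V_1(a)$. So $V_1^\gamma(a)=\frac{1+\gamma-\gamma^2}{1+\gamma+\gamma^2}\to\frac13$, while $V^*(a)=1$. This contradicts your claim that a periodic optimal strategy loses only $O(1-\gamma)$; the loss is the fraction of each period spent at $c$ in mode~1.
\end{itemize}
Both examples have finite state and action spaces, so they fall within the paper's standing assumptions.

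\textbf{Consequences.} The $\gamma\to1$ versus $k\to\infty$ interchange you flagged does not merely need care; it fails, and the sandwich of Step~3 cannot rescue it. Separately, $k$ applications of the one-step backup $\mathcal{B}^\gamma$ are $k$-step horizon truncations. They are not the $k$-loop unrollings of Appendix~\ref{apx:fxd-proof}, each of which is a full infinite-horizon $\mathcal{R}\mathcal{A}$ solve.

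The paper's own limit argument has the same hole. It treats each component as an $\mathcal{R}\mathcal{A}$-BE with frozen target $\min\{r_j,V^{*,+}_{j+1}\}$ and invokes \cite{hsu2021safety}. This overlooks two things: the target actually involves the $\gamma$-dependent $V^\gamma_{j+1}$, and the running term of $\mathcal{B}^\gamma_{\mathcal{R}\mathcal{A}_\ell}$ is $\min\{r_j,\cdot\}$ rather than the target. Fixing this requires a modified operator or a different limiting object, not a sharper proof.
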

\end{tcolorbox}
Equipped with the $\mathcal{R}\mathcal{A}_\ell$-BE, we can now approximate the Value for the family of recurrence predicates effectively.

\subsection{General Decomposition and Policy}

In general, the \textcolor[HTML]{A64D79}{{\texttt{VDR}}} may be used to decompose the Value beyond the cases derived in Thms.~\ref{thm:n-ra} \& \ref{thm:n-ra-loop}. These were presented for their practical relevance, and because they serve as examples of a general proof strategy, however, they are not the limit of the \textcolor[HTML]{A64D79}{{\texttt{VDR}}}. For example, using the same techniques, it is simple to consider a combination of the two classes as presented in Fig.~\ref{fig:front}, which we give in \Cref{apx:gen-proof}. Moreover, in \Cref{apx:policy}, we demonstrate how this strategy can be used to synthesize a corresponding policy.

\section{Algorithm(s)}

In this section, we introduce Value-Decomposition PPO, a variant of PPO that solves the Value associated with the class of TL predicates in Sec.~\ref{sec:results} using the decomposed Value graph (DVG). We also describe the tools required to generate the DVG and to solve it via dynamic programming for low-dimensional problems. A graphical overview is shown in Fig.~\ref{fig:algplot0}.

\textbf{\texttt{valtr}: Generating the DVG}. We introduce \texttt{valtr}, a tool that converts a parsed temporal logic specification into the general predicate form of Thm.~\ref{thm:master} by recursively applying standard TL rules into a form suitable form for the \textcolor[HTML]{A64D79}{{\texttt{VDR}}}. This representation is then transformed into the DVG, where nodes correspond to predicates, negations, $\max$, $\min$, and Value functions, and edges encode their dependencies. Cyclic recurrence compositions are handled via a special node, enabling efficient parsing and transformation of arbitrary predicates into DVGs. See \Cref{apx:valtr} for details.

\textbf{Dynamic Programming with the DVG.} With the DVG, one may compute the Value of a given predicate by performing a topological sort of the DVG and applying dynamic programming to compute the Value of each subformula in the correct order. Cyclic nodes, arising from recurrence specifications, can be approximate by a finite number of recurrences as described in \Cref{apx:fxd-proof}. This allows us to compute the dynamic programming solution for the low-dimensional test cases given in Figs.~\ref{fig:nRAdemofig} and \ref{fig:alwaysdemofig}.

\textbf{Policy Inference with the DVG.} To execute the policy, we keep track of the current node in the DVG initialized at the root node.
At each step, we execute the policy associated with the current node and check its satisfaction.
If the predicate is satisfied, we switch to the child node corresponding to the satisfied predicate and repeat this process.
We provide more details in \Cref{apx:policy}. For an explicit construction and rigorous proof of the optimalality of the constructed policy, see \cite{so2026value}.

\begin{figure*}[t]
\begin{center}
    \vspace{-.8em}
    \includegraphics[width=1.\textwidth]{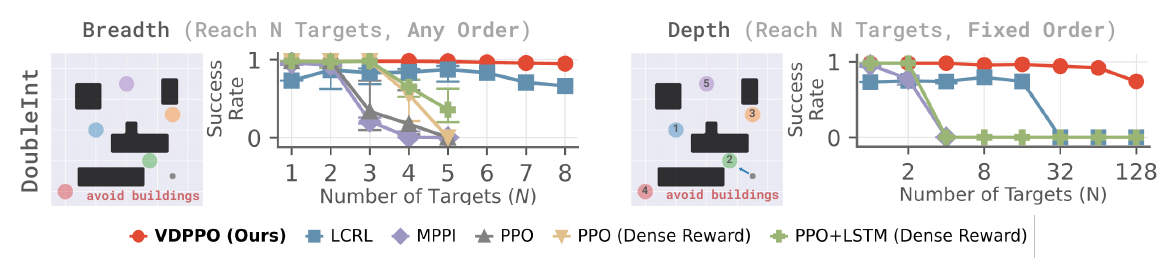}
    \captionof{figure}{
    \textbf{Sim. Performance Scaling with TL Complexity.} On the left, the robustness of \texttt{VDPPO} with respect breadth of a multi-reach spec. is assessed with a $N$-$\mathcal{R}\mathcal{A}$-$\mathcal{A}$ spec. (Thm.~\ref{thm:n-ra}) in which the agent must reach $N$ targets and avoid crashing into the buildings. Both the DVG and automatons scale exponentially with breadth, however, standard methods with additive rewards prove the worst. On the right, depth of a multi-reach spec. is assessed with a chain of $N$ specs. and a safety spec. Here, the linear topology of the DVG allows VDPPO to scale to higher $N$. 
    }
    \label{fig:ablations}
\end{center}
\end{figure*}

\begin{figure*}[t]
\begin{center}
    \vspace{-1.5em}

    \includegraphics[width=0.95\linewidth,trim={0 -5pt 0 -5pt}, clip]{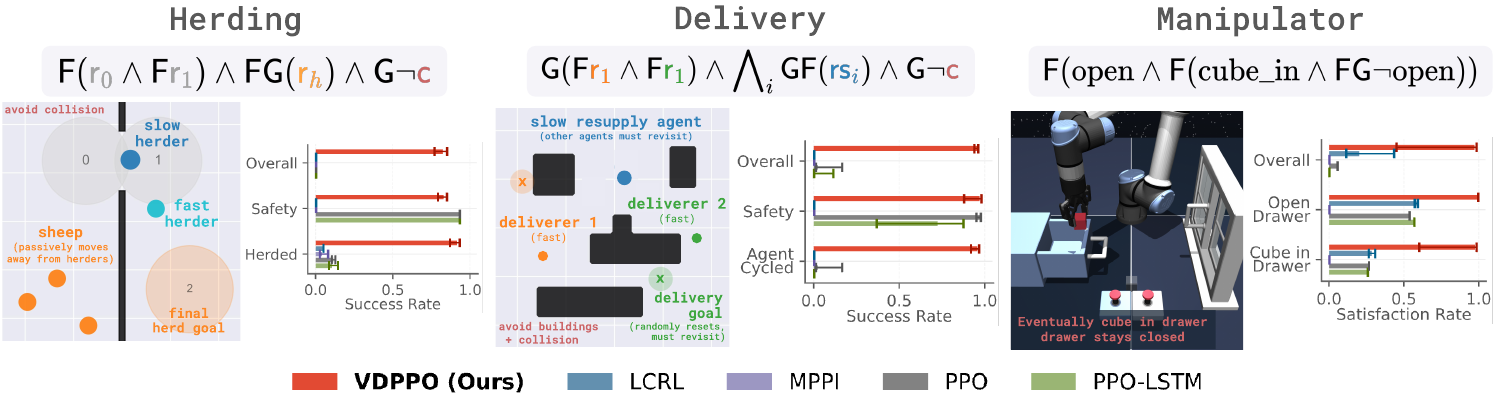}

    \captionof{figure}{\textbf{Sim. Performance on Complex, High-Dimensional Tasks}.
    Here, we assess \texttt{VDPPO} on more realistic tasks involving complex specifications, namely the \texttt{Herding} (30D, left), \texttt{Delivery} (18D, middle), and \texttt{Manipulator} (39D, right) environments. The first two tasks involve heterogeneous teams and a wide breadth of specifications, which leads to extraneous caution in baselines. The latter task involves a single agent with complex grasping and collision dynamics and a simpler series of tasks, yielding better baseline performance initially but poor sequential performance. In all cases, \texttt{VDPPO} excels, automatically breaking down complex tasks into learnable sub-tasks via the DVG and graph embedding.
    }
    \label{fig:simenvs}
    \vspace{-2em}
\end{center}
\end{figure*}

\texttt{\textbf{VDPPO.}} Finally, we propose Value-Decomposition PPO (\texttt{VDPPO}), a special variant of PPO which solves the Value associated with the class of TL predicates in Sec.~\ref{sec:results} by using the DVG. \texttt{VDPPO} uses one two-layer neural net (NN) for all Values in the DVG and one for all corresponding policies, by embedding the node representations with a one-hot vector. \texttt{VDPPO} solves all nodes in the DVG simultaneously by conducting rollouts for each node with the corresponding embedded policy. Depending on the embedding value, the target and advantage for each node are computed with the corresponding discounted $\mathcal{A}$-BE, $\mathcal{R}\mathcal{A}$-BE or $\mathcal{R}\mathcal{A}_\ell$-BE. To train all nodes concurrently, at this step \texttt{VDPPO} uses the current child node Value estimate for the corresponding BE (``bootstraps''), which is represented by the feedback loop in Fig.~\ref{fig:algplot0}. The policy is trained with the standard PPO objective, using the advantage estimate corresponding to the embedding. This allows us to leverage the decomposed structure of the Value functions to efficiently learn policies that satisfy complex TL specifications without sequentially approximating the Value.
See \Cref{apx:vdppo} for further details.





\section{Simulation Results}

To better understand the performance of \texttt{VDPPO}, we design simulation experiments to answer
the following questions:

\begin{enumerate}[leftmargin=1.2cm,label={(\bfseries Q\arabic*):}]
    \item Does Value decomposition help satisfy \& optimize a wider spec. with more options?
    \item Does Value decomposition help satisfy \& optimize a longer horizon spec.?
    \item Can \texttt{VDPPO} scale to more complex dynamics? 
\end{enumerate}

Details on experiments are in \Cref{apx:sim} (sim details) and \Cref{apx:baselines} (baselines).
Additional ablation studies are provided in \Cref{apx:ablations}.

\subsection{Setup}

\textbf{Environments.}
We evaluate on four simulated domains:
\texttt{DoubleInt} (toy double integrator environment to focus on TL challenges),
\texttt{Herding} (a team of herders collaborates to herd multiple targets to a designated location while avoiding obstacles),
\texttt{Delivery} (agents must continuously pick up and deliver packages to a special agent while avoiding collisions with each other and static obstacles), and
\texttt{Manipulator} (a robotic arm interacts with a cube and a drawer as specified by TL formulas) from OGBench's \texttt{scene} task \cite{ogbench_park2025}.

\textbf{Baselines.} We compare \texttt{VDPPO} with other model-free methods that can solve TL specifications with black-box dynamics. These include 1) \texttt{LCRL} \cite{hasanbeig2022lcrl}, a deep RL method that solves TL tasks by augmenting the state space with an automata representation of the TL formula, 2) an extension of Model Predictive Path Integral (MPPI) \cite{williams2016aggressive} to tackle TL problems \cite{halder2025trajectory}, \texttt{TL-MPPI}, 3) \texttt{PPO} using the robustness metric as a reward, and 4) an extension of \texttt{PPO} using a Long Short-Term Memory backbone \cite{hochreiter1997long}, \texttt{PPO-LSTM}.
For each environment, \texttt{LCRL}, \texttt{PPO}, \texttt{PPO-LSTM} and \texttt{VDPPO} are run for the same number of update steps, while for \texttt{TL-MPPI} we follow the hyperparameters chosen in \cite{halder2025trajectory}.

\textbf{Evaluation criteria.} Performance is measured by success rate on finite-horizon TL satisfaction; we additionally report satisfaction rates of individual subformulas.
All methods are trained with three seeds and evaluated on 256 initial conditions.


\subsection{Results}


\textbf{(Q1): Value decomposition improves breadth scalability with TL complexity.}
To answer this question, we study the effect of an increasing $N$-$\mathcal{R}\mathcal{A}$-$\mathcal{A}$ spec., i.e. a $N$-\texttt{AND} of \texttt{Until} in a single-agent double integrator environment. The spec. requires an agent to visit $N$ targets in any order while maintaining safety. The problem is difficult because the number of paths grows exponentially. All methods solve the singular spec. but degrade as the number of specifications increases.
\texttt{VDPPO} consistently outperforms all baselines as the breadth complexity of the TL spec. increases (Fig.\ref{fig:ablations}, left), demonstrating the effectiveness of Value decomposition in handling complex TL tasks.

\textbf{(Q2): Value decomposition improves depth scalability with TL complexity.}
To answer this question, we study the effect of an increasing $N$-chain-of-$\mathcal{R}\mathcal{A}$ spec., i.e. a $N$-chain of \texttt{Until} predicates in a single-agent double integrator environment. The spec. requires an agent to visit $N$ targets in a fixed order while maintaining safety. Again, \texttt{VDPPO} consistently outperforms all baselines as the depth complexity of the TL spec. increases (Fig.\ref{fig:ablations}, right), achieving up to $128$ targets reliably.
This is likely because the probability of randomly satisfying nested TL specifications decreases exponentially with depth, making it difficult for non-decompositional methods to learn effective policies.


\textbf{(Q3): \texttt{VDPPO} outperforms on problems with difficult dynamics. }
We now consider more challenging problems, either due to complex interactions with uncontrolled agents (\texttt{Herding}), heterogeneous collaboration (\texttt{Delivery} \& \texttt{Herding}), or complex dynamics (\texttt{Manipulator}), and show the results in Fig.~\ref{fig:simenvs}. In all three tasks, \texttt{VDPPO} achieves the highest success rate by a significant margin. See \Cref{apx:ablations} for ablations.




\section{Hardware Results}



Lastly, we perform hardware experiments corresponding to the \texttt{Herding} and \texttt{Delivery} environments using a swarm of Crazyflie (CF) drones collaborating with the Unitree Go2 to demonstrate the ability of \texttt{VDPPO} to solve complex task specifications in high-dimensional real-world settings with heterogeneous collaboration. See  Fig.~\ref{fig:hardwareoverview} for an overview.

\subsection{\texttt{Herding}}

In this experiment, we consider a team of one CF and the Go2 tasked with herding three ``sheep'' CFs through a narrow gap to a target location while avoiding obstacles and collisions. The sheep CFs have a fixed nominal policy, using the softmin to drive them away from the nearest obstacle or agent, and thus will move only when approached. 

The TL specification for the task is given by,
\begin{equation}
\mathsf{p}_{\text{herding}} := \mathsf{F}( \textcolor[rgb]{0.64,0.64,0.64}{\mathsf{r}_0} \land \mathsf{F}\textcolor[rgb]{0.64,0.64,0.64}{\mathsf{r}_1} ) \land \mathsf{F G} (\textcolor[rgb]{1.,0.64,0.25}{\mathsf{r}_h}) \land \mathsf{G}\lnot \textcolor[rgb]{0.8,0.4,0.4}{\mathsf{c}},
\end{equation}
where $\textcolor[rgb]{0.8,0.4,0.4}{\mathsf{c}}$ denotes collisions, $\textcolor[rgb]{0.64,0.64,0.64}{\mathsf{r}_0}$ the herd reaching the pre-gate region, $\textcolor[rgb]{0.64,0.64,0.64}{\mathsf{r}_1}$ passage through the gate, and $\textcolor[rgb]{1.,0.64,0.25}{\mathsf{r}_h}$ arrival at the target. This encodes a sequence of reach–avoid objectives followed by a reach–stabilize objective requiring indefinite herding.

The herders (CF and Go2) are initialized opposite the gap from the herd and have asymmetric dynamics, with the Go2 moving more slowly. To satisfy the specification, the herders must coordinate to pass through the gap, collect the herd, and guide it to the target while avoiding obstacles. We train a \texttt{VDPPO} policy using the DVG and deploy it on hardware, where the agents adapt online to real-time state feedback.

Ultimately, we observe that the CF and Go2 \textbf{learn to divide the labor} of the task such that the CF passes through the gap to gather the agents (Fig.\ref{fig:hardware_rollouts}.B), while the Go2 waits to receive on the herding side (Fig.\ref{fig:hardware_rollouts}.C). When the herd passes through the narrow gap, the Go2 initially moves out of the way (Fig.\ref{fig:hardware_rollouts}.C) and then transitions to providing support, rapidly shifting position to block the Herd from distributing across the new space (Fig.\ref{fig:hardware_rollouts}.E). This behavior is entirely emergent and demonstrates the wide-ranging ability of \texttt{VDPPO} to solve complex tasks automatically.

\begin{figure}[t]
\begin{center}
    \vspace{-.5em}
    \centering
    \includegraphics[width=0.9\linewidth,trim={2mm 0 0 0},clip]{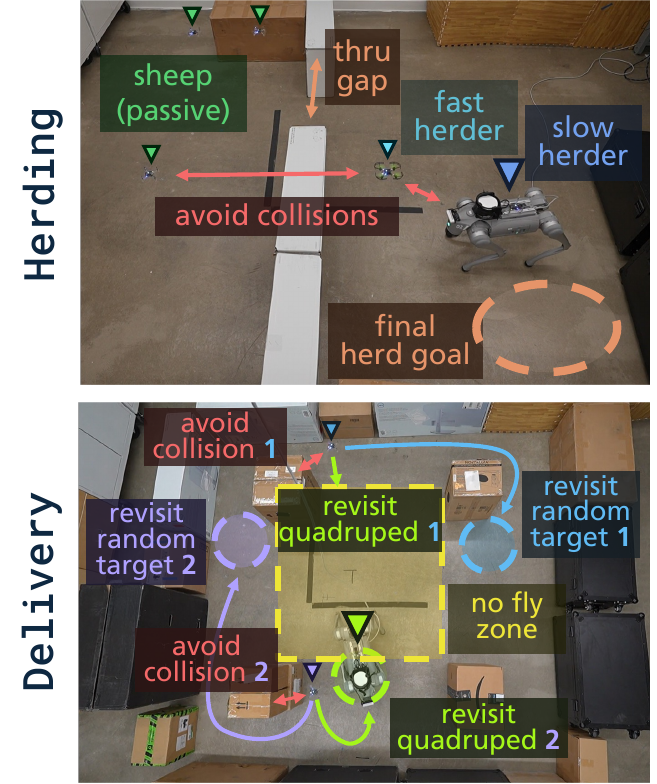}

    \vspace{0.2em}
    \captionof{figure}{\textbf{Hardware Overview for \texttt{Herding} and \texttt{Delivery} Tasks}}
    \label{fig:hardwareoverview}
    \vspace{-2em}
\end{center}
\end{figure}
\begin{figure*}[t]
\begin{center}
    \includegraphics[width=1\textwidth, trim=0 -1mm 0 0]{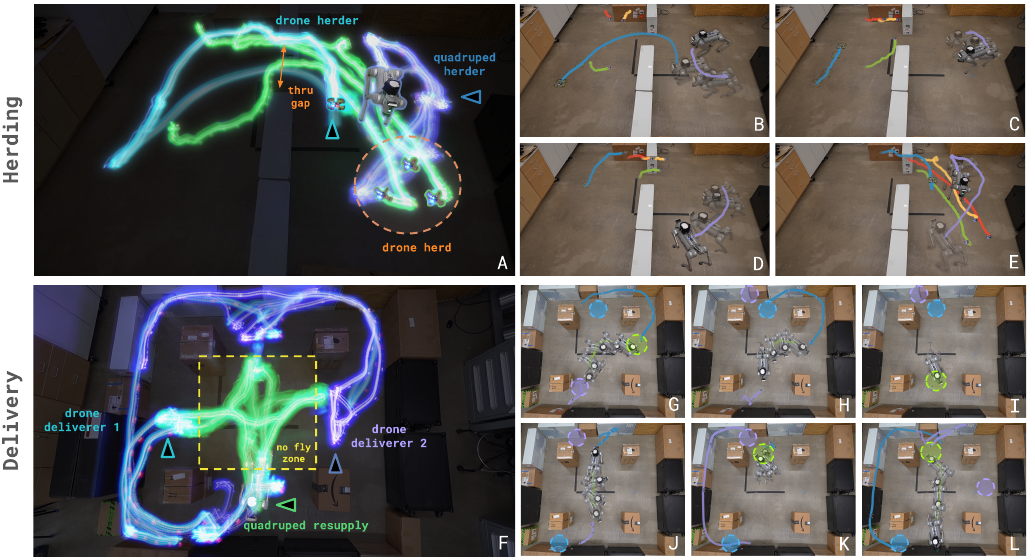}
    \captionof{figure}{\textbf{Trajectory snapshots from \texttt{Herding} and \texttt{Delivery} hardware tasks}. We show a long-exposure photo (left), and
    stills from independent times (right), with depictions corresponding to Fig.~\ref{fig:hardwareoverview}. Videos can be seen at \url{https://willsharpless.github.io/valdec-site/}.}
    \label{fig:hardware_rollouts}
    \vspace{-2em}
\end{center}
\end{figure*}

\subsection{\texttt{Delivery}}

In this experiment, we consider a team of two CFs and the Go2 tasked with recurrently visiting agent-specific target locations and recurrently revisiting the Go2 agent (to model package delivery and resupply), while avoiding building obstacles, collisions, and a ``no fly zone'' (for the CFs).

The TL specification for the task is given by,
$$\mathsf{p}_{\text{delivery}} := \mathsf{G}(\mathsf{F}
\textcolor[rgb]{0.26, 0.52, 0.96}{\mathsf{r}_1} \land \mathsf{F}
\textcolor[rgb]{0.68, 0.52, 0.96}{\mathsf{r}_2}) \land \bigwedge\nolimits_i \mathsf{GF}(
\textcolor[rgb]{0.34, 0.82, 0.45}{\mathsf{rs}_i}) \land \mathsf{G}\lnot(\textcolor[rgb]{0.8, 0.4, 0.4}{\mathsf{ac}} \lor \textcolor[rgb]{0.8, 0.4, 0.4}{\mathsf{ob}} \lor \textcolor[HTML]{c6c103}{\mathsf{nf}})
\vspace{-.5em}$$
where the predicate $\textcolor[rgb]{0.26, 0.52, 0.96}{\mathsf{r}_1}$ \& $
\textcolor[rgb]{0.68, 0.52, 0.96}{\mathsf{r}_2}$ capture CF $i$ visiting target $i$, $\textcolor[rgb]{0.34, 0.82, 0.45}{\mathsf{rs}_i}$ captures CF $i$ visiting the Go2 (resupplying), $\textcolor[rgb]{0.8, 0.4, 0.4}{\mathsf{ac}}$ captures aerial collision, $\textcolor[rgb]{0.8, 0.4, 0.4}{\mathsf{ob}}$ captures obstacle collision, and  $\textcolor[HTML]{c6c103}{\mathsf{nf}}$ captures the no-fly-zone (for the CFs only). Here, the task logic is dominated by $\mathsf{GF}$, and hence is largely solved with the $\mathcal{R}\mathcal{A}_\ell$-BE.

In this environment, the CF targets jump to a new random location after an agent has visited it, requiring a policy that is conditioned to various target locations. The real difficulty of this problem arises in the tightness of the layout; the obstacles confine the Go2 to the central area where the CFs are not allowed to fly (modeling a busy intersection), yet the CFs must visit the Go2 to ``resupply.'' We again implement \texttt{VDPPO} to learn a policy to solve the complex task and deploy it live.

Ultimately, we observe sophisticated coordination between the three agents to \textbf{distribute the difficulty of the task fairly}. Namely, as the CFs move around the outskirts of the arena, avoiding one another carefully but not slowly (Fig.\ref{fig:hardware_rollouts}.L), the Go2 anticipates their movements, moving between each of the agents (Fig.\ref{fig:hardware_rollouts}.G-I) to be in position to resupply them as close to their target as possible. This complex collaboration generated by \texttt{VDPPO} allows the agents to rapidly meet deliveries and resupply without crashing at all.

\section{Discussion and Limitations}

We briefly discuss some limitations of the proposed work and future directions. 

\textit{1. Any-Order Task Scaling:}
Thm.~\ref{thm:n-ra} yields a graph of $2^N$ Values, i.e. the compute for a simultaneous any-order task grows exponentially. This Traveling Salesman Problem like growth is fundamental and also arises in automaton construction \cite{cichon2009minimal}. Thus, LCRL and other RL-automata methods share this limitation. Approximations that only consider a subgraph could mitigate this, for example, but we leave this to future work.

\textit{2. Running Cost Minimization:}
\texttt{VDPPO} does not minimize a running cost, but can be extended to do so (i.e., a constrained
optimization problem formulation) via Lagrangian \cite{ganai2023iterative} or epigraph formulations \cite{So-RSS-23,so2024solving}. Similarly, \texttt{VDPPO} might be extended to generate Lyapunov/Barrier functions with an anti-discounting term \cite{Choi2021CBVF}. We leave this to future work.

\textit{3. Optimizing Robustness, Tradeoffs:}
Dense robustness scores can induce unintended behaviors while maintaining satisfaction (e.g., reaching the target then moving far away to increase safety margin). However, the presented theory is amenable to any predicate function with the same level set, e.g. allowing for tuning like $\textcolor[rgb]{0.365,0.494,0.713}{r}':=10\textcolor[rgb]{0.365,0.494,0.713}{r}$ without sacrificing ``priority'' of $\mathsf{\textcolor[rgb]{0.365,0.494,0.713}{r}}$.
Conveniently, this allows for both dense \& sparse reward (and penalty) functions; in our results, we mostly use sparse/binary predicate functions and employ pseudo-binary dense functions for rare tasks. 

\textit{4. Stochasticity:}
The presented decompositions depend on $\max$/$\min$ algebra which does not hold when wrapped with the Expectation operator (all equivalences become inequalities). As discussed in \cite{so2024solving}, a policy gradient result is still derivable, however, and has been demonstrated to work in practice \cite{sharpless2025dual}.


\section{Conclusion}

In this work, we propose a novel approach to solving the Bellman Value function associated with complex temporal specifications via decomposition. Namely, we demonstrate that for a large class of TL predicates, the corresponding Value may be decomposed into a graph of Values connected by a set of ``atomic'' Bellman equations. With this perspective, we propose \texttt{VDPPO} that is shown to solve optimal policies in complex tasks well beyond existing methods. This work highlights a novel and powerful approach to tackling complex task logic for real-world autonomy.

\section*{Acknowledgements}
The authors would like to thank Sander Tonkens, Gioele Zardini, and Jaime Fisac for the valuable discussions that lead to the development of the work. This work was supported by the National Science Foundation (NSF) CAREER Award \#CCF-2238030 and Office of Naval Research Award (ONR) ONR N000142412661.


\bibliographystyle{IEEEtran}
\bibliography{references,refs_sharpless_iclr26}

\newpage

\setlength\abovecaptionskip{0.5\baselineskip}
\onecolumn 

\setcounter{section}{0}
\renewcommand{\thesection}{\Alph{section}}
\renewcommand{\theHsection}{appendix.\Alph{section}}

\setcounter{subsection}{0}
\renewcommand{\thesubsection}{\thesection.\arabic{subsection}}
\renewcommand{\theHsubsection}{appendix.\Alph{section}.\arabic{subsection}}

\crefalias{section}{appendix}
\crefalias{subsection}{appendix}
\crefalias{subsubsection}{appendix}

\titleformat{\section}
  {\Large\bfseries}
  {\thesection}
  {1em}
  {}

\titleformat{\subsection}
  {\large\bfseries}
  {\thesubsection}
  {1em}
  {}

\makeatletter
\renewcommand\@seccntformat[1]{%
  \csname the#1\endcsname\quad
}
\makeatother

{\Large \center \textsc{APPENDIX}\par}

{\large \textsc{Contents}\par}
\vspace{1em}

\makeatletter
\newcommand{\apxtoc@numberline}[1]{#1\quad}
\newcommand{\apxtoc@section}[2]{%
  \noindent\begin{tabular*}{\columnwidth}{@{}l@{\extracolsep{\fill}}r@{}}
  \begingroup\let\numberline\apxtoc@numberline\textsc{#1}\endgroup & #2
  \end{tabular*}\\[0.5em]
}
\newcommand{\apxtoc@subsection}[2]{%
  \noindent\begin{tabular*}{\columnwidth}{@{}l@{\extracolsep{\fill}}r@{}}
  \hspace*{1.5em}\begingroup\let\numberline\apxtoc@numberline\textsc{#1}\endgroup & #2
  \end{tabular*}\\[0.5em]
}
\begingroup
  \let\l@section\apxtoc@section
  \let\l@subsection\apxtoc@subsection
  \@starttoc{apxtoc}
\endgroup

\let\apxtoc@addcontentsline\addcontentsline
\newcommand{\apxtoc@maybeaddcontentsline}[3]{%
  \def\apxtoc@file{#1}%
  \def\apxtoc@type{#2}%
  \def\apxtoc@toc{toc}%
  \def\apxtoc@sectiontype{section}%
  \def\apxtoc@subsectiontype{subsection}%
  \ifx\apxtoc@file\apxtoc@toc
    \ifx\apxtoc@type\apxtoc@sectiontype
      \apxtoc@addcontentsline{apxtoc}{#2}{#3}%
    \else\ifx\apxtoc@type\apxtoc@subsectiontype
      \apxtoc@addcontentsline{apxtoc}{#2}{#3}%
    \fi\fi
  \fi
}
\renewcommand{\addcontentsline}[3]{%
  \apxtoc@addcontentsline{#1}{#2}{#3}%
  \apxtoc@maybeaddcontentsline{#1}{#2}{#3}%
}
\makeatother

\newpage
\section*{Useful Properties and Notation}

We give here properties and notations for simplifying the following proofs. For a given action sequence $\alpha$, 
$$\alpha := (a_1, a_2, \dots ) \in \mathbb{A} := \mathcal{A}^\mathbb{N} $$
let a portion beginning at $i$ and ending at $j$ be written $$\alpha_{i:j} := (a_i, \dots a_j).$$
Moreover, for a trajectory $\xi_x^\alpha$,
$$\xi_x^\alpha := (x, x_1, \dots) \in \mathbb{X} := \mathcal{X}^\mathbb{N},$$
where $x_{i+1}=f(x_i, \alpha_i)$, it follows then that for $\alpha$ divided into $\alpha_{t^-} :=\alpha_{1:t}$ \& $\alpha_{t^+} :=\alpha_{t+1:\infty}$,
$$\xi_x^\alpha = \xi_{y}^{\alpha_{t^+}},$$
where $y=\xi_x^{\alpha_{t^-}}(t)$. We then have the following result corresponding to the decomposition of a controlled trajectory, which will be used ubiquitously.

\begin{lemma} \label{lem:ctrlsplit}
    Let $\mathcal{X}$ s.t. $|\mathcal{X}| < \infty$. Then for $t \in \mathbb{N}$, $\alpha \in \mathbb{A}$, $\xi_x^\alpha \in \mathbb{X}$, and, $x \in \mathcal{X}$,
    $$
    \max_{\alpha} \max_{t} f(\xi_x^\alpha, t) = \max_t \max_{\alpha_{t^-}} \max_{\alpha_{t^+}} f(\xi_{\xi_x^{\alpha_{t^-}}(t)}^{\alpha_{t^+}},t).
    $$
\end{lemma}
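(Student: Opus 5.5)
The plan is to obtain the identity by interchanging suprema: both sides are the supremum of the same quantity over the same index set, enumerated in a different order. Since every maximum ranges over a set on which the function is well defined, I will argue with suprema throughout. At the end I will check attainment, which is where the hypothesis $|\mathcal{X}|<\infty$ enters.

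\textbf{Step 1 (reparametrization).} Fix $t\in\mathbb{N}$. The map $\alpha \mapsto (\alpha_{t^-},\alpha_{t^+})$ is a bijection from $\mathbb{A}$ onto $\mathcal{A}^t\times\mathbb{A}$, because concatenation is its inverse. By the splitting identity stated above, the state reached at time $t$ is $y=\xi_x^{\alpha_{t^-}}(t)$, and $\xi_x^\alpha$ coincides with $\xi_{y}^{\alpha_{t^+}}$ in the sense the paper uses. Hence $f(\xi_x^\alpha,t)=f(\xi_{y}^{\alpha_{t^+}},t)$ for every $\alpha$, and therefore
$$\sup_{\alpha} f(\xi_x^\alpha,t)=\sup_{\alpha_{t^-}}\sup_{\alpha_{t^+}} f\bigl(\xi_{\xi_x^{\alpha_{t^-}}(t)}^{\alpha_{t^+}},t\bigr).$$
This is the standard fact that a supremum over a product set equals the iterated supremum.

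\textbf{Step 2 (commuting $\alpha$ and $t$).} For any function $g$ on $\mathbb{A}\times\mathbb{N}$ we have $\sup_\alpha\sup_t g=\sup_{(\alpha,t)} g=\sup_t\sup_\alpha g$. Substituting Step 1 into the right-hand form gives the claimed identity at the level of suprema.

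\textbf{Step 3 (attainment).} This is the only delicate point: the statement writes $\max$, so I must show the suprema are achieved. Finiteness of $\mathcal{X}$ gives two facts.
\begin{enumerate}
\item For each $t$, the set of reachable states $\{\xi_x^{\alpha_{t^-}}(t)\}$ is finite, so the maximum over $\alpha_{t^-}$ is over finitely many distinct values of the inner expression and is attained.
\item The values along trajectories range over a finite set of states, so the map $t\mapsto$ (inner max) takes finitely many values whenever it depends only on the state. This is the case for the predicate-based $f$ used later in the paper, so the outer maximum is attained.
\end{enumerate}
For the maximum over $\alpha_{t^+}$, I will either rely on the same finiteness applied to the relevant payoff, or simply read the lemma as an equality of suprema, stipulating that $\max$ denotes $\sup$ whenever attainment is not needed. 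The main obstacle is making this attainment argument rigorous for a completely general $f$. My fallback is to state the result as an equality of suprema, which is all that the later Value-decomposition proofs actually use.
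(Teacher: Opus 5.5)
The paper states this lemma without proof. Your Steps 1--2 are the natural argument. Under the paper's convention that $\xi^{\alpha_{t^+}}_{y}$, with $y=\xi_x^{\alpha_{t^-}}(t)$, denotes the same (concatenated) trajectory as $\xi_x^\alpha$, the identity is just the bijection $\alpha\leftrightarrow(\alpha_{t^-},\alpha_{t^+})$ plus interchange of suprema, and it uses no finiteness at all.

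Be aware that this convention carries the whole content. Suppose $\xi^{\alpha_{t^+}}_{y}$ is read literally as the trajectory started at $y$, as in the \texttt{right-U} proof, where it is then evaluated at time $0$. Then $f(\xi_x^\alpha,t)=f(\xi_y^{\alpha_{t^+}},t)$ fails, e.g.\ for $f(\xi,t)=g(\xi(0))$. The downstream use then also needs the shift invariance $\rho[\mathsf{b}](\xi,t)=\rho[\mathsf{b}](\sigma^t\xi,0)$, where $(\sigma^t\xi)(s)=\xi(t+s)$, of future-time robustness scores.

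The genuine gap is Step 3, and it cannot be closed for general $f$. The $\max$ version is then false however small $\mathcal{X}$ is: $f(\xi,t)=-1/(t+1)$ has supremum $0$ over $t$, never attained. So your fallback is not optional. Either the lemma is an identity of suprema, or you must add the hypothesis that $f$ has finite range.

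Finite range is precisely what $|\mathcal{X}|<\infty$ delivers for the payoffs the paper uses. Let $S$ be the finite set of values of the finitely many atomic predicate functions on $\mathcal{X}$. Then every $\rho[\mathsf{p}]$ takes values in $S\cup(-S)$, and hence so does every $V^*[\mathsf{p}]$ and every $\rho[\mathsf{v}[\mathsf{p}]]$. This holds because suprema and infima of nonempty subsets of a finite set lie in that set. Under finite range every supremum on both sides is attained, and Step 3 becomes one line. This includes the inner $\max_t$ on the left for each fixed $\alpha$, which you did not address.

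Your item 2 argues through the wrong mechanism. For the \texttt{right-U} payoff, the inner maximum at time $t$ involves the running minimum of $\rho[\mathsf{a}]$ over the prefix, not any single state. So it is finite range, not state dependence, that gives attainment. Similarly, in item 1 you should count prefix trajectories in $\mathcal{X}^{t+1}$ rather than reachable endpoints. Under the concatenation reading you adopted in Step 1, $f$ may depend on the entire prefix.
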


\newpage

\section{More Related Works} \label{apx:related}
We here give a slightly more expanded description of the related works compared to the main text.
We refer the reader to \cite{sharpless2025dual} for additional discussion of many of these works. 

\textbf{Reinforcement Learning with TL Objectives.} 
Many works have explored ways to optimize objectives that encode TL specifications \cite{STL1,STL2,STL3,STL13,Rabin-Automaton-Sastry,Bozkurt_2020, concurrent-learning} (or conversely learn TL specifications from agent behavior \cite{interpretable-apprenticeship}). One line of such works uses Non-Markovian-Reward Decision Processes (NMRDPs), which allow for history-dependent rewards \cite{LTL-and-beyond,Decision-theoretic-planning,Guiding-search-LTL,reward-machines,Gaon_Brafman_2020}. 
Other works optimize the quantitative semantics associated with an STL objective, approximating the maximums and minimums in a sum-of-discounted rewards fashion, which are then solved with traditional methods \cite{STL5,STL14}, or otherwise encoding TL objectives through expectations \cite{Cai_2021}.
Several other methods also exist that attempt to optimize general objective functions using non-traditional Bellman equations \cite{cdby1,cdby2,cdby3} or handle discounted sums of multiple rewards or penalties \cite{cdby4,cdby5,cdby6}.
We also refer the reader to \cite{compositional-reinforcement-learning} for an approach that proceeds by composing learned sub-tasks into higher level ones using an additional planning algorithms rather than breaking a high-level task down into subtasks.
By contrast to most of these previous approaches, our approach proceeds by decomposition of a TL-specified problem in an exact manner.
Specifically, we decompose the value function associated with a quantitative semantic for a TL predicate into value functions associated with simpler objectives.
These simpler objectives are then solved by leveraging powerful recent Hamilton-Jacobi Reachability (HJR) methods.
(Note that these decompositions of the value functions are fundamentally different from decompositions of the quantitative semantics themselves.)
This approach allows one to avoid approximations of the objective function or issues associated with sparsity of long-horizon rewards, which commonly afflict the previous methods.

\textbf{Constrained, Multi-Objective, and Goal-Conditioned RL}
A number of techniques in RL have arisen to handle constraints or multiple goals. Constrained MDPs (CMDPs) attempt to maximize sums of discounted rewards subject to a safety or liveness condition, which is often handled via a Lagrangian term in the objective function \cite{Altman-CMDPs,Abbeel-Constrained-Policy-Optimization,Safe-RL-CMDPs,Abbeel-Lagrangian-CMDP,CMDP3,CMDP4,CMDP5,CMDP6,CMDP7,CMDP11,CMDP12,CMDP13,pmlr-v168-castellano22a,pmlr-v238-mcmahan24a,So-RSS-23}.
For CMDPs, the Lagrangian term involved typically requires substantial tuning for desired behavior, severly limiting its use for satisfying general TL tasks.
Multi-objective RL techniques, by contrast Pareto-optimize multiple sums of discounted rewards \cite{Model-based-multi-objective-2014, Pareto-Dominating-Policies-2014,Distributional-Multi-Objective,Multi-Objective-2016,MORL2,Generalized-Algorithm-Multi-Objective-2019,Liu_2025}.
This allows users to balance multiple objectives, but generally are not built for handling TL-like specifications. 
Goal-conditioned RL, by contrast, simultaneously learns policies for a range of possible task specifications \cite{Goal-Conditioned-Problems-and-Solutions,Multi-Goal-Reinforcement-Learning,Exploration-via-Hindsight-GCRL,GCRL1,GCRL2,GCRL6,GCRL4,GCRL1,GCRL2,GCRL3}.
At the time of deployment, a user can then decide which specification is most appropriate. This is fundamentally different from TL tasks, where all specifications must be satisfied.

\textbf{Hamilton-Jacobi Reachability} 
Hamilton-Jacobi Reachability (HJR) methods were initially designed to solve value functions associated with "reach", "avoid", or "reach-avoid" problems using traditional dynamic programming for continuous space and times \cite{mitchell2005time,fisac2015reach}.
The objectives for these tasks are precisely the quantitative semantics for eventually, never, and until predicates.
HJR approaches have recently been adapted to solve these same problems in RL settings, with exciting performance \cite{so2024solving,hsu2021safety,fisac2019bridging,Ganai2023,yu2022reachability,Zhu2024-ck}.
Our work builds on such advancements, using the RL algorithms developed by these building blocks to accomplish higher-level tasks.

\newpage
\section{Temporal Logic} \label{apx:tlbg}
In this section, we give further background on the temporal logic used in the main text. We begin with the logical definitions of the operators $\lor, \land, \lnot, \mathsf{X}, \mathsf{F}, \mathsf{G}, \mathsf{U}$, alternatively defined by their robustness metric in the main text. 

\begin{tcolorbox}[DefFrame]
    \begin{defi}
        Let $\mathsf{p},\mathsf{p'}$ be predicates, $\xi_x \in \mathbb{X}$ a trajectory beginning at $x \in \mathcal{X}$, and $t \in \mathbb{N}$ a starting time. The relation $(\xi_x,t)\models\mathsf{p}$ is defined as follows,
        \begin{align*}
        (\xi_x,t)\models\mathsf{r}_i &\iff r_i(\xi_x(t))\ge 0, \\
        (\xi_x,t)\models\lnot\mathsf{p} &\iff (\xi_x,t)\not\models\mathsf{p}, \\
        (\xi_x,t)\models\mathsf{p}\land\mathsf{p'} &\iff (\xi_x,t)\models\mathsf{p}\ \text{and}\ (\xi_x,t)\models\mathsf{p'}, \\
        (\xi_x,t)\models\mathsf{p}\lor\mathsf{p'} &\iff (\xi_x,t)\models\mathsf{p}\ \text{or}\ (\xi_x,t)\models\mathsf{p'}, \\
        (\xi_x,t)\models\mathsf{X}\mathsf{p} &\iff (\xi_x,t+1)\models\mathsf{p}, \\
        (\xi_x,t)\models\mathsf{F}\mathsf{p} &\iff \exists \tau\ge t\ \text{s.t.}\ (\xi_x,\tau)\models\mathsf{p}, \\
        (\xi_x,t)\models\mathsf{G}\mathsf{p} &\iff \forall \tau\ge t,\ (\xi_x,\tau)\models\mathsf{p}, \\
        (\xi_x,t)\models\mathsf{p}\,\mathsf{U}\,\mathsf{p'} &\iff \exists \tau\ge t\ \text{s.t.}\ (\xi_x,\tau)\models\mathsf{p'}\ \text{and}\ \\ & \qquad \qquad \forall \kappa\in[t,\tau],\ (\xi_x,\kappa)\models\mathsf{p}.
        \end{align*}
    \end{defi}
\end{tcolorbox}

From these definitions, we may certify a few equivalence relations for rearranging certain combinations of operators, which will later prove to be useful. Note, for the interested reader all of the following equivalences may be automatically verified with the tool Spot \cite{duret2022spot}.

\begin{tcolorbox}[DefFrame]
    \begin{lemma} \label{lem:symbolic:U_and}
        \begin{align*}
            (\mathsf{q}_1 \,\mathsf{U}\, \mathsf{r}_1) \land& (\mathsf{q}_2 \,\mathsf{U}\, \mathsf{r}_2) \equiv \\ 
            &(\mathsf{q}_1 \land \mathsf{q}_2) \,\mathsf{U}\, \bigl( (\mathsf{r}_1 \land \mathsf{q}_2 \,\mathsf{U}\, \mathsf{r}_2) \lor (\mathsf{r}_2 \land \mathsf{q}_1 \,\mathsf{U}\, \mathsf{r}_1) \bigr)
        \end{align*}
    \end{lemma}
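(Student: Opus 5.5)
We prove the equivalence directly from the satisfaction semantics in the preceding definition, by showing each side implies the other on an arbitrary trajectory $\xi_x$ at an arbitrary time $t$. It suffices to take $t=0$, since the semantics is shift-invariant. The idea is to unfold the two \texttt{Until} witnesses $\tau_1,\tau_2$ and order them. The earlier witness becomes the outer \texttt{Until} time. The residual obligation of the later one, evaluated at that time, becomes the inner $\mathsf{q}_j\,\mathsf{U}\,\mathsf{r}_j$ clause.

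\textbf{($\Rightarrow$)} Suppose $(\xi_x,0)\models(\mathsf{q}_1\,\mathsf{U}\,\mathsf{r}_1)\land(\mathsf{q}_2\,\mathsf{U}\,\mathsf{r}_2)$.
\begin{itemize}
\item Take witnesses $\tau_1,\tau_2$ with $(\xi_x,\tau_i)\models\mathsf{r}_i$ and $\mathsf{q}_i$ holding on $[0,\tau_i]$.
\item Without loss of generality $\tau_1\le\tau_2$, and set $\tau:=\tau_1$.
\item On $[0,\tau]$ both $\mathsf{q}_1$ and $\mathsf{q}_2$ hold, so $\mathsf{q}_1\land\mathsf{q}_2$ holds there.
\item At $\tau$, $\mathsf{r}_1$ holds.
\item Also $(\xi_x,\tau)\models\mathsf{q}_2\,\mathsf{U}\,\mathsf{r}_2$, using witness $\tau_2\ge\tau$, because $\mathsf{q}_2$ holds on $[\tau,\tau_2]\subseteq[0,\tau_2]$.
\end{itemize}
Hence the first disjunct holds at $\tau$, and $\tau$ witnesses the outer \texttt{Until}. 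The case $\tau_2<\tau_1$ is symmetric and yields the second disjunct.

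\textbf{($\Leftarrow$)} Let $\tau$ witness the right-hand \texttt{Until}. Assume without loss of generality that the first disjunct holds at $\tau$, so $\mathsf{r}_1$ holds at $\tau$ and $(\xi_x,\tau)\models\mathsf{q}_2\,\mathsf{U}\,\mathsf{r}_2$ with some witness $\tau'\ge\tau$.
\begin{itemize}
\item $\mathsf{q}_1$ holds on $[0,\tau]$ (from $\mathsf{q}_1\land\mathsf{q}_2$) and $\mathsf{r}_1$ holds at $\tau$. This gives $\mathsf{q}_1\,\mathsf{U}\,\mathsf{r}_1$ at $0$.
\item $\mathsf{q}_2$ holds on $[0,\tau]$ and on $[\tau,\tau']$, hence on $[0,\tau']$, and $\mathsf{r}_2$ holds at $\tau'$. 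This gives $\mathsf{q}_2\,\mathsf{U}\,\mathsf{r}_2$ at $0$.
\end{itemize}

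The main subtlety is the interval bookkeeping, which depends on the inclusive convention $\kappa\in[t,\tau]$ for $\mathsf{U}$. We need $\mathsf{q}_2$ to hold at the endpoint $\tau$ itself in both directions. This is exactly where the closed intervals from the footnote matter; with half-open intervals the gluing at $\tau$ would fail. We also need the inner $\mathsf{U}$ to be evaluated at time $\tau$ rather than at $0$; the semantics guarantees this, since nested operators are evaluated at the outer witness time.

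Optionally, one can confirm the result at the level of robustness scores by noting that $\max_\tau\min$ splits over the ordering of $\tau_1,\tau_2$. This is not needed for the stated $\equiv$, which is purely semantic.
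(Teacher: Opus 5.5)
Your proof is correct and is essentially the paper's argument. Both use double entailment, take the earlier of the two \texttt{Until} witnesses as the outer witness with the later one witnessing the inner $\mathsf{q}_j\,\mathsf{U}\,\mathsf{r}_j$, and split on the disjunct for the converse.

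One correction to your closing remark: the identity does not depend on the inclusive convention. With half-open intervals the gluing works just as well, since $[0,\tau)\cup[\tau,\tau')=[0,\tau')$. In fact the paper's own proof is written with half-open bounds $0\le k<t_i$, so your closed-interval bookkeeping is, if anything, the closer match to the stated definition.
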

\end{tcolorbox}
\proof
    We show this via double entailment.

    \,

    \noindent{{{1. LHS $\models$ RHS}}}:

    Suppose $\sigma, 0 \models \mathsf{q}_1 \,\mathsf{U}\, \mathsf{r}_1 \land \mathsf{q}_2 \,\mathsf{U}\, \mathsf{r}_2$. Then,
    \begin{enumerate}
        \item Since $\sigma, 0 \models \mathsf{q}_1 \,\mathsf{U}\, \mathsf{r}_1$, there exists $t_1 \geq 0$ such that $\sigma, t_1 \models \mathsf{r}_1$, and for all $0 \leq k < t_1$, $\sigma, k \models \mathsf{q}_1$.
        \item Since $\sigma, 0 \models \mathsf{q}_2 \,\mathsf{U}\, \mathsf{r}_2$, there exists $t_2 \geq 0$ such that $\sigma, t_2 \models \mathsf{r}_2$, and for all $0 \leq k < t_2$, $\sigma, k \models \mathsf{q}_2$.
    \end{enumerate}
    Let $t = \min(t_1, t_2)$. Since $\sigma, k \models \mathsf{q}_1$ and $\sigma, k \models \mathsf{q}_2$ for all $0 \leq k < t$, we have $\sigma, k \models \mathsf{q}_1 \land \mathsf{q}_2$ for all $0 \leq k < t$.
    
    We now show that the goal is reached at time $t$.
    \begin{itemize}
        \item ($t_1 \leq t_2$): Then, $t = t_1$, and $\sigma, t \models \mathsf{r}_1$. Since $t_2 \geq t_1$ and $\sigma, k \models \mathsf{q}_2$ for all $t \leq k < t_2$, we have $\sigma, t \models \mathsf{q}_2 \,\mathsf{U}\, \mathsf{r}_2$. Hence, $\sigma, t \models \mathsf{r}_1 \land \mathsf{q}_2 \,\mathsf{U}\, \mathsf{r}_2$.
        \item ($t_2 < t_1$): Then, $t = t_2$, and $\sigma, t \models \mathsf{r}_2$. Since $t_1 > t_2$ and $\sigma, k \models \mathsf{q}_1$ for all $t \leq k < t_1$, we have $\sigma, t \models \mathsf{q}_1 \,\mathsf{U}\, \mathsf{r}_1$. Hence, $\sigma, t \models \mathsf{r}_2 \land \mathsf{q}_1 \,\mathsf{U}\, \mathsf{r}_1$.
    \end{itemize}
    Thus, $\sigma, 0 \models (\mathsf{q}_1 \land \mathsf{q}_2) \,\mathsf{U}\, \bigl( (\mathsf{r}_1 \land \mathsf{q}_2 \,\mathsf{U}\, \mathsf{r}_2) \lor (\mathsf{r}_2 \land \mathsf{q}_1 \,\mathsf{U}\, \mathsf{r}_1) \bigr)$.

    \,

    \noindent{{{2. RHS $\models$ LHS}}}:

    Suppose $\sigma, 0 \models (\mathsf{q}_1 \land \mathsf{q}_2) \,\mathsf{U}\, \bigl( (\mathsf{r}_1 \land \mathsf{q}_2 \,\mathsf{U}\, \mathsf{r}_2) \lor (\mathsf{r}_2 \land \mathsf{q}_1 \,\mathsf{U}\, \mathsf{r}_1) \bigr)$.
    Then, there exists $t \geq 0$ such that 
    \begin{itemize}
        \item $\sigma, t \models (\mathsf{r}_1 \land \mathsf{q}_2 \,\mathsf{U}\, \mathsf{r}_2) \lor (\mathsf{r}_2 \land \mathsf{q}_1 \,\mathsf{U}\, \mathsf{r}_1)$
        \item For all $0 \leq k < t$, $\sigma, k \models \mathsf{q}_1 \land \mathsf{q}_2$.
    \end{itemize}
    
    We now split into two cases.
    \begin{enumerate}
        \item ($\sigma, t \models \mathsf{r}_1 \land \mathsf{q}_2 \,\mathsf{U}\, \mathsf{r}_2$):
        \begin{itemize}
            \item $\sigma, t \models \mathsf{r}_1$
            \item Since $\sigma, k \models \mathsf{q}_1$ for all $0 \leq k < t$, we have $\sigma, 0 \models \mathsf{q}_1 \,\mathsf{U}\, \mathsf{r}_1$.
            \item There exists $t_2 \geq t$ such that $\sigma, t_2 \models \mathsf{r}_2$, and $\sigma, k \models \mathsf{q}_2$ for all $t \leq k < t_2$.
            \item Since $\sigma, k \models \mathsf{q}_2$ for all $0 \leq k < t_2$, we have $\sigma, 0 \models \mathsf{q}_2 \,\mathsf{U}\, \mathsf{r}_2$.
            \item Thus, $\sigma, 0 \models \mathsf{q}_1 \,\mathsf{U}\, \mathsf{r}_1 \land \mathsf{q}_2 \,\mathsf{U}\, \mathsf{r}_2$.
        \end{itemize}

        \item ($\sigma, t \models \mathsf{r}_1 \land \mathsf{q}_2 \,\mathsf{U}\, \mathsf{r}_2$):
        The reasoning is symmetric to the previous case, yielding $\sigma, 0 \models \mathsf{q}_1 \,\mathsf{U}\, \mathsf{r}_1 \land \mathsf{q}_2 \,\mathsf{U}\, \mathsf{r}_2$.
    \end{enumerate}
    Thus, $\sigma, 0 \models \mathsf{q}_1 \,\mathsf{U}\, \mathsf{r}_1 \land \mathsf{q}_2 \,\mathsf{U}\, \mathsf{r}_2$.

    Since we have shown both directions, the equivalence holds.
\endproof

\begin{tcolorbox}[DefFrame]
    \begin{lemma} \label{lem:symbolic:n_U_and}
        \begin{equation*}
            \mathsf{p} := \bigwedge_{i=1}^n (\mathsf{q}_i \,\mathsf{U}\, \mathsf{r}_i) \equiv \Bigl( \bigwedge_{i=1}^n \mathsf{q}_i \Bigr) \,\mathsf{U}\, \Bigl( \bigvee_{i=1}^n \bigl( \mathsf{r}_i \land \mathsf{p}^{-i} \bigr) \Bigr)
        \end{equation*}
        where $\mathsf{p}^{-i} := \bigwedge_{j=1, j \neq i}^n (\mathsf{q}_j \,\mathsf{U}\, \mathsf{r}_j)$.
    \end{lemma}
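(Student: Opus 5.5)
We prove Lem.~\ref{lem:symbolic:n_U_and} by double entailment, generalizing the argument of Lem.~\ref{lem:symbolic:U_and} directly rather than by induction. The key device is to take the \emph{earliest} goal-arrival time among the $n$ \texttt{Until} witnesses. We use the HJR convention in which $\mathsf{U}$ includes the endpoint, so a witness $\tau$ for $\mathsf{q}\,\mathsf{U}\,\mathsf{r}$ requires $\mathsf{q}$ on $[t,\tau]$. For $n=1$ the right-hand side is $\mathsf{q}_1\,\mathsf{U}\,(\mathsf{r}_1\land\top)$, since the empty conjunction $\mathsf{p}^{-1}$ is $\top$, and the claim is trivial. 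We therefore assume $n\ge 2$.

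\textbf{LHS $\models$ RHS.} Suppose $(\sigma,0)\models\mathsf{p}$. For each $i$ pick a witness $t_i$ with $(\sigma,t_i)\models\mathsf{r}_i$ and $(\sigma,\kappa)\models\mathsf{q}_i$ for all $\kappa\in[0,t_i]$. Let $t:=\min_i t_i$, attained at some index $i^\star$. Then every $\mathsf{q}_j$ holds on $[0,t]\subseteq[0,t_j]$, so $\bigwedge_j\mathsf{q}_j$ holds on $[0,t]$. Also $(\sigma,t)\models\mathsf{r}_{i^\star}$. For each $j\ne i^\star$ we have $t_j\ge t$, and $\mathsf{q}_j$ holds on $[t,t_j]$ with $\mathsf{r}_j$ at $t_j$, so the same $t_j$ witnesses $(\sigma,t)\models\mathsf{q}_j\,\mathsf{U}\,\mathsf{r}_j$. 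Hence $(\sigma,t)\models\mathsf{r}_{i^\star}\land\mathsf{p}^{-i^\star}$, which makes the disjunction true at $t$, and $t$ witnesses the outer $\mathsf{U}$.

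\textbf{RHS $\models$ LHS.} Suppose $t$ witnesses the outer $\mathsf{U}$. Then $\bigwedge_j\mathsf{q}_j$ holds on $[0,t]$, and some index $i$ has $(\sigma,t)\models\mathsf{r}_i\land\mathsf{p}^{-i}$.
\begin{itemize}
\item For index $i$: $\mathsf{q}_i$ holds on $[0,t]$ and $\mathsf{r}_i$ holds at $t$, so $(\sigma,0)\models\mathsf{q}_i\,\mathsf{U}\,\mathsf{r}_i$.
\item For each $j\ne i$: $\mathsf{p}^{-i}$ at time $t$ gives some $t_j\ge t$ with $\mathsf{r}_j$ at $t_j$ and $\mathsf{q}_j$ on $[t,t_j]$. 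Concatenating with $\mathsf{q}_j$ on $[0,t]$ yields $\mathsf{q}_j$ on $[0,t_j]$, so $(\sigma,0)\models\mathsf{q}_j\,\mathsf{U}\,\mathsf{r}_j$.
\end{itemize}
Conjoining over all indices gives $(\sigma,0)\models\mathsf{p}$.

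The argument is symmetric in the start time, so it holds at any $t_0$ in place of $0$. This matters because the lemma is later applied inside larger formulas.

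The main point needing care is the interval bookkeeping at the boundary time $t$. With the endpoint-inclusive $\mathsf{U}$, the interval $[0,t]$ is covered twice, once by the outer $\mathsf{U}$ and once by the inner $\mathsf{U}$ starting at $t$. This overlap is harmless. Ties in the $\arg\min$ over $t_i$ are also harmless, since any minimizing index works.
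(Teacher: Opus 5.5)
Your proof is correct, but it takes a different route from the paper. The paper proves Lemma~\ref{lem:symbolic:n_U_and} by induction on $n$. Its base case $n=2$ is Lemma~\ref{lem:symbolic:U_and}, whose proof is exactly your earliest-witness argument for two indices. Its inductive step applies the two-\texttt{Until} identity to $\tilde{\mathsf{q}}\,\mathsf{U}\,\tilde{\mathsf{r}}$ and $\mathsf{q}_{k+1}\,\mathsf{U}\,\mathsf{r}_{k+1}$, distributes $\land$ over $\lor$, and then uses the induction hypothesis in reverse to collapse $\mathsf{r}_{k+1}\land \tilde{\mathsf{q}}\,\mathsf{U}\,\tilde{\mathsf{r}}$ back into $\mathsf{r}_{k+1}\land\mathsf{p}^{-(k+1)}$.

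You instead lift the base-case argument directly to $n$ indices by taking $t=\min_i t_i$. This route is self-contained and uniform in $n$ (it even covers $n=1$). It also avoids two facts the induction uses without comment:
\begin{itemize}
\item that the binary lemma still holds when its goal formula is a compound temporal formula such as $\tilde{\mathsf{r}}$ rather than an atomic predicate;
\item that the induction hypothesis is valid at every start time, since it is substituted inside the outer $\mathsf{U}$. Your remark about an arbitrary start time $t_0$ makes this explicit.
\end{itemize}
Your interval bookkeeping also works verbatim under both conventions: the endpoint-inclusive $\mathsf{U}$ of the paper's definition, and the endpoint-exclusive one actually used in the paper's proof of Lemma~\ref{lem:symbolic:U_and}.

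In exchange, the paper's induction reuses the binary lemma as a black box. It presents the $n$-ary form as iterated binary compositions, which mirrors how the decomposition is applied recursively elsewhere in the paper.
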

\end{tcolorbox}
\proof
    We prove this using induction on $n$.
    
    \,

    \noindent{Base Case ($n=2$):} This is exactly the previous lemma \ref{lem:symbolic:U_and}.

    \,

    \noindent{Inductive Step:} Assume the statement holds for $n = k$, i.e.,
    \begin{equation*}
        \bigwedge_{i=1}^k (\mathsf{q}_i \,\mathsf{U}\, \mathsf{r}_i) \equiv \underbrace{\Bigl( \bigwedge_{i=1}^k \mathsf{q}_i \Bigr)}_{\coloneqq \tilde{\mathsf{q}}} \,\mathsf{U}\, \underbrace{\Bigl( \bigvee_{i=1}^k \bigl( \mathsf{r}_i \land \bigwedge_{j=1, j \neq i}^n (\mathsf{q}_j \,\mathsf{U}\, \mathsf{r}_j) \bigr) \Bigr)}_{\coloneqq \tilde{\mathsf{r}}}.
    \end{equation*}
    We need to show it holds for $n = k + 1$.
    \begin{align*}
        \bigwedge_{i=1}^{k+1} (\mathsf{q}_i \,\mathsf{U}\, \mathsf{r}_i)
        &= \Bigl( \bigwedge_{i=1}^k (\mathsf{q}_i \,\mathsf{U}\, \mathsf{r}_i) \Bigr) \land (\mathsf{q}_{k+1} \,\mathsf{U}\, \mathsf{r}_{k+1}) \\
        &\equiv \tilde{\mathsf{q}} \,\mathsf{U}\, \tilde{\mathsf{r}} \land (\mathsf{q}_{k+1} \,\mathsf{U}\, \mathsf{r}_{k+1}) \\
        &\equiv ( \tilde{\mathsf{q}} \land \mathsf{q}_{k+1} ) \,\mathsf{U}\, \\ & \qquad ( ( \tilde{\mathsf{r}} \land \mathsf{q}_{k+1} \,\mathsf{U}\, \mathsf{r}_{k+1} ) \lor ( \mathsf{r}_{k+1} \land \tilde{\mathsf{q}} \,\mathsf{U}\, \tilde{\mathsf{r}} ) )
    \end{align*}
    Note that $\tilde{\mathsf{q}} \land \mathsf{q}_{k+1} = \bigwedge_{i=1}^{k+1} \mathsf{q}_i$. For the first part,
    \begin{align*}
        \tilde{\mathsf{r}} \land \mathsf{q}_{k+1} \,\mathsf{U}\, \mathsf{r}_{k+1}
        &= \bigvee_{i=1}^k \Bigl( \mathsf{r}_i \land \bigwedge_{j=1, j \neq i}^k (\mathsf{q}_j \,\mathsf{U}\, \mathsf{r}_j) \Bigr) \land \mathsf{q}_{k+1} \,\mathsf{U}\, \mathsf{r}_{k+1} \\
        &= \bigvee_{i=1}^k \Bigl( \mathsf{r}_i \land \bigwedge_{j=1, j \neq i}^k (\mathsf{q}_j \,\mathsf{U}\, \mathsf{r}_j)  \land \mathsf{q}_{k+1} \,\mathsf{U}\, \mathsf{r}_{k+1} \Bigr) \\
        &= \bigvee_{i=1}^k \Bigl( \mathsf{r}_i \land \bigwedge_{j=1, j \neq i}^{k+1} (\mathsf{q}_j \,\mathsf{U}\, \mathsf{r}_j)  \land \mathsf{q}_{k+1} \,\mathsf{U}\, \mathsf{r}_{k+1} \Bigr).
    \end{align*}
    For the second part,
    \begin{align*}
        \mathsf{r}_{k+1} \land \tilde{\mathsf{q}} \,\mathsf{U}\, \tilde{\mathsf{r}}
        &= \mathsf{r}_{k+1} \land \bigwedge_{i=1}^k ( \mathsf{q}_i \,\mathsf{U}\, \mathsf{r}_i ), \\
        &= \mathsf{r}_{k+1} \land \bigwedge_{j=1, j \neq k+1}^{k+1} ( \mathsf{q}_j \,\mathsf{U}\, \mathsf{r}_j ).
    \end{align*}
    Combining these two parts completes the inductive step:
    \begin{equation*}
        \bigvee_{i=1}^{k+1} \Bigl( \mathsf{r}_i \land \bigwedge_{j=1, j \neq i}^{k+1} (\mathsf{q}_j \,\mathsf{U}\, \mathsf{r}_j) \Bigr).
    \end{equation*}

    Since the base case and inductive step hold, the statement holds for all $n \geq 2$.
\endproof

\begin{tcolorbox}[DefFrame]
    \begin{coroll} \label{lem:symbolic:U_and_G}
        \begin{equation*}
            \mathsf{p} := \bigwedge_{i=1}^n (\mathsf{q}_i \,\mathsf{U}\, \mathsf{r}_i) \land \mathsf{Gq} \equiv \Bigl( \bigwedge_{i=1}^n \mathsf{q}_i \land \mathsf{q}\Bigr) \,\mathsf{U}\, \Bigl( \bigvee_{i=1}^n \bigl( \mathsf{r}_i \land \mathsf{p}^{-i} \bigr) \Bigr)
        \end{equation*}
        where $\mathsf{p}^{-i} := \bigwedge_{j=1, j \neq i}^n (\mathsf{q}_j \,\mathsf{U}\, \mathsf{r}_j) \land \mathsf{Gq}$.
    \end{coroll}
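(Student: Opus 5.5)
The plan is to reduce the corollary to Lem.~\ref{lem:symbolic:n_U_and} by treating the safety specification $\mathsf{Gq}$ as one more \texttt{Until} conjunct. The key observation is the identity $\mathsf{Gq} \equiv \mathsf{q}\,\mathsf{U}\,\bot$ fails, since $\bot$ is never reached. So I would not encode it as an extra index. Instead I will absorb $\mathsf{Gq}$ into each existing conjunct, using the pointwise identity
\begin{equation*}
(\mathsf{q}_i \,\mathsf{U}\, \mathsf{r}_i) \land \mathsf{Gq} \equiv \bigl((\mathsf{q}_i\land\mathsf{q}) \,\mathsf{U}\, (\mathsf{r}_i \land \mathsf{Gq})\bigr) .
\end{equation*}
This holds because the HJR convention includes the arrival time in $\mathsf{U}$. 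For the forward direction, the witness $\tau$ for $\mathsf{U}$ has $\mathsf{q}$ at all $\kappa\le\tau$, and $\mathsf{Gq}$ holds at $\tau$. For the converse, $\mathsf{q}$ holds on $[0,\tau]$ and $\mathsf{Gq}$ holds from $\tau$ onward, which together give $\mathsf{Gq}$ at $0$.

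With this in hand, the direct route is a double entailment that mirrors Lem.~\ref{lem:symbolic:U_and}, extended by the induction of Lem.~\ref{lem:symbolic:n_U_and}.

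\textbf{LHS $\models$ RHS.} Take witness times $t_i$ for each $\mathsf{q}_i\,\mathsf{U}\,\mathsf{r}_i$ and let $t=\min_i t_i$, attained at some index $i^*$. Before and at $t$, every $\mathsf{q}_j$ holds (since $t\le t_j$), and $\mathsf{q}$ holds by $\mathsf{Gq}$. At time $t$, $\mathsf{r}_{i^*}$ holds. Each remaining $\mathsf{q}_j\,\mathsf{U}\,\mathsf{r}_j$ holds from $t$ using the same witness $t_j\ge t$, and $\mathsf{Gq}$ holds from $t$ as a suffix of $\mathsf{Gq}$ at $0$. So $\mathsf{r}_{i^*}\land\mathsf{p}^{-i^*}$ holds at $t$.

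\textbf{RHS $\models$ LHS.} Take the $\mathsf{U}$-witness $t$ and the disjunct $i^*$. For $j\ne i^*$, the witness $t_j\ge t$ from $\mathsf{p}^{-i^*}$, combined with $\mathsf{q}_j$ on $[0,t]$, yields $\mathsf{q}_j\,\mathsf{U}\,\mathsf{r}_j$ at $0$. The conjunct $i^*$ holds with witness $t$. $\mathsf{Gq}$ at $0$ follows from $\mathsf{q}$ on $[0,t]$ together with $\mathsf{Gq}$ at $t$.

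Alternatively, one can obtain the result more structurally by applying Lem.~\ref{lem:symbolic:n_U_and} to the absorbed conjuncts $(\mathsf{q}_i\land\mathsf{q})\,\mathsf{U}\,(\mathsf{r}_i\land\mathsf{Gq})$ and then simplifying each disjunct back using the absorption identity. The $\mathsf{Gq}$ attached to $\mathsf{r}_i$ then re-absorbs the remaining conjuncts into $\mathsf{p}^{-i}$.

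The main obstacle is bookkeeping at the endpoint. Lem.~\ref{lem:symbolic:U_and}'s proof uses strict $k<t$, while Def.~\ref{def:tlpayoff} includes $\tau$. I would carry the closed interval $[0,t]$ throughout so that the $\mathsf{q}$ and $\mathsf{q}_j$ conditions at the witness time are handled consistently. In particular, at time $t$ I need $\mathsf{q}_j$ for $j\ne i^*$, and this comes from $\mathsf{q}_j\,\mathsf{U}\,\mathsf{r}_j$ evaluated at $t$.
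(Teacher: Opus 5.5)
Your proof is correct, but it takes a different route from the paper. The paper reduces the corollary to Lem.~\ref{lem:symbolic:n_U_and} in one line. It observes that $\mathsf{Gq} \equiv \mathsf{q}\,\mathsf{U}\,\mathsf{Gq}$, an Until whose target is the safety spec itself, and treats this as an $(n+1)$-th conjunct. So your remark that $\mathsf{Gq}$ cannot be encoded as an extra index is too hasty: $\mathsf{q}\,\mathsf{U}\,\bot$ does fail, but the self-referential target $\mathsf{Gq}$ works.

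Taken literally, though, the paper's reduction leaves two loose ends.
\begin{itemize}
\item Index $n+1$ contributes an extra disjunct, $\mathsf{Gq} \land \bigwedge_{j\le n}(\mathsf{q}_j\,\mathsf{U}\,\mathsf{r}_j)$, which is just $\mathsf{p}$, and the paper drops it silently. Dropping it needs a small extra witness argument. Since $\mathsf{Gq}$ holds everywhere, choose the witness for $\mathsf{q}\,\mathsf{U}\,\mathsf{Gq}$ to be at least $\max_{i\le n} t_i$, so the minimum is attained at some $i\le n$. The converse direction only gets easier with fewer disjuncts.
\item It inherits the strict-versus-closed interval inconsistency in the proof of Lem.~\ref{lem:symbolic:U_and}.
\end{itemize}

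Your direct double entailment avoids both issues and is self-contained, at the cost of redoing the min-witness argument. Your absorption route, $(\mathsf{q}_i\,\mathsf{U}\,\mathsf{r}_i)\land\mathsf{Gq} \equiv (\mathsf{q}_i\land\mathsf{q})\,\mathsf{U}\,(\mathsf{r}_i\land\mathsf{Gq})$, reuses Lem.~\ref{lem:symbolic:n_U_and} with exactly $n$ conjuncts. No stray disjunct appears, and the $\mathsf{Gq}$ factors recombine directly into $\mathsf{p}^{-i}$.

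One minor point: the absorption identity, and the corollary itself, hold under either endpoint convention for $\mathsf{U}$. Attributing the identity to the HJR convention is therefore unnecessary, though harmless.
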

\end{tcolorbox}
\proof
    It suffices to show that $\mathsf{Gq} = \mathsf{q} \, \mathsf{U} \, \tilde{\mathsf{r}}$ where $\tilde{\mathsf{r}} = \mathsf{Gq}$. This follows directly from the definition of $\mathsf{G}$ and $\mathsf{U}$,
    \begin{align*}
        \sigma, 0 \models \mathsf{Gq} &\iff \forall t \geq 0, \sigma, t \models \mathsf{q} \\
        &\iff \exists t' \geq 0 \text{ s.t. } \sigma, t' \models {\mathsf{Gq}} \\ 
        & \qquad \qquad \text{ and } \forall 0 \leq t < t', \sigma, t \models \mathsf{q} \\
        &\iff \sigma, 0 \models \mathsf{q} \, \mathsf{U} \, \tilde{\mathsf{r}}.
\end{align*}
\endproof

Additionally, we can show this kind of rearrangement for the $\mathsf{GU}$ composition as well, given by the following result.

\begin{tcolorbox}[DefFrame]
    \begin{lemma} \label{lem:symbolic:G_of_U}
        \begin{equation*}
            \mathsf{G} (\mathsf{q} \, \mathsf{U} \, \mathsf{r}) \equiv \mathsf{q} \, \mathsf{U} \, (\mathsf{r} \land \mathsf{X}\mathsf{G} (\mathsf{q} \, \mathsf{U} \, \mathsf{r}))
        \end{equation*}
    \end{lemma}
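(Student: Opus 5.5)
We prove the equivalence $\mathsf{G}(\mathsf{q}\,\mathsf{U}\,\mathsf{r}) \equiv \mathsf{q}\,\mathsf{U}\,(\mathsf{r}\land\mathsf{X}\mathsf{G}(\mathsf{q}\,\mathsf{U}\,\mathsf{r}))$ by double entailment, in the style of Lemma~\ref{lem:symbolic:U_and}. We use the appendix semantics, where $\mathsf{U}$ includes the endpoint: $\kappa\in[t,\tau]$. Write $\mathsf{p}:=\mathsf{G}(\mathsf{q}\,\mathsf{U}\,\mathsf{r})$ and fix a trajectory $\sigma$ and time $t$; by time-shifting it suffices to take $t=0$.

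\textbf{LHS $\models$ RHS.} Suppose $\sigma,0\models\mathsf{p}$, so $\sigma,k\models\mathsf{q}\,\mathsf{U}\,\mathsf{r}$ for every $k\ge 0$. Instantiate at $k=0$: there is $\tau\ge 0$ with $\sigma,\tau\models\mathsf{r}$ and $\sigma,\kappa\models\mathsf{q}$ for all $\kappa\in[0,\tau]$. We take this same $\tau$ as the witness for the outer $\mathsf{U}$ on the right-hand side. The $\mathsf{q}$-prefix condition is inherited directly. It remains to show $\sigma,\tau\models\mathsf{X}\mathsf{p}$, i.e.\ $\sigma,\tau+1\models\mathsf{p}$. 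This means $\sigma,k\models\mathsf{q}\,\mathsf{U}\,\mathsf{r}$ for all $k\ge\tau+1$, which follows immediately because $\mathsf{p}$ holds at $0$ and hence at every later time ($\mathsf{G}$ is suffix-closed).

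\textbf{RHS $\models$ LHS.} Suppose there is $\tau\ge 0$ with $\sigma,\kappa\models\mathsf{q}$ for $\kappa\in[0,\tau]$, $\sigma,\tau\models\mathsf{r}$, and $\sigma,\tau+1\models\mathsf{p}$. We must show $\sigma,k\models\mathsf{q}\,\mathsf{U}\,\mathsf{r}$ for every $k\ge0$, and we split on $k$.
\begin{itemize}
\item If $k\ge\tau+1$, this is given directly by $\sigma,\tau+1\models\mathsf{p}$.
\item If $k\le\tau$, we use $\tau$ itself as the witness: $\sigma,\tau\models\mathsf{r}$, and $\mathsf{q}$ holds on $[k,\tau]\subseteq[0,\tau]$.
\end{itemize}
Hence $\sigma,0\models\mathsf{G}(\mathsf{q}\,\mathsf{U}\,\mathsf{r})$.

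The argument is routine. The one step needing care is the second case of the converse, where the witness time must lie in $[k,\tau]$ with $\mathsf{q}$ holding throughout, including at $\tau$. This is exactly where the endpoint-inclusive convention for $\mathsf{U}$ (footnote to Def.~\ref{def:tlpayoff}) is used. Under the strict convention $\kappa\in[t,\tau)$, $\mathsf{q}$ need not hold at $\tau$. Then at $k=\tau+1$, the inner $\mathsf{U}$ from $\tau$ would require $\mathsf{q}$ at $\tau$, which is not guaranteed by the right-hand side, and the identity would need the modified form discussed in that footnote.
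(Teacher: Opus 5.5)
Your proof is correct and uses the same double-entailment argument as the paper, with the same witness time. In the converse, your explicit split into $k\le\tau$ (witness $\tau$) and $k\ge\tau+1$ (given by $\sigma,\tau+1\models\mathsf{p}$) supplies a step the paper skips. The paper only calls $\mathsf{G}(\mathsf{q}\,\mathsf{U}\,\mathsf{r})$ a ``tail property,'' which by itself says nothing about the prefix $[0,\tau]$.

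Your closing remark, however, is wrong. Under the strict convention $\kappa\in[t,\tau)$ the identity holds by the identical argument. For $k\le\tau$ you only need $\mathsf{q}$ on $[k,\tau)\subseteq[0,\tau)$. For $k\ge\tau+1$, including $k=\tau+1$, the inner Until is supplied directly by $\sigma,\tau+1\models\mathsf{p}$ and never looks at $\mathsf{q}$ at time $\tau$. The paper's own proof is in fact written with the strict prefix $0\le t'<s_t$.

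The convention in that footnote matters for the multi-Until loop of Def.~\ref{def:predloop}. There, $\mathsf{G}\bigl(\bigwedge_i(\mathsf{q}_i\,\mathsf{U}\,\mathsf{r}_i)\bigr)$ implies the shared avoid requirement $\mathsf{G}\bigwedge_j\mathsf{q}_j$ only under the endpoint-inclusive reading. For $N=1$ the footnote's $\tilde{\mathsf{w}}_1$ is an empty conjunction, so the two forms coincide, consistent with this lemma being convention-independent.
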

\end{tcolorbox}
\proof
    We show this via double entailment.
    
    \,

    \noindent{1. (LHS $\models$ RHS)}
    Suppose $\sigma, 0 \models \mathsf{G} (\mathsf{q} \, \mathsf{U} \, \mathsf{r})$.
    \begin{itemize}
        \item For all $t \geq 0$, there exists $s_t \geq t$ such that $\sigma, s_t \models \mathsf{r}$ and $\forall 0 \leq t' < s_t, \sigma, t' \models \mathsf{q}$. In particular, for $t = 0$, there exists $s_0 \geq 0$ such that $\sigma, s_0 \models \mathsf{r}$.
        \item Since $\mathsf{G} (\mathsf{q} \, \mathsf{U} \, \mathsf{r})$ is a tail property, we have $\sigma, s_0 + 1 \models \mathsf{G} (\mathsf{q} \, \mathsf{U} \, \mathsf{r})$.
        \item Thus, $\sigma, s_0 \models \mathsf{r} \land \mathsf{X} \mathsf{G} (\mathsf{q} \, \mathsf{U} \, \mathsf{r})$. 
        \item Hence, $\sigma, 0 \models \mathsf{q} \, \mathsf{U} \, (\mathsf{r} \land \mathsf{X} \mathsf{G} (\mathsf{q} \, \mathsf{U} \, \mathsf{r}))$.
    \end{itemize}

    \,

    \noindent{2. (RHS $\models$ LHS)}
    Suppose $\sigma, 0 \models \mathsf{q} \, \mathsf{U} \, (\mathsf{r} \land \mathsf{X} \mathsf{G} (\mathsf{q} \, \mathsf{U} \, \mathsf{r}))$.
    \begin{itemize}
        \item By definition of $\mathsf{U}$, there exists $t_0 \geq 0$ such that $\sigma, t_0 \models \mathsf{r} \land \mathsf{X} \mathsf{G} (\mathsf{q} \, \mathsf{U} \, \mathsf{r})$ s.t. $\forall 0 \leq t < t_0, \sigma, t \models \mathsf{q}$.
        \item The conjunction implies that $\sigma, t_0 + 1 \models \mathsf{G} (\mathsf{q} \, \mathsf{U} \, \mathsf{r})$.
        \item Since $\mathsf{G} (\mathsf{q} \, \mathsf{U} \, \mathsf{r})$ is a tail property, this implies that $\sigma, 0 \models \mathsf{G} (\mathsf{q} \, \mathsf{U} \, \mathsf{r})$.
    \end{itemize}

    Since we have shown both directions, the equivalence holds.
\endproof

Next, we may extend this to the multi-Until case, in order to capture the behavior of multiple recurrent Until operators. Notably, in this case, the order does not matter, as all must be satisfied infinitely often. This is formalized in the following result.

\begin{tcolorbox}[DefFrame]
\begin{lemma}\label{lem:symbolic:G_of_double_U}
Given
$$
\mathsf{p}
:=
\mathsf{G}\big(
(\mathsf{q}_1 \, \mathsf{U} \, \mathsf{r}_1)
\land
(\mathsf{q}_2 \, \mathsf{U} \, \mathsf{r}_2)
\big),
$$
it holds that
\begin{align*}
\mathsf{p}
&\equiv
\tilde{\mathsf{q}}_1
\, \mathsf{U} \,
\Big(
\tilde{\mathsf{r}}_1
\land
\big(
\tilde{\mathsf{q}}_2
\, \mathsf{U} \,
(\tilde{\mathsf{r}}_2 \land \mathsf{p})
\big)
\Big) \\
&\equiv
\tilde{\mathsf{q}}_2
\, \mathsf{U} \,
\Big(
\tilde{\mathsf{r}}_2
\land
\big(
\tilde{\mathsf{q}}_1
\, \mathsf{U} \,
(\tilde{\mathsf{r}}_1 \land \mathsf{p})
\big)
\Big),
\end{align*}
where $\tilde{\mathsf{q}}_i := \mathsf{q}_i \land (\mathsf{q}_j \lor \mathsf{r}_j)$, $\tilde{\mathsf{r}}_i := \mathsf{r}_i \land (\mathsf{q}_j \lor \mathsf{r}_j)$.
\end{lemma}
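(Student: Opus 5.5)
The plan is a double entailment argument, in the same style as the proofs of Lem.~\ref{lem:symbolic:U_and} and Lem.~\ref{lem:symbolic:G_of_U}. Throughout, $j$ denotes the index other than $i$. I work with the witness convention used in those appendix proofs, namely that $\mathsf{q}$ must hold on $[t,\tau)$ before the arrival time $\tau$. The inclusive convention of Def.~\ref{def:tlpayoff} needs the modified predicates of the footnote to Def.~\ref{def:predloop}, and I flag below exactly where it matters. I prove only the first equivalence; the second follows by swapping the indices $1\leftrightarrow 2$.

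\emph{Two preliminary facts.} First, as in Lem.~\ref{lem:symbolic:G_of_U}, $\mathsf{p}$ is a tail property: $\sigma,0\models\mathsf{p}$ iff $\sigma,t\models\mathsf{p}$ for all $t\ge 0$. Second, if $\sigma,t\models \mathsf{q}_j\,\mathsf{U}\,\mathsf{r}_j$, then $\sigma,t\models \mathsf{q}_j\lor\mathsf{r}_j$. This holds because either the witness time equals $t$, so $\mathsf{r}_j$ holds at $t$, or it is later, so $\mathsf{q}_j$ holds at $t$.

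\textbf{LHS $\models$ RHS.} Assume $\sigma,0\models\mathsf{p}$. By the tail property and the second fact, $\mathsf{q}_1\lor\mathsf{r}_1$ and $\mathsf{q}_2\lor\mathsf{r}_2$ hold at every time.
\begin{enumerate}
\item Let $s_1$ be a witness of $\mathsf{q}_1\,\mathsf{U}\,\mathsf{r}_1$ at time $0$. Then $\mathsf{q}_1$ holds on $[0,s_1)$ and $\mathsf{r}_1$ holds at $s_1$.
\item Conjoining with the everywhere-true $\mathsf{q}_2\lor\mathsf{r}_2$ gives $\tilde{\mathsf{q}}_1$ on $[0,s_1)$ and $\tilde{\mathsf{r}}_1$ at $s_1$.
\item Let $s_2\ge s_1$ be a witness of $\mathsf{q}_2\,\mathsf{U}\,\mathsf{r}_2$ at time $s_1$, which exists by the tail property. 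Conjoining with $\mathsf{q}_1\lor\mathsf{r}_1$ gives $\tilde{\mathsf{q}}_2$ on $[s_1,s_2)$ and $\tilde{\mathsf{r}}_2$ at $s_2$.
\item By the tail property, $\mathsf{p}$ holds at $s_2$.
\end{enumerate}
Assembling these yields the nested until at time $0$.

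\textbf{RHS $\models$ LHS.} Unpacking the RHS gives times $t_1\le t_2$ with:
\begin{itemize}
\item $\tilde{\mathsf{q}}_1$ on $[0,t_1)$ and $\tilde{\mathsf{r}}_1$ at $t_1$;
\item $\tilde{\mathsf{q}}_2$ on $[t_1,t_2)$ and $\tilde{\mathsf{r}}_2\land\mathsf{p}$ at $t_2$.
\end{itemize}
By the tail property it suffices to show that both untils hold at every $t\in[0,t_2]$, since $\mathsf{p}$ at $t_2$ covers all later times. Over $[0,t_2]$ I establish that $\mathsf{q}_1\lor\mathsf{r}_1$ and $\mathsf{q}_2\lor\mathsf{r}_2$ hold pointwise, using the second conjuncts of the $\tilde{\cdot}$ predicates together with $\mathsf{r}_1$ at $t_1$ and $\mathsf{r}_2$ at $t_2$. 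In addition, $\mathsf{r}_1$ holds at $t_1$, $\mathsf{r}_2$ holds at $t_2$, and $\mathsf{p}$ at $t_2$ guarantees that both untils hold at $t_2$.

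For fixed $t\le t_2$ and each $i$, let $k$ be the first time $\ge t$ at which $\mathsf{r}_i$ holds. If some such time lies in $[t,t_2]$, minimality together with the pointwise $\mathsf{q}_i\lor\mathsf{r}_i$ forces $\mathsf{q}_i$ on $[t,k)$, so $k$ witnesses $\mathsf{q}_i\,\mathsf{U}\,\mathsf{r}_i$ at $t$. Otherwise $\mathsf{q}_i$ holds on $[t,t_2)$, and we concatenate this stretch with the until that holds at $t_2$.

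\textbf{Main obstacle.} The delicate step is this ``first-hit'' argument on $(t_1,t_2]$. There, the only information about index $1$ is the disjunction $\mathsf{q}_1\lor\mathsf{r}_1$, and the construction works precisely because the strict-until convention lets the minimal $\mathsf{r}_1$-time serve as the witness. Under the inclusive convention, $\mathsf{q}_1$ would also be required at the witness time, and the pointwise disjunction no longer suffices. That case is exactly what the modified $\tilde{\mathsf{w}}_i$ in the footnote is designed to repair, so I would check it separately by the same argument with the strengthened predicates.
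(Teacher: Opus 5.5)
Your proof is correct and is essentially the paper's own argument: the same double entailment through the tail property of $\mathsf{p}$, the same witnesses $s_1\le s_2$ in the forward direction, and a first-hit argument on $[0,t_2]$ that makes explicit the step the paper compresses into ``using the witnesses $k_1$ and $k_2$ and the safety conditions above.'' One correction to your closing remark: the footnote's $\tilde{\mathsf{w}}_i$ form is the paper's version for the non-inclusive (``Otherwise'') convention, i.e.\ exactly this lemma's setting, while under the inclusive convention the stated equivalence is simply false (take $\mathsf{q}_1,\mathsf{r}_2,\lnot\mathsf{q}_2,\lnot\mathsf{r}_1$ at time $0$ and all four propositions true afterwards: the RHS holds with $t_1=t_2=1$, but $\mathsf{q}_2\,\mathsf{U}\,\mathsf{r}_2$ fails at $0$), so there is nothing further to check there.
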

\end{tcolorbox}

\proof
We show this via double entailment. For brevity, let $\mathsf{w}_1 := (\mathsf{q}_1 \lor \mathsf{r}_1)$,
$\mathsf{w}_2 := (\mathsf{q}_2 \lor \mathsf{r}_2)$.

\,

\noindent{1. (LHS $\models$ RHS)}
Assume $\sigma,0 \models \mathsf{p}$.
\begin{itemize}
\item For all $t \ge 0$,
$\sigma,t \models
(\mathsf{q}_1 \mathsf{U} \mathsf{r}_1)
\land
(\mathsf{q}_2 \mathsf{U} \mathsf{r}_2)$.
Choose $k_1 \ge 0$ with
$\sigma,k_1 \models \mathsf{r}_1$
and
$\sigma,t \models \mathsf{q}_1$ for $t<k_1$.
Then $\sigma,t \models \mathsf{w}_2$ for $t \le k_1$,
so $\sigma,t \models \mathsf{q}_1 \land \mathsf{w}_2$ for $t<k_1$.
\item From $\sigma,k_1 \models
\mathsf{q}_2 \mathsf{U} \mathsf{r}_2$,
choose $k_2 \ge k_1$ with
$\sigma,k_2 \models \mathsf{r}_2$
and
$\sigma,t \models \mathsf{q}_2$ for $k_1 \le t < k_2$.
Since $\mathsf{p}$ holds globally,
$\sigma,t \models \mathsf{w}_1$ on $[k_1,k_2]$
and $\sigma,k_2 \models \mathsf{p}$.
\end{itemize}

Thus
$
\sigma,k_1 \models
\tilde{\mathsf{q}}_2
\, \mathsf{U} \,
(\tilde{\mathsf{r}}_2 \land \mathsf{p}),
$
so $\sigma,0$ satisfies the RHS.

\,

\noindent{2. (RHS $\models$ LHS)}
Assume $\sigma,0$ satisfies the RHS.
\begin{itemize}
\item There exists $k_1 \ge 0$ with
$\sigma,k_1 \models
\tilde{\mathsf{r}}_1 \land \Psi$
and
$\sigma,t \models
\mathsf{q}_1 \land \mathsf{w}_2$
for $t<k_1$, where
$$
\Psi :=
\tilde{\mathsf{q}}_2
\, \mathsf{U} \,
(\tilde{\mathsf{r}}_2 \land \mathsf{p}).
$$
\item From $\Psi$ there exists $k_2 \ge k_1$
with
$\sigma,k_2 \models
\tilde{\mathsf{r}}_2 \land \mathsf{p}$
and
$\sigma,t \models
\mathsf{q}_2 \land \mathsf{w}_1$
for $k_1 \le t < k_2$.
\end{itemize}

Since $\sigma,k_2 \models \mathsf{p}$,
the property
$(\mathsf{q}_1 \mathsf{U} \mathsf{r}_1)
\land
(\mathsf{q}_2 \mathsf{U} \mathsf{r}_2)$
holds for all $t \ge k_2$.
Using the witnesses $k_1$ and $k_2$
and the safety conditions above,
it also holds for all $t<k_2$.
Hence $\sigma,0 \models \mathsf{p}$.

Both directions hold, so the equivalence follows.
\endproof

We now give a logical equivalence for the general class of predicates considered in Thm.~\ref{thm:master}. 

\newcommand{\IGUSet}{\mathcal{I}}
\newcommand{\qgu}{\mathsf{q}}
\newcommand{\rgu}{\mathsf{r}}

\newcommand{\IUSet}{\mathcal{J}}
\newcommand{\qu}{\mathsf{q}}
\newcommand{\ru}{\mathsf{r}}

\newcommand{\qg}{\mathsf{q}}

\begin{tcolorbox}[DefFrame]
\begin{lemma} \label{lem:symbolic:master_GU_U_G}
    Consider the formula $\mathsf{p}_{\IGUSet, \IUSet}$ defined as
    \begin{equation*}
        \mathsf{p}_{\IGUSet, \IUSet} \coloneqq \bigwedge_{i \in \IGUSet} \mathsf{G}(\qgu_i\, \mathsf{U} \, \rgu_i) \land \bigwedge_{j \in \IUSet} (\mathsf{q}_j\, \mathsf{U} \, \mathsf{r}_j) \land \mathsf{G} \qg
    \end{equation*}
    Then, $\mathsf{p}_{\IGUSet, \IUSet}$ can be equivalently written as a single nested until formula:
    \begin{equation*}
        \mathsf{p}_{\IGUSet, \IUSet}
        \equiv \tilde{\mathsf{q}}_{\IGUSet, \IUSet} \, \mathsf{U} \, \tilde{\mathsf{r}}_{\IGUSet, \IUSet}, \\
    \end{equation*}
    where
    \begin{align*}
        \tilde{\mathsf{q}}_{\IGUSet, \IUSet}
        &\coloneqq \bigwedge_{j \in \IUSet} \qu_j \land \qg \land \bigwedge_{i \in \IGUSet} (\qgu_i \lor \rgu_i), \\
        \tilde{\mathsf{r}}_{\IGUSet, \IUSet}
        &\coloneqq \bigvee_{j \in \IUSet} \ru_j \land \Phi_{\IGUSet, \IUSet \setminus \{j \} }
    \end{align*}
    and
    \begin{equation*}
        \mathsf{p}_{\IGUSet, \emptyset}
        \equiv \mathsf{G}\Bigl( \bigwedge_{i \in \IGUSet} \bigl( \qgu_i \land \qg \bigr)\, \mathsf{U} \, \bigl( \rgu_i \land \qg \bigr) \Bigr).
    \end{equation*}
\end{lemma}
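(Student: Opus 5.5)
We argue by double entailment on an arbitrary trajectory $\sigma$, in the same style as Lem.~\ref{lem:symbolic:n_U_and} and Lem.~\ref{lem:symbolic:G_of_double_U}. We read $\Phi_{\IGUSet,\IUSet\setminus\{j\}}$ as $\mathsf{p}_{\IGUSet,\IUSet\setminus\{j\}}$, so the statement is self-referential in $|\IUSet|$. We treat the $\IUSet=\emptyset$ identity first and then the general until form.

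\textbf{Base form ($\IUSet=\emptyset$).} Here $\mathsf{p}_{\IGUSet,\emptyset}=\bigwedge_i\mathsf{G}(\qgu_i\,\mathsf{U}\,\rgu_i)\land\mathsf{G}\qg$. Since $\mathsf{G}$ distributes over $\land$, it suffices to show pointwise in time that, under $\mathsf{G}\qg$, $\qgu_i\,\mathsf{U}\,\rgu_i$ holds iff $(\qgu_i\land\qg)\,\mathsf{U}\,(\rgu_i\land\qg)$ holds. The forward direction is immediate, since $\qg$ holds at every time. For the converse, the strengthened until at time $t$ already forces $\qg$ at $t$, whether $t$ is the witness time or precedes it. Hence $\mathsf{G}$ of the strengthened conjunction implies $\mathsf{G}\qg$, provided $\IGUSet\neq\emptyset$; if $\IGUSet=\emptyset$ the statement degenerates to $\mathsf{G}\qg$.

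\textbf{LHS $\models$ RHS.} Take witnesses $t_j$ for each $\qu_j\,\mathsf{U}\,\ru_j$, $j\in\IUSet$, and let $t=t_{j^*}$ be the minimum. On $[0,t]$ we check each conjunct of $\tilde{\mathsf{q}}_{\IGUSet,\IUSet}$.
\begin{itemize}
\item Every $\qu_j$ holds there: for $j\neq j^*$ because $t\le t_j$, and for $j^*$ by the paper's inclusive-endpoint convention for $\mathsf{U}$.
\item $\qg$ holds there by $\mathsf{G}\qg$.
\item Every $\qgu_i\lor\rgu_i$ holds there, because $\qgu_i\,\mathsf{U}\,\rgu_i$ holding at any time $s$ forces $\qgu_i$ or $\rgu_i$ at $s$.
\end{itemize}
At time $t$ we have $\ru_{j^*}$. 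Moreover $\sigma,t\models\mathsf{p}_{\IGUSet,\IUSet\setminus\{j^*\}}$, for three reasons: the remaining untils keep their witnesses $t_j\ge t$, the $\mathsf{G}$-conjuncts are tail properties, and $\mathsf{G}\qg$ restricts to suffixes.

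\textbf{RHS $\models$ LHS.} Take a witness $t$ and an index $j^*$ with $\sigma,t\models\ru_{j^*}\land\mathsf{p}_{\IGUSet,\IUSet\setminus\{j^*\}}$, with $\tilde{\mathsf{q}}$ holding on $[0,t]$.
\begin{itemize}
\item $\qu_{j^*}\,\mathsf{U}\,\ru_{j^*}$ holds at $0$ with witness $t$.
\item For $j\neq j^*$, prepend $[0,t)$, on which $\qu_j$ holds, to the until that holds at $t$.
\item $\mathsf{G}\qg$ follows from $\qg$ on $[0,t]$ together with $\mathsf{G}\qg$ from $t$.
\end{itemize}
For $\mathsf{G}(\qgu_i\,\mathsf{U}\,\rgu_i)$, times $s\ge t$ are covered by the tail at $t$. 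For $s<t$ we argue by backward induction from $t$: $\qgu_i\,\mathsf{U}\,\rgu_i$ holds at $t$, and if it holds at $s+1$ while $\qgu_i\lor\rgu_i$ holds at $s$, then it holds at $s$. Either $\rgu_i$ holds at $s$, giving an immediate witness, or $\qgu_i$ holds at $s$ and we extend the until at $s+1$.

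\textbf{Main obstacle.} The main difficulty is bookkeeping rather than ideas. We must track the inclusive-endpoint convention, so that $\qu_j$ and $\qg$ are also needed at the witness time itself, and the backward-induction step for the $\mathsf{G}\mathsf{U}$ conjuncts. Together these justify why the weakened guard $\qgu_i\lor\rgu_i$ suffices in $\tilde{\mathsf{q}}$, rather than $\qgu_i$. Degenerate cases ($\IGUSet$ or $\IUSet$ empty) need a sentence each.
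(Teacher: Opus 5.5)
Your proof mixes two semantics of $\mathsf{U}$, and under the one you explicitly adopt, a step fails.

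In LHS $\models$ RHS you invoke the inclusive convention $\kappa\in[t,\tau]$ to get $\mathsf{q}_{j^*}$ at the witness time. But the backward induction in RHS $\models$ LHS (``either $\mathsf{r}_i$ holds at $s$, giving an immediate witness'') is valid only for the strict convention $\kappa\in[t,\tau)$. Inclusively, a witness $\tau=s$ still requires $\mathsf{q}_i$ at $s$, so $\mathsf{r}_i\land\lnot\mathsf{q}_i$ at $s$ does not yield $\mathsf{q}_i\,\mathsf{U}\,\mathsf{r}_i$ at $s$.

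This cannot be patched within the inclusive convention. There $\mathsf{G}(\mathsf{q}_i\,\mathsf{U}\,\mathsf{r}_i)\models\mathsf{G}\mathsf{q}_i$, and the statement with guard $\mathsf{q}_i\lor\mathsf{r}_i$ is false. Take $\mathcal{I}=\{1\}$, $\mathcal{J}=\{2\}$, and a trace where $\mathsf{r}_1,\mathsf{q}_2,\mathsf{q}$ are true and $\mathsf{q}_1,\mathsf{r}_2$ are false at time $0$, and every proposition is true from time $1$ on.
\begin{itemize}
\item $\tilde{\mathsf{q}}=\mathsf{q}_2\land\mathsf{q}\land(\mathsf{q}_1\lor\mathsf{r}_1)$ holds on $[0,1]$.
\item $\tilde{\mathsf{r}}=\mathsf{r}_2\land\mathsf{p}_{\mathcal{I},\emptyset}$ holds at $1$, so the right-hand side holds at $0$.
\item $\mathsf{q}_1\,\mathsf{U}\,\mathsf{r}_1$ fails at $0$ inclusively, so the left-hand side does not.
\end{itemize}
Your closing remark that the inclusive bookkeeping ``justifies why the weakened guard suffices'' therefore has it backwards. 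The weakened guard is exactly what the strict convention needs.

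The repair is to commit to one convention. The strict one is what the paper's own symbolic proofs actually use (e.g.\ Lem.~\ref{lem:symbolic:U_and}), and it is what the guard $\mathsf{q}_i\lor\mathsf{r}_i$ is designed for. Under it your argument goes through almost verbatim. The outer until only needs $\tilde{\mathsf{q}}$ on $[0,t)$, so the endpoint appeal for $\mathsf{q}_{j^*}$ disappears. Ties $t_j=t$ are harmless, and your base case and prepending steps hold under either convention. If you keep the paper's official inclusive definition instead, you must strengthen the guard to $\mathsf{q}_i$; then only the ``extend'' branch of your induction is needed.

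With that fixed, your trace-level double entailment is a different and more self-contained route than the paper's. The paper reduces $\bigwedge_{j}(\mathsf{q}_j\,\mathsf{U}\,\mathsf{r}_j)$ to a single until via Lem.~\ref{lem:symbolic:n_U_and}. It then simply asserts that $\bigwedge_i\mathsf{G}(\mathsf{q}_i\,\mathsf{U}\,\mathsf{r}_i)\land\mathsf{G}\mathsf{q}$ can be absorbed into that until. Your argument is in effect the missing proof of that absorption step.

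Your reading $\Phi_{\mathcal{I},\mathcal{J}\setminus\{j\}}=\mathsf{p}_{\mathcal{I},\mathcal{J}\setminus\{j\}}$ is also the right one. The paper's displayed $\Phi$ drops the $\mathsf{G}$ on the $\mathsf{q}_i\,\mathsf{U}\,\mathsf{r}_i$ conjuncts, which would make RHS $\models$ LHS fail.
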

\end{tcolorbox}
\proof
    We start by proving $\mathsf{p}_{\IGUSet, \emptyset}$. Then, we prove $\mathsf{p}_{\IGUSet, \IUSet}$.
    \begin{align*}
        \mathsf{p}_{\IGUSet, \emptyset}
        &\coloneqq \bigwedge_{i \in \IGUSet} \mathsf{G}(\qgu_i\, \mathsf{U} \, \rgu_i) \land \mathsf{G} \qg, \\
        &\equiv \mathsf{G}\Bigl( \bigwedge_{i \in \IGUSet} (\qgu_i\, \mathsf{U} \, \rgu_i) \land \qg \Bigr), \\
        &\equiv \mathsf{G}\Bigl( \bigwedge_{i \in \IGUSet} \bigl( \qgu_i \land \qg \bigr)\, \mathsf{U} \, \bigl( \rgu_i \land \qg \bigr)\Bigr).
    \end{align*}
    
    Now we prove $\mathsf{p}_{\IGUSet, \IUSet}$.
    Define $\tilde{\mathsf{r}}^{\text{U}, \IUSet}$ as the reward function obtained when applying the transformation to a conjunction of until formulas, i.e.,
    \begin{equation*}
        \tilde{\mathsf{r}}^{\text{U}, \IUSet} \coloneqq \bigvee_{j \in \IUSet} \Bigl\{ \ru_j \land \bigwedge_{j \in \IUSet \setminus \{j\}} (\qu_j\, \mathsf{U} \, \ru_j) \Bigr\}.
    \end{equation*}
    Then,
    \begin{align*}
        \mathsf{p}_{\IGUSet, \IUSet}
        &\coloneqq \bigwedge_{i \in \IGUSet} \mathsf{G}(\qgu_i\, \mathsf{U} \, \rgu_i) \land \bigwedge_{j \in \IUSet} (\mathsf{q}_j\, \mathsf{U} \, \mathsf{r}_j) \land \mathsf{G} \qg, \\
        &\equiv \bigwedge_{i \in \IGUSet} \mathsf{G}(\qgu_i\, \mathsf{U} \, \rgu_i) \land \tilde{\mathsf{q}}^{\text{U}, \IUSet}\, \mathsf{U} \, \tilde{\mathsf{r}}^{\text{U}, \IUSet} \land \mathsf{G} \qg, \\
        &\equiv \Bigl( \tilde{\mathsf{q}}^{\text{U}, \IUSet} \land \qg \land \bigwedge_{i \in \IGUSet} (\qgu_i \lor \rgu_i) \Bigr)\, \mathsf{U} \, \Bigl( \tilde{\mathsf{r}}^{\text{U}, \IUSet} \land \bigwedge_{i \in \IGUSet} \qgu_i\, \mathsf{U} \, \rgu_i \land \mathsf{G} \qg \Bigr).
    \end{align*}
    Examining the right argument of the $U$ operator, we see that
    \begin{align*}
        &\tilde{\mathsf{r}}^{\text{U}, \IUSet} \land \bigwedge_{i \in \IGUSet} \qgu_i\, \mathsf{U} \, \rgu_i \land \mathsf{G} \qg \\
        &= \Bigl( \bigvee_{j \in \IUSet} \Bigl\{ \ru_j \land \bigwedge_{j \in \IUSet \setminus \{j\}} (\qu_j\, \mathsf{U} \, \ru_j) \Bigr\} \Bigr) \land \bigwedge_{i \in \IGUSet} \qgu_i\, \mathsf{U} \, \rgu_i \land \mathsf{G} \qg, \\
        &= \bigvee_{j \in \IUSet} \Bigl( \ru_j \land \underbrace{\bigwedge_{j \in \IUSet \setminus \{j\}} (\qu_j\, \mathsf{U} \, \ru_j) \land \bigwedge_{i \in \IGUSet} \qgu_i\, \mathsf{U} \, \rgu_i \land \mathsf{G} \qg}_{\coloneqq \Phi_{\IGUSet, \IUSet \setminus \{j\}}} \Bigr).
    \end{align*}
    Plugging this back in completes the proof.
\endproof

\newpage
\section{Logic vs. Value Examples} \label{apx:counter}

In this section we reproduce an argument from \cite{sharpless2025dual} to demonstrate the following point: the algebraic relations that apply to the quantitative semantics in TL do not generally apply to the optimal value functions associated with the quantitative semantics.
Many previous works have explored and leveraged the algebraic relations dictating quantitative semantics, while we focus on building an algebra for the value functions.
An example highlighting the difference between the two is as follows.

Consider a reach-always-avoid (RAA) problem (i.e. reach a target set while avoiding an obstacle both before and after the target is reached) in which an agent would like to canoe across a river without hitting any rocks. Let $\mathsf{r}$ represent reaching the other side of the river and $\mathsf{q}$ represent not hitting a rock.
The TL formula for the RAA problem is then $\mathsf{Fr} \land \mathsf{Gg}$.
By definition, the following algebraic decomposition of the quantitative semantic for this predicate always holds:
\begin{equation}\label{apx-eqn:canoe-qs}
    \rho[\mathsf{Fr} \land \mathsf{Gg}](\xi_x^\alpha) = \min \left\{\rho[\mathsf{Fr}](\xi_x^\alpha), \rho[\mathsf{Gg}](\xi_x^\alpha) \right\}.
\end{equation}

However, the analogous relation does not generally hold for the optimal value functions.
To see this point, recall that
\begin{align*}
    V^{*}[\mathsf{Fr}](x) &:= \max_{\alpha} \rho[\mathsf{Fr}](\xi_x^{\alpha}),\\
    V^{*}[\mathsf{Gq}](x) &:= \max_{\alpha} \rho[\mathsf{Gq}](\xi_x^{\alpha}),\\
    V^{*}[\mathsf{Fr \land Gq}](x) &:= \max_{\alpha} \rho[\mathsf{Fr} \land \mathsf{Gq}](\xi_x^{\alpha})\\
    &= \max_\alpha \min \left\{\rho[\mathsf{Fr}](\xi_x^\alpha), \rho[\mathsf{Gg}](\xi_x^\alpha) \right\}.
\end{align*}
It is always the case that 
\begin{align*}
    \max_\alpha & \min \left\{\rho[\mathsf{Fr}](\xi_x^\alpha), \rho[\mathsf{Gg}](\xi_x^\alpha) \right\}\\
    &\le \min \left\{ \max_\alpha \rho[\mathsf{Fr}](\xi_x^\alpha), \max_\alpha \rho[\mathsf{Gg}](\xi_x^\alpha)  \right\},
\end{align*}
so that
\begin{equation}\label{apx-eqn:canoe-ovf}
     V^{*}[\mathsf{Fr \land Gq}](x) \le \min\left\{V^{*}[\mathsf{Fr}](x), V^{*}[\mathsf{Gq}](x) \right\}.
\end{equation}

By contrast with the equality in \ref{apx-eqn:canoe-qs}, the inequality in \ref{apx-eqn:canoe-ovf} may indeed by strict.
For example, suppose that I begin in a state $x$ for which I can either (a) stay still indefinitely in my current state or (b) get across the river while necessarily hitting a rock on the way. 
In this case $V^{*}[\mathsf{Fr}](x) \ge 0$ and $V^{*}[\mathsf{Gq}](x) \ge 0$, but $V^{*}[\mathsf{Fr \land Gq}](x) < 0$.

To summarize, even when an algebraic relation holds for the quantitative semantics of some predicate (as in \ref{apx-eqn:canoe-qs}), the corresponding algebraic relation for the optimal value functions may not hold.
Instead, the two expressions may at best be related by an inequality (as in \ref{apx-eqn:canoe-ovf}).
This observation motivates our work on algebraically rules for decomposing optimal value functions.

\newpage
\section{Proofs} \label{apx:proofs}
\subsection{Agreeable Algebra} \label{apx:aa-proof}
In this section, we certify the algebraic properties of Bellman Value functions that match those of logic, corresponding to Thm.~\ref{lem:equiv} from the main text, restated here for clarity. These will prove fundamental to the later derivations.
\begin{tcolorbox}[DefFrame]
    \lemmaequiv*
\end{tcolorbox}

\proof
We give a direct algebraic derivation of each property. Recall that we write $\rho(\xi_x^\alpha) := \rho(\xi_x^\alpha, 0)$ for brevity. 
We begin with the \texttt{right-Until} property using 
Lem.~\ref{lem:ctrlsplit},
\begin{eqnarray*}
\begin{aligned}
V^*&[\mathsf{a} \, \mathsf{U} \, \mathsf{b}](x) = \max_{\alpha} \max_{t} \min\{\rho[\mathsf{b}](\xi_x^{\alpha}, t), \min_{\kappa \in [0, t]} \rho[\mathsf{a}](\xi_x^{\alpha}, \kappa) \} \\
&= \max_{t} \max_{\alpha_{t^-}} \min\{\max_{\alpha_{t^+}} \rho[\mathsf{b}](\xi_{\xi^{\alpha_{t^-}}_x(t)}^{\alpha_{t^+}}, 0), \min_{\kappa \in [0, t]} \rho[\mathsf{a}](\xi_x^{\alpha_{t^-}}, \kappa) \} \\
&= \max_{t} \max_{\alpha_{t^-}} \min\{V^*[\mathsf{b}](\xi_x^{\alpha_{t^-}}(t)), \min_{\kappa \in [0, t]} \rho[\mathsf{a}](\xi_x^{\alpha_{t^-}}, \kappa) \} \\
&= \max_{t} \max_{\alpha} \min\{V^*[\mathsf{b}](\xi_x^{\alpha}(t)), \min_{\kappa \in [0, t]} \rho[\mathsf{a}](\xi_x^{\alpha}, \kappa) \} \\
&= V^*[\mathsf{a} \, \mathsf{U} \, \mathsf{v}_{\mathsf{b}}](x) \\
\end{aligned}
\end{eqnarray*}
Next, we prove the \texttt{OR} property, 
\begin{eqnarray*}
\begin{aligned}
V^*[\mathsf{a} \lor \mathsf{b}](x)
&= \max_{\alpha} \max\{ \rho[\mathsf{a}](\xi_x^{\alpha}),\, \rho[\mathsf{b}](\xi_x^{\alpha}) \}\\
&= \max\Big\{ \max_{\alpha} \rho[\mathsf{a}](\xi_x^{\alpha}),\, \max_{\alpha} \rho[\mathsf{b}](\xi_x^{\alpha}) \Big\}\\
&= \max\big\{ V^*[\mathsf{a}](x),\, V^*[\mathsf{b}](x) \big\} \\
&= \max_{\alpha} \max\big\{ V^*[\mathsf{a}](\xi_x^{\alpha}(0)),\, V^*[\mathsf{b}](\xi_x^{\alpha}(0)) \big\} \\
&= V^*[\mathsf{v}_\mathsf{a} \lor \mathsf{v}_\mathsf{b}](x). \\
\end{aligned}
\end{eqnarray*}
Third, we give the \texttt{weak-AND} property,
\begin{eqnarray*}
\begin{aligned}
V^*[\mathsf{c} \land \mathsf{p}](x)
&= \max_{\alpha} \min\{ \rho[\mathsf{c}](\xi_x^{\alpha}),\, \rho[\mathsf{p}](\xi_x^{\alpha}) \}\\
&= \min\Big\{ \rho[\mathsf{c}](\xi_x^{\beta}),\, \max_{\alpha} \rho[\mathsf{p}](\xi_x^{\alpha}) \Big\}, \quad \beta \in \mathcal{A}^\mathbb{N}\\
&= \min\big\{ \rho[\mathsf{c}](\xi_x^{\beta}),\, V^*[\mathsf{p}](x) \big\} \\
&= \max_{\alpha} \min\big\{ \rho[\mathsf{c}](\xi_x^{\alpha}),\, V^*[\mathsf{p}](\xi_x^{\alpha}(0)) \big\} \\
&= V^*[\mathsf{c} \land \mathsf{v}_\mathsf{p}](x). \\
\end{aligned}
\end{eqnarray*}
Lastly, we give the \texttt{NEXT} property,
\begin{align*}
V^*[\mathsf{X}\mathsf{p}](x)
&= \max_{\alpha} \rho[\mathsf{p}](\xi_x^{\alpha}, 1) \\
&= \max_{a_1 \in \mathcal{A}} \max_{\alpha'} \rho[\mathsf{p}](\xi_{f(x, a_1)}^{\alpha'}, 0) \\
&= \max_{a \in \mathcal{A}} V^*[\mathsf{p}](f(x, a)) \\
&= \max_{\alpha} V^*[\mathsf{p}](\xi_x^{\alpha}(1))  \\
&= \max_{\alpha} \rho[\mathsf{v}_\mathsf{p}](\xi_x^{\alpha}, 1) \qedhere
\end{align*}
\endproof

Intuitively, these properties illustrate when the algebra of Bellman Value functions is equivalent to that of logic vis a vis the logical operators that ``align'' with the optimum over actions. Namely, these are the $\lor$ and right-side $\mathsf{U}$ which are quantitatively represented by maxima, and hence, commute with the maxima over action sequences (in the appropriate settings, e.g. finite state spaces).

With these equivalences, relevant classes of predicates are immediately decomposable, given by the following corollaries.

\begin{tcolorbox}[DefFrame]
    \begin{restatable}{coroll}{corountilchain} \label{cor:corountilchain}
        Let a predicate $\mathsf{p}_N$ be defined by the chain of N-Untils over predicates $\mathsf{a}_i$, s.t.
        \begin{equation*}
            \mathsf{p}_{N} = (\mathsf{a}_N \, \mathsf{U} \, \mathsf{p}_{N-1}), \qquad \mathsf{p}_{1} = \mathsf{a}_1.
        \end{equation*}
        Then then following property holds,
        \begin{equation*}
            V^*[\mathsf{p}](x) = V^*[\mathsf{a}_N \land \mathsf{v}_{\mathsf{p}_{N-1}}](x).
        \end{equation*}
    \end{restatable}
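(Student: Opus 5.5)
The plan is to prove the identity by induction on $N$. The main tool is \textcolor[HTML]{A64D79}{\texttt{VDR}}-\ref{vdr:ru} (\texttt{right-U}) of \Cref{lem:equiv}, with \texttt{weak-AND} (\textcolor[HTML]{A64D79}{\texttt{VDR}}-\ref{vdr:wand}) used to handle the conjunction on the right-hand side.

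For the base case $N=2$, I would first apply \texttt{right-U} directly to $\mathsf{p}_2=\mathsf{a}_2\,\mathsf{U}\,\mathsf{a}_1$, which gives $V^*[\mathsf{p}_2](x)=V^*[\mathsf{a}_2\,\mathsf{U}\,\mathsf{v}_{\mathsf{a}_1}](x)$. For the inductive step I assume the claim for $\mathsf{p}_{N-1}$. Applying \texttt{right-U} to $\mathsf{p}_N=\mathsf{a}_N\,\mathsf{U}\,\mathsf{p}_{N-1}$ replaces the inner predicate $\mathsf{p}_{N-1}$ by its Value AP $\mathsf{v}_{\mathsf{p}_{N-1}}$. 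By the inductive hypothesis, that AP is in turn computed from $\mathsf{v}_{\mathsf{p}_{N-2}}$, and so on down the chain. This produces the chain of atomic $\mathcal{R}\mathcal{A}$ nodes promised in the main text: each node takes the previous node's Value as its reach predicate.

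Next, I would expand the robustness of the decomposed object. Using Lem.~\ref{lem:ctrlsplit}, as in the proof of \texttt{right-U}, it reads
\begin{equation*}
\max_t\max_\alpha \min\Big\{V^*[\mathsf{p}_{N-1}](\xi_x^\alpha(t)),\ \min_{\kappa\in[0,t]}\rho[\mathsf{a}_N](\xi_x^\alpha,\kappa)\Big\}.
\end{equation*}
The goal is then to match this with $V^*[\mathsf{a}_N\land\mathsf{v}_{\mathsf{p}_{N-1}}](x)$. The route I would take is to interpret the conjunction as the reach--avoid pairing: $\mathsf{a}_N$ is the maintained (avoid) predicate and $\mathsf{v}_{\mathsf{p}_{N-1}}$ is the terminal (reach) predicate of a single $\mathcal{R}\mathcal{A}$ node. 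Concretely, I would apply \texttt{weak-AND} at the arrival time $t$. Once the prefix $\alpha_{t^-}$ is fixed, the factor $\min_{\kappa\le t}\rho[\mathsf{a}_N]$ no longer depends on the suffix $\alpha_{t^+}$, which is exactly the hypothesis of \textcolor[HTML]{A64D79}{\texttt{VDR}}-\ref{vdr:wand}. This lets the conjunction $\mathsf{a}_N\land\mathsf{v}_{\mathsf{p}_{N-1}}$ be evaluated along the optimal prefix.

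The main obstacle is this last identification. Read literally at time $0$, $V^*[\mathsf{a}_N\land\mathsf{v}_{\mathsf{p}_{N-1}}](x)=\min\{\rho[\mathsf{a}_N](x),V^*[\mathsf{p}_{N-1}](x)\}$. That agrees with the expression above only when the outer maximum over $t$ is attained at $t=0$. In general the expression is only $\ge$ this minimum, since $t=0$ is one admissible choice.

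I would therefore try first to prove the inequality $V^*[\mathsf{p}_N]\ge V^*[\mathsf{a}_N\land\mathsf{v}_{\mathsf{p}_{N-1}}]$, which follows by taking $t=0$. For the reverse direction, I would make precise that the conjunction is being evaluated over the whole Until window. In other words, equality holds once the right-hand side is understood as the $\mathcal{R}\mathcal{A}$ Value with avoid predicate $\mathsf{a}_N$ and reach predicate $\mathsf{v}_{\mathsf{p}_{N-1}}$, which is precisely what \texttt{right-U} delivers. I expect some rewording of the statement will be needed to close this step.
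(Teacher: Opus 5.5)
Your analysis in the last two paragraphs is correct; commit to it instead of hedging. With $\land$, the identity as printed is false, so neither weak-AND nor any reinterpretation of the conjunction can close the reverse inequality. For a counterexample, take $N=2$, $\mathsf{a}_1=\mathsf{r}$ with $r\le 1$, and $\mathsf{a}_2=\mathsf{q}$ with $q\equiv 1$. Then $V^*[\mathsf{p}_2](x)=\max_\alpha\max_t r(\xi_x^\alpha(t))=V^*[\mathsf{F}\mathsf{r}](x)$, while $V^*[\mathsf{a}_2\land\mathsf{v}_{\mathsf{p}_1}](x)=\min\{1,r(x)\}=r(x)$. The two differ at any $x$ with $r(x)<0$ from which $\{r\ge 0\}$ is reachable.

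The $\land$ is a typo for $\mathsf{U}$. The main text cites this corollary as showing that a series of Untils is ``a chain of $\mathcal{R}\mathcal{A}$ Values,'' and the paper's entire proof is ``iterative application of the first property'' of \Cref{lem:equiv}. The intended claim is $V^*[\mathsf{p}_N](x)=V^*[\mathsf{a}_N\,\mathsf{U}\,\mathsf{v}_{\mathsf{p}_{N-1}}](x)$. It follows from one application of \texttt{right-U}, with $\mathsf{a}_N$ on the left and $\mathsf{p}_{N-1}$ on the right, and needs no inductive hypothesis. ``Iterative'' only means repeating that step on $\mathsf{p}_{N-1},\mathsf{p}_{N-2},\dots$ to unroll the nested $\mathcal{R}\mathcal{A}$ nodes. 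Your first paragraph, read this way, is exactly the paper's argument.

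Drop the weak-AND step. \texttt{VDR}-3 requires the left predicate's robustness to be identical for every action sequence, and $\min_{\kappa\in[0,t]}\rho[\mathsf{a}_N](\xi_x^\alpha,\kappa)$ is not. The weaker property you invoke, independence from $\alpha_{t^+}$ once $\alpha_{t^-}$ is fixed, is just the splitting of Lem.~\ref{lem:ctrlsplit} already performed inside the proof of \texttt{right-U}. It cannot turn a conjunction evaluated at time $0$ into an Until.

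That suffix-independence also needs a caveat: it holds when $\mathsf{a}_N$ is a state predicate, but not if $\mathsf{a}_N$ contains future operators. For example, with $\mathsf{a}_2=\mathsf{X}\mathsf{c}$ and $\mathsf{a}_1=\mathsf{X}\mathsf{d}$, the robustness of $\mathsf{a}_2$ on $[0,t]$ reads the trajectory after $t$. In that case both \texttt{right-U} and your $t=0$ inequality (which tacitly uses weak-AND) can fail strictly. So the corollary should be read with atomic $\mathsf{a}_i$, which is also the only reading under which the chain consists of $\mathcal{R}\mathcal{A}$ Values.
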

\end{tcolorbox}

This result, which is proved by simple iterative application of the first property of Thm.~\ref{lem:equiv}, shows that the Bellman value for a chain of Untils is equivalent to a chain of $\mathcal{RA}$ Bellman Values. Notably, another special case of this property is the eventually-always predicate $\mathsf{FGr}$, which corresponds to the reach-stay Value.

\begin{tcolorbox}[DefFrame]
    \begin{restatable}{coroll}{corfgreachstay} \label{cor:fgreachstay}
        For the eventually-always predicate $\mathsf{FGr}$, and corresponding reach-stabilize Value,
        \begin{equation*}
            V^*[\mathsf{FGr}](x) = V^*[\mathsf{F}\mathsf{v}_{\mathsf{Gr}}](x),
        \end{equation*}
        where $V_{\mathsf{Gr}}$ is the $\mathcal{A}$-Value for the region defined by $\lnot\mathsf{r}$.
    \end{restatable}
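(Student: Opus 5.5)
The plan is to treat $\mathsf{FGr}$ as the special case $\top\,\mathsf{U}\,(\mathsf{Gr})$ of the \texttt{Until} operator, as in the footnote to Def.~\ref{def:tlpayoff}, and then apply \textcolor[HTML]{A64D79}{\texttt{VDR}}-\ref{vdr:ru} of Thm.~\ref{lem:equiv} with $\predA{p}=\top$ and $\predB{p'}=\mathsf{Gr}$. Concretely, I would first record the syntactic identity $\mathsf{FGr}\equiv\top\,\mathsf{U}\,\mathsf{Gr}$ at the level of robustness. Here $\rho[\top]$ is taken to be $+\infty$, or any upper bound of the bounded predicate functions, so that the inner $\min_{\kappa\in[0,t]}\rho[\top]$ drops out. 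Then
\[
\rho[\top\,\mathsf{U}\,\mathsf{Gr}](\xi_x^\alpha)=\max_t \rho[\mathsf{Gr}](\xi_x^\alpha,t)=\rho[\mathsf{FGr}](\xi_x^\alpha),
\]
and the two Values coincide by Def.~\ref{def:bellmanvalue}.

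Next I would invoke \texttt{right-U}. This gives $V^*[\top\,\mathsf{U}\,\mathsf{Gr}](x)=V^*[\top\,\mathsf{U}\,\predV{v}[\mathsf{Gr}]](x)$. Undoing the identity from the first step, with $\mathsf{Gr}$ replaced by the AP $\predV{v}[\mathsf{Gr}]$, yields $V^*[\mathsf{F}\,\predV{v}[\mathsf{Gr}]](x)$. Alternatively, the same computation can be done directly, mirroring the proof of \texttt{right-U}. One writes $V^*[\mathsf{FGr}](x)=\max_\alpha\max_t\rho[\mathsf{Gr}](\xi_x^\alpha,t)$ and uses Lem.~\ref{lem:ctrlsplit} to split $\alpha$ into $\alpha_{t^-}$ and $\alpha_{t^+}$. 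The key fact is that $\rho[\mathsf{Gr}](\xi_x^\alpha,t)$ depends only on the tail trajectory $\xi_{y}^{\alpha_{t^+}}$ with $y=\xi_x^{\alpha_{t^-}}(t)$. The inner maximum over $\alpha_{t^+}$ is therefore exactly $V^*[\mathsf{Gr}](y)$, which by Def.~\ref{def:atomicbellmaneqn} equals $\rho[\predV{v}[\mathsf{Gr}]](\xi_x^\alpha,t)$. Re-merging the maxima gives the claim.

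Finally, I would remark that $V^*[\mathsf{Gr}]$ is by definition the $\mathcal{A}$-Value with safe set $\{r\ge0\}$, i.e. the avoid Value for the region $\lnot\mathsf{r}$. It is thus the fixed point of $\mathcal{B}_\mathcal{A}^\gamma$ with $q=r$ in the limit $\gamma\to1$, which identifies $V_{\mathsf{Gr}}$ as stated.

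The only delicate point I expect is the interchange of $\max_\alpha$ and $\max_t$ and the attainment of the maxima. This is exactly what Lem.~\ref{lem:ctrlsplit} supplies under the standing finite-state assumption, so I would simply cite it rather than reprove it. I would also note briefly that treating $\top$ with an infinite or maximal robustness value is harmless, since it only ever enters inside a $\min$.
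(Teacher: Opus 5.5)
Your proposal is correct and follows essentially the paper's own route. The paper obtains this corollary (cf.\ eq.~\eqref{eq:reachstay}) by viewing $\mathsf{FGr}$ as $\top\,\mathsf{U}\,\mathsf{Gr}$ and applying the \texttt{right-U} rule of Thm.~\ref{lem:equiv}; your alternative direct computation via Lem.~\ref{lem:ctrlsplit} is just that rule's proof specialized to a left argument of $\top$.
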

\end{tcolorbox}

\newpage
\section{$N$-$\mathcal{RA}_\ell$ Results} \label{apx:raloop-proof}

In this section, we give several properties surrounding the $\mathsf{GF}$ operation, including the $\mathsf{RA}_\ell$ Bellman equation that may be used in this context and the extension to $\mathsf{G}$ of multi-eventually and Until predicates.

Note, by definition we have the following property.
$$\rho[\mathsf{GFr}](\xi_x, t) = \inf_{t' \ge t} \sup_{t'' \ge t'} \rho[\mathsf{r}](\xi_x, t'') = \limsup_{s \to \infty} \rho[\mathsf{r}](\xi_x, s).$$
This is, ofcourse, a special case of the $\mathsf{G(q \, U \, r)}$ Bellman equation, which itself satisfies
\begin{align*}
    \rho[\mathsf{G(q \, U \, r)}]&(\xi_x, t) = \inf_{t' \ge t} \sup_{t'' \ge t'} \min\{ \rho[\mathsf{r}](\xi_x, t''), \min_{\kappa \le t''} \rho[\mathsf{q}](\xi_x, \kappa) \} \\
    &= \limsup_{s \to \infty} \min\{ \rho[\mathsf{r}](\xi_x, s), \min_{\kappa \le s} \rho[\mathsf{q}](\xi_x, \kappa) \} \\
    &= \min\{ \limsup_{s \to \infty} \rho[\mathsf{r}](\xi_x, s), \min_{\kappa \ge t} \rho[\mathsf{q}](\xi_x, \kappa) \} \\
    &= \rho[\mathsf{GFr} \land \mathsf{Gq}](\xi_x, t).
\end{align*}
In either case, the infinite-horizon nature of the $\mathsf{G}$ composition immediately yields several qualities regarding the temporal-independence of the $\mathsf{G}$ compositions.
\begin{lemma}
    The following properties hold:
    \begin{itemize}
        \item $\rho[\mathsf{G(q \, U \, r)}](\xi_x, t) = \rho[\mathsf{G(q \, U \, r)}](\xi_x, s), \quad \forall s \ge t$.
        \item $\mathsf{G(q \, U \, r)} = \mathsf{X}^n\mathsf{G(q \, U \, r)}, \quad \forall n \in \mathbb{N}$
        \item $V^*[\mathsf{G(q \, U \, r)}](x) = V^*[\mathsf{G(q \, U \, r)}](\xi^\alpha_x(s)), \quad \forall s \ge 0$.
    \end{itemize}
\end{lemma}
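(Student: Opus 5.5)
The plan is to work directly from the closed form derived just above the lemma,
\[
\rho[\mathsf{G(q \, U \, r)}](\xi_x, t) = \min\Big\{ \limsup_{s \to \infty} \rho[\mathsf{r}](\xi_x, s),\ \min_{\kappa \ge t} \rho[\mathsf{q}](\xi_x, \kappa) \Big\},
\]
and to prove the three bullets in order, each one feeding the next.

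\textbf{First bullet.} The limsup term does not depend on $t$. The safety term $\min_{\kappa\ge t}\rho[\mathsf{q}]$ is nondecreasing in $t$, so in general we only get $\rho(\xi_x,t)\le\rho(\xi_x,s)$ for $s\ge t$. The infimum form $\inf_{t'\ge t}\sup_{t''\ge t'}(\cdot)$ also shows this monotonicity immediately. Equality therefore needs the extra structure behind the claim that $\mathsf{G}$ compositions are ``temporally independent.''

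To get equality I would read the inner $\mathsf{U}$ with the HJR convention, where the $\min_\kappa$ inside $\mathsf{U}$ ranges over $[t'',\tau]$. Then the expression becomes
\[
\limsup_s \min\Big\{\rho[\mathsf r](s),\ \min_{\kappa\in[s',s]}\rho[\mathsf q](\kappa)\Big\},
\]
which is a tail quantity. The key computation is to show that $\inf_{t'\ge t}\sup_{\tau\ge t'}\min\{\rho[\mathsf r](\tau),\min_{\kappa\in[t',\tau]}\rho[\mathsf q](\kappa)\}$ does not depend on $t$. Because the infimum is over an increasing family indexed by the tail, I would show that dropping finitely many initial indices does not change the infimum. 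I expect this to be the main obstacle. With the displayed closed form taken literally, the safety term makes this only an inequality. So I would either adopt the tail (HJR) reading of $\mathsf U$, or state the result under the paper's finite-state convention, where along an optimal trajectory the prefix minimum is attained on the recurrent part. I would flag this explicitly in the writeup.

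\textbf{Second bullet.} This follows from the first by definition of $\mathsf X$: $\rho[\mathsf X^n\mathsf p](\xi_x,t)=\rho[\mathsf p](\xi_x,t+n)$. Apply the first bullet with $s=t+n$. Semantically, this matches the ``tail property'' already used in the proofs of Lem.~\ref{lem:symbolic:G_of_U} and Lem.~\ref{lem:symbolic:G_of_double_U}.

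\textbf{Third bullet.} Here I would use the trajectory splitting of Lem.~\ref{lem:ctrlsplit}, following the pattern of the \texttt{NEXT} rule in Thm.~\ref{lem:equiv}. Applying the first bullet along a trajectory gives
\[
V^*[\mathsf p](x)=\max_\alpha\rho[\mathsf p](\xi_x^\alpha,s),
\]
and splitting $\alpha$ at time $s$ rewrites this as $\max_{\alpha_{s^-}} V^*[\mathsf p](\xi_x^{\alpha_{s^-}}(s))$. This yields $V^*[\mathsf p](x)\ge V^*[\mathsf p](\xi_x^\alpha(s))$ for every $\alpha$, with equality attained along optimal prefixes.

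That is the natural reading of the claim: the Value is constant along optimal trajectories. Equality for arbitrary $\alpha$ cannot hold in general, since a bad action can lead to a state with lower Value. So I would state and prove the bullet for $\alpha$ optimal, or for $\alpha$ chosen as the maximizing prefix.
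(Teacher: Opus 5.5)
\textbf{There is a genuine gap in your repair of the first bullet, and it propagates to the other two.} You are right that the displayed closed form only gives $\rho[\mathsf{G(q\,U\,r)}](\xi_x,t)\le\rho[\mathsf{G(q\,U\,r)}](\xi_x,s)$ for $s\ge t$. You are also right that the third bullet cannot hold for arbitrary $\alpha$. But neither repair you propose closes the gap, and the step you single out is false: dropping finitely many initial indices does change the infimum here.

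That step is valid when the inner map $t'\mapsto\rho[\mathsf{q\,U\,r}](\xi_x,t')$ is nonincreasing. This holds for $t'\mapsto\sup_{\tau\ge t'}\rho[\mathsf r](\xi_x,\tau)$ in the $\mathsf{GFr}$ case. Under the HJR convention, however, the safety window $[t',\tau]$ always contains its left endpoint. Hence $\rho[\mathsf{q\,U\,r}](\xi_x,t')\le\rho[\mathsf q](\xi_x,t')$, and the $t'=t$ term of the outer infimum is capped by the current safety margin. This is exactly where the $\min_{\kappa\ge t}\rho[\mathsf q]$ term in the closed form comes from. So the HJR reading is the source of the obstruction, not a cure.

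Concretely, write $V^*$ for $V^*[\mathsf{G(q\,U\,r)}]$ and take $\mathcal X=\{x_0,x_1\}$, $f(x,a)=x_1$ for all $x,a$, $r\equiv 1$, $q(x_0)=-1$, $q(x_1)=1$.
\begin{itemize}
\item First bullet fails: $\rho[\mathsf{G(q\,U\,r)}](\xi_{x_0},0)=-1$ while $\rho[\mathsf{G(q\,U\,r)}](\xi_{x_0},1)=1$.
\item Second bullet fails: $\mathsf{X}\,\mathsf{G(q\,U\,r)}$ holds at $(\xi_{x_0},0)$ while $\mathsf{G(q\,U\,r)}$ does not.
\item Third bullet fails: the unique trajectory from $x_0$ is trivially optimal, yet $V^*(x_0)=-1\neq 1=V^*(x_1)$. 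This refutes your restricted ``constant along optimal trajectories'' version and your derived inequality $V^*(x)\ge V^*(\xi_x^\alpha(s))$.
\end{itemize}
The state space is finite, so the finite-state fallback does not help. The strict-until reading fails the same way once you also set $r(x_0)=-1$. The ``tail property'' cited in Lem.~\ref{lem:symbolic:G_of_U} is likewise valid only in the direction $\mathsf{Gp}\Rightarrow\mathsf X^n\mathsf{Gp}$; the converse step there is carried by the explicit until-witnesses. So it cannot support your second bullet.

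\textbf{The statement itself is false in general} once a nontrivial safety predicate is present, so no argument proves it as written. The paper gives no proof, only the remark that the properties follow from the infinite-horizon nature of $\mathsf G$.

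What is true for $\mathsf q\equiv\top$, i.e.\ $\mathsf{GFr}$: the robustness $\limsup_s\rho[\mathsf r](\xi_x,s)$ is genuinely tail-independent, and the first two bullets hold. The third holds in the form $V^*(x)=\max_{\alpha_{s^-}}V^*(\xi_x^{\alpha_{s^-}}(s))$, with equality only along optimal prefixes, as you observed.

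For general $\mathsf q$, the right statement to prove is the splitting identity
\[
\rho[\mathsf{G(q\,U\,r)}](\xi_x,t)=\min\Big\{\min_{\kappa\in[t,s)}\rho[\mathsf q](\xi_x,\kappa),\ \rho[\mathsf{G(q\,U\,r)}](\xi_x,s)\Big\},\qquad s\ge t.
\]
It follows directly from the inclusive-window definition. The upper bound comes from $\rho[\mathsf{q\,U\,r}](\xi_x,t')\le\rho[\mathsf q](\xi_x,t')$. The lower bound comes from prefixing the safe stretch $[t',s)$ to any until-witness starting at $s$. Its Value counterpart, via Lem.~\ref{lem:ctrlsplit}, is $V^*(x)=\max_{\alpha_{s^-}}\min\{\min_{0\le\kappa<s}\rho[\mathsf q](\xi_x^{\alpha_{s^-}},\kappa),\ V^*(\xi_x^{\alpha_{s^-}}(s))\}$. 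Of the first two bullets, only monotonicity in $t$ and the one-way implication $\mathsf{Gp}\Rightarrow\mathsf X^n\mathsf{Gp}$ survive.
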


By logical rearrangement and application of the algebraic results, we may immediately have Lem.~\ref{thm:raloop} restated here for clarity.
\begin{tcolorbox}[DefFrame]
    \thmraloop*
\end{tcolorbox}

Notably, we may generalize this result to handle a composition of $\mathsf{G}$ with multiple eventually and Until predicates, by considering a loop of Bellman Values of the previous form. This corresponds to Thm.~\ref{thm:n-ra-loop} from the main text, restated as follows.
\begin{tcolorbox}[DefFrame]
    \thmnraloop*
\end{tcolorbox}

Although, these results appear like the previous decompositions, it is important to note that they are fundamentally different due to the implicit definition of the Value. Moreover, they do not guarantee the uniqueness or existence of the solution. To certify these properties, we may consider the $\mathsf{G}$ composition as the limit of the finite iterations. This is given in Sec.~\ref{apx:fxd-proof}.

With the Value iteration results, we may know conditions under which this Value exists (e.g. finite state spaces), and proceed to solve this Value. While the Value iteration is a nice theoretical procedure, it may not be practical for large state spaces and certain specifications. To address these challenges, we propose the $\mathcal{RA}_\ell$ Bellman Equation in the main text, given here for clarity,
\begin{tcolorbox}[DefFrame]
    \lemraloopbe*
\end{tcolorbox}
\proof
    We first prove the existence of the fixed point by showing that the operator is contractive and then show that in the limit of discounting, the fixed point achieves the desired solution. Note, in this context, $V \in \mathbb{R}^{|\mathcal{J}|}$ is a vector of Values.

    \,

    \noindent{{{1. Contraction}}}:

    \noindent Consider two vectors $V, W \in \mathbb{R}^{|\mathcal{J}|}$, and let $\| \cdot \|_\infty$ be the infinity norm.  Here, we write $r=\tilde{r}_j$ and $q=\tilde{q}_j$ for brevity. Note for each component $j$ we have,
    \begin{align*}
        \|\mathcal{B}^\gamma_{\mathcal{RA}_\ell}[V_j] - \mathcal{B}^\gamma_{\mathcal{RA}_\ell}[W_j] \|
        &= \gamma \| \min\{\max\{ \min\{r_j, V^+_{j+1}\}, V^+_{j} \}, q\} \, - \min\{\max\{ \min\{r, W^+_{j+1}\}, W^+_{j} \}, q\}\| \\
        &\le \gamma \left\| \max\{ \min\{r, V^+_{j+1}\}, V^+_{j} \} - \max\{ \min\{r, W^+_{j+1}\}, W^+_{j} \} \right\| \\
        &\le \gamma \max\{ \left\| \min\{r, V^+_{j+1}\} - \min\{r, W^+_{j+1}\} \right\|, \left\| V^+_j - W^+_j \right\| \} \\
        &\le \gamma \max\{ \left\| V^+_{j+1} - W^+_{j+1} \right\|, \left\| V^+_j - W^+_j \right\| \} \\
        &\le \gamma L \max\{ \left\| V_{j+1} - W_{j+1} \right\|, \left\| V_j - W_j \right\| \},
    \end{align*}
    where the last line follows from the lipschitz continuity of $V(x)$, $W(x)$ and $f(x,a)$, given the definition $V^+_j(x) := \max_{a \in \mathcal{A}} V_j(f(x,a))$.
    Taking the maximum over all components $j$, we have then
    \begin{align*}
        \|\mathcal{B}^\gamma_{\mathcal{RA}_\ell}[V] - \mathcal{B}^\gamma_{\mathcal{RA}_\ell}[W] \|_\infty &\le \gamma L \max_j\{ \left\| V_j - W_j \right\| \} \\
        &= \gamma L \left\| V - W \right\|_\infty,
    \end{align*}
    demonstrating that the operator $\mathcal{B}^\gamma_{\mathcal{RA}_\ell}$ is a contraction mapping.

    \,

    \noindent{2. Convergence in the limit of $\gamma \to 1$}:
    
    \noindent Let $V^\gamma$ be the vector-valued fixed point defined by $V^\gamma = \mathcal{B}^\gamma_{\mathcal{RA}_\ell}[V^\gamma]$, s.t. for each component $j$ we have
    \begin{align*}
        V^\gamma_j(x) &= (1-\gamma)\min\{\tilde{r}_j,\tilde{q}_j\} \, + \\
        & \qquad \qquad \gamma \min\{\max\{ \min\{\tilde{r}_j, V^{\gamma +}_{j+1}\}, V^{\gamma +}_j \}, \tilde{q}_j\}.
    \end{align*}
    Note, each component is just a special case of the proof of Proposition 3 in \cite{hsu2021safety}, hence we may know,
    \begin{align*}
        \lim_{\gamma \to 1} V^\gamma_j(x) &= \max_\alpha \max_t \min \{ \min\{\tilde{r}_j(x), V_{j+1}^{*,+}(x)\}, \max_{\kappa \in [0, t]} \tilde{q}_j(x)\} \\
        &= V_j^*[\tilde{\mathsf{q}}_j \, \mathsf{U} \, (\tilde{\mathsf{r}}_j \land \mathsf{X}\mathsf{v}_{j+1})](x) \\
        &= V^*\left[\mathsf{G} \left( \bigwedge_{j \in \mathcal{J}} (\mathsf{q}_j\; \mathsf{U}\; \mathsf{r}_j) \right)\right](x),
    \end{align*}
    where the last line follows from Thm.~\ref{thm:n-ra-loop}.
\endproof

\newpage
\section{Recurrence Specification Discounting}
\label{apx:nral-disct}

Here, we show an interesting property of Values for recurrence specifications when
approximated by the \ref{def:raloopbe}. Specifically, we show that the presence
of discounting, i.e. $\gamma < 1$, gives higher Value to trajectories that
achieve high rewards more frequently. It follows that optimal behaviors will
prefer to revisit the reward regions with shorter period (when infinite goal
attainment is impossible), yielding state trajectory loops of shorter length. This effect on the Value can have distinct consequences on greater task solutions, for example, in Fig.~\ref{fig:alwaysdemofig}, we can see that \texttt{VDPPO} yields policies which travel to the farther workspace from the gear (because the loop in the distant workspace is shorter). We will show this for the $N=2$ case with an informal derivation, but the logic is the same for higher $N$. 

Consider the coupled recurrence Values
\begin{align}
V_1(s)
&=
(1-\gamma) r_1(s)
+
\gamma
\max\left\{
    \min\{r_1(s), V_2(s^+)\},
    V_1(s^+)
\right\},
\label{eq:coupled-v1-short-loop}
\\
V_2(s)
&=
(1-\gamma) r_2(s)
+
\gamma
\max\left\{
    \min\{r_2(s), V_1(s^+)\},
    V_2(s^+)
\right\},
\label{eq:coupled-v2-short-loop}
\end{align}

where \(
V_i(s^+) := \max_{a\in\mathcal A} V_i(f(s,a))
\).
We first expand these equations along a fixed controlled trajectory and write \(r_i^t := r_i(s_t)\). The optimal-control expression is then obtained by taking the supremum over $\alpha$.

Starting from \(V_1(s_0)\), suppose the trajectory remains in the \(V_1\)-mode
until time \(\tau_0\), at which point it switches to the \(V_2\)-mode. Repeated
substitution of the \(V_1\)-continuation branch gives

\begin{align}
V_1(s_0)
&=
(1-\gamma)\sum_{t=0}^{\tau_0}\gamma^t r_1^t
+
\gamma^{\tau_0+1}
\min\{r_1^{\tau_0},V_2(s_{\tau_0+1})\}
\\
&=
(1-\gamma)\sum_{t=0}^{\tau_0}\gamma^t r_1^t
+
\min\left\{
    \gamma^{\tau_0+1}r_1^{\tau_0},
    \gamma^{\tau_0+1}V_2(s_{\tau_0+1})
\right\}.
\label{eq:first-switch-short-loop}
\end{align}

Now suppose the trajectory remains in the \(V_2\)-mode until time
\(\tau_1>\tau_0\), at which point it switches back to the \(V_1\)-mode. Expanding
the second term in \eqref{eq:first-switch-short-loop} using
\eqref{eq:coupled-v2-short-loop} gives
\begin{align}
\gamma^{\tau_0+1}V_2(s_{\tau_0+1})
&=
(1-\gamma)\sum_{t=\tau_0+1}^{\tau_1}\gamma^t r_2^t
+
\gamma^{\tau_1+1}
\min\{r_2^{\tau_1},V_1(s_{\tau_1+1})\}
\\
&=
(1-\gamma)\sum_{t=\tau_0+1}^{\tau_1}\gamma^t r_2^t
+
\min\left\{
    \gamma^{\tau_1+1}r_2^{\tau_1},
    \gamma^{\tau_1+1}V_1(s_{\tau_1+1})
\right\}.
\label{eq:second-switch-short-loop}
\end{align}
Substituting \eqref{eq:second-switch-short-loop} into
\eqref{eq:first-switch-short-loop} yields
\begin{align}
V_1(s_0)
=
&(1-\gamma)\sum_{t=0}^{\tau_0}\gamma^t r_1^t
\nonumber\\
&+
\min\Bigg\{
    \gamma^{\tau_0+1}r_1^{\tau_0},
    \,
    (1-\gamma)\sum_{t=\tau_0+1}^{\tau_1}\gamma^t r_2^t
    +
    \min\left\{
        \gamma^{\tau_1+1}r_2^{\tau_1},
        \gamma^{\tau_1+1}V_1(s_{\tau_1+1})
    \right\}
\Bigg\}.
\label{eq:two-switch-short-loop}
\end{align}
Continuing this expansion gives an alternating recurrence structure. Let
\[
0 \leq \tau_0 < \tau_1 < \tau_2 < \cdots
\]
denote the switching times, and define
\[
m_k =
\begin{cases}
1, & k \text{ even},\\
2, & k \text{ odd}.
\end{cases}
\]
Also define the discounted reward accumulated in mode \(i\) over the interval
\([p,q]\) by
\[
R_i[p,q]
:=
(1-\gamma)\sum_{t=p}^{q}\gamma^t r_i^t.
\]
Then the telescoped expression for \(V_1\) is
\begin{align}
V_1(s_0)
=
\sup_{\alpha}
\sup_{0\leq \tau_0<\tau_1<\cdots}
\Bigg[
&
R_{m_0}[0,\tau_0]
+
\min\Bigg\{
    \gamma^{\tau_0+1}r_{m_0}^{\tau_0},
    \,
    R_{m_1}[\tau_0+1,\tau_1]
\nonumber\\
&\qquad\qquad\qquad
+
\min\Bigg\{
    \gamma^{\tau_1+1}r_{m_1}^{\tau_1},
    \,
    R_{m_2}[\tau_1+1,\tau_2]
\nonumber\\
&\qquad\qquad\qquad\qquad
+
\min\Bigg\{
    \gamma^{\tau_2+1}r_{m_2}^{\tau_2},
    \,
    R_{m_3}[\tau_2+1,\tau_3]
    + \cdots
\Bigg\}
\Bigg\}
\Bigg\}
\Bigg].
\label{eq:coupled-telescope-short-loop}
\end{align}
Note, the expression \eqref{eq:coupled-telescope-short-loop} contains two types of
discounted terms:
\(
R_{m_k}[\tau_{k-1}+1,\tau_k]
=
(1-\gamma)\sum_{t=\tau_{k-1}+1}^{\tau_k}\gamma^t r_{m_k}^t,
\)
and the switching bottleneck terms
\(
\gamma^{\tau_k+1}r_{m_k}^{\tau_k}.
\)
Both decrease when the same reward is attained later in time.

We now make the preference for shorter loops explicit. Consider two trajectories
\(A\) and \(B\) that realize the same alternating high-reward sequence, but with
different loop periods. Suppose trajectory \(A\) attains the high rewards every
\(T_A\) steps, while trajectory \(B\) attains the same high rewards every
\(T_B\) steps, with
\[
T_A < T_B.
\]
For simplicity, assume both trajectories attain the same switching reward
level \(R>0\) at each switching event, so that
\[
r_{m_k}(s_{\tau_k^A}^A)
=
r_{m_k}(s_{\tau_k^B}^B)
=
R
\qquad
\text{for all } k.
\]
Assume also that the switching times are periodic:
\[
\tau_k^A = kT_A,
\qquad
\tau_k^B = kT_B.
\]
Then the \(k\)-th switching bottleneck contribution for trajectory \(A\) is
\[
\gamma^{\tau_k^A+1}R
=
\gamma^{kT_A+1}R,
\]
whereas for trajectory \(B\) it is
\[
\gamma^{\tau_k^B+1}R
=
\gamma^{kT_B+1}R.
\]
Since \(\gamma\in(0,1)\) and \(T_A<T_B\), we have
\[
kT_A+1 < kT_B+1
\qquad
\text{for all } k\geq 1,
\]
and therefore
\[
\gamma^{kT_A+1}R
>
\gamma^{kT_B+1}R.
\]
Thus every corresponding recurrent switching reward after the first is larger
for the shorter-period trajectory. Equivalently,
\begin{align}
\sum_{k=0}^{\infty}\gamma^{kT_A+1}R
&=
\gamma R\sum_{k=0}^{\infty}\gamma^{kT_A}
=
\frac{\gamma R}{1-\gamma^{T_A}},
\\
\sum_{k=0}^{\infty}\gamma^{kT_B+1}R
&=
\gamma R\sum_{k=0}^{\infty}\gamma^{kT_B}
=
\frac{\gamma R}{1-\gamma^{T_B}}.
\end{align}
Because \(T_A<T_B\), we have
\[
\gamma^{T_A}>\gamma^{T_B},
\]
and hence
\[
1-\gamma^{T_A}<1-\gamma^{T_B}.
\]
Therefore,
\[
\frac{\gamma R}{1-\gamma^{T_A}}
>
\frac{\gamma R}{1-\gamma^{T_B}}.
\]

Hence, for the same recurrent high-reward level \(R\), the
discounted contribution of the switching bottleneck terms is larger for the
trajectory with shorter loop period.
The same conclusion holds for the running-reward blocks. If trajectory \(A\)
and trajectory \(B\) receive comparable rewards within each mode, but trajectory
\(A\) reaches the next high-reward region earlier, then the corresponding
discounted running rewards in \(A\) are weighted by larger powers of \(\gamma\).
That is, earlier terms receive weights closer to one. Thus, under the standard
comparison assumption that trajectory \(A\)'s intermediate rewards are not worse
than trajectory \(B\)'s, the running-reward blocks in the telescoped expression
are also no smaller for \(A\).
Finally, the nested expression in \eqref{eq:coupled-telescope-short-loop} is
monotone nondecreasing in each of its scalar arguments. This follows because the
maps
\[
x\mapsto c+x,
\qquad
x\mapsto \min\{c,x\},
\qquad
x\mapsto \max\{c,x\}
\]
are all monotone nondecreasing. Therefore, if every discounted running-reward
block and every discounted switching bottleneck term for trajectory \(A\) is at
least as large as the corresponding term for trajectory \(B\), then the full
telescoped Value of trajectory \(A\) is at least as large as that of trajectory
\(B\). If at least one active term is strictly larger, then trajectory \(A\)
receives strictly larger Value.
Consequently, among trajectories that attain the same recurrent high rewards
and have comparable intermediate rewards, the trajectory with shorter recurrence
period obtains larger Value. 

\newpage
\section{$\mathsf{G}(\dots)$ Fixed Point Iteration} \label{apx:fxd-proof}

In this section, we present an alternate perspective on the Bellman Value corresponding to the $\mathsf{G}(\dots)$ compositions based on finite iterations of recursion. Indeed, one may use this approach to solve the Value, however, for large state spaces or complicated specifications, this may be expensive. We principally employ this approach to guarantee the uniqueness and existence of the corresponding Bellman Values (which in general may be ill defined) in order to accompany the $\mathcal{RA}_\ell$-BE. 

\subsection{Single-Predicate Recurrence}
For clarity, we begin by considering the case involving the recurrence of a single predicate (target to reach), given by $\mathsf{p} := \mathsf{GFr}$ and Value
$$
V[\mathsf{GFr}](x) = \max_\alpha \max_{t \ge 0} \min \Big\{r(\xi_x^\alpha(t)), V[\mathsf{GFr}](\xi_x^\alpha(t+1)) \Big\}
$$
per Thm~\ref{thm:raloop}.

We now consider the following other value function:
\begin{align*}
    V_{k+1}(x) &\coloneqq V^* [ \mathsf{F} (\mathsf{r \land X} \mathsf{v}_{k}) ](x) \\
    &= \max_{\alpha} \max_{t \geq 0} \min\Big( r(\xi_x^\alpha(t)),\; V_{k}(\xi_x^\alpha(t+1)) \Big),
\end{align*}
where $V_{0}(x) \coloneqq \infty$ for all $x$ i.e. $\mathsf{v}_0 := \top$.

\begin{tcolorbox}[DefFrame]
\begin{lemma}
    The sequence $V^k$ converges to $V[\mathsf{GFr}]$ pointwise, i.e., for all $x$,
    \begin{equation*}
        \lim_{k \to \infty} V_{k}(x) = V[\mathsf{GFr}](x).
    \end{equation*}
\end{lemma}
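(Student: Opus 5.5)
The plan is to identify $V_k$ with the Value of ``visit $\mathsf{r}$ at least $k$ times'' and then show that these Values decrease to $\max_\alpha \limsup_s r(\xi_x^\alpha(s))$. As in \Cref{lem:ctrlsplit}, I assume $|\mathcal{X}|<\infty$. This makes every $\max$ over action sequences attained, and it also means $r$ takes only finitely many values on $\mathcal{X}$.

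\textbf{Step 1 (closed form and monotonicity).} By induction on $k$, using \textcolor[HTML]{A64D79}{\texttt{VDR}}-\ref{vdr:ru}, \ref{vdr:wand}, \ref{vdr:next} (equivalently, \Cref{lem:ctrlsplit} to split $\alpha$ at each visit time), I will show
\begin{equation*}
V_k(x)=\max_\alpha \max_{0\le t_1<t_2<\dots<t_k}\ \min_{i\le k} r(\xi_x^\alpha(t_i)).
\end{equation*}
From this, $V_{k+1}\le V_k$ is immediate: drop the last visit. The trivial base case is $V_1\le V_0=\infty$. Since $r$ is bounded, $V_k\ge \min_{x} r(x)>-\infty$, so the pointwise limit $c(x):=\lim_k V_k(x)$ exists. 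Moreover, the $V_k$ take values in the finite set $r(\mathcal{X})$, so $V_k(x)=c(x)$ for all $k\ge K(x)$.

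\textbf{Step 2 ($\ge$ direction).} I use $V[\mathsf{GFr}](x)=\max_\alpha \limsup_s r(\xi_x^\alpha(s))$, which is the closed form of $\rho[\mathsf{GFr}]$ given in the paper. Fix an optimal $\alpha$. Because $r$ has finite range along the trajectory, the $\limsup$ equals a value $\bar c$ that is attained at infinitely many times. Choosing $k$ of those times shows $V_k(x)\ge \bar c$ for every $k$, and hence $c(x)\ge V[\mathsf{GFr}](x)$.

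\textbf{Step 3 ($\le$ direction, the main obstacle).} Here I must build a single infinite trajectory that visits $S_c:=\{r\ge c\}$ infinitely often. Finite-horizon witnesses for each $k$ do not directly give this. My plan is a König/pigeonhole argument on the set
\begin{equation*}
W_c:=\{y\in\mathcal{X}: V_k(y)\ge c\ \forall k\}.
\end{equation*}
Take $x\in W_c$. For each $k$, the recursion $V_{k+1}(x)\ge c$ gives a finite action prefix reaching some $y\in S_c$ at time $t$, followed by an action to $z=\xi(t+1)$ with $V_k(z)\ge c$. Since $\mathcal{X}$ is finite, some triple $(y,z,\text{prefix})$ can be chosen with the pair $(y,z)$ recurring for infinitely many $k$; I would take a shortest prefix to the given $y$ so that the prefix is fixed. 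Monotonicity of $V_k$ from Step 1 then gives $V_k(z)\ge c$ for all $k$, that is, $z\in W_c$. Iterating from $z$ concatenates infinitely many segments, each containing a visit to $S_c$. The resulting action sequence has $\limsup_s r\ge c$, so $V[\mathsf{GFr}](x)\ge c(x)$, taking $c=c(x)$ and $x\in W_{c(x)}$. Combining this with Step 2 proves the lemma.

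I expect the delicate points to be two things. First, making the compactness argument rigorous, in particular that the witness pair $(y,z)$ can be fixed independently of $k$. Second, checking that finiteness of $\mathcal{X}$ is genuinely what is needed. Without it, both the attained maxima and the stabilization of $V_k$ may fail.
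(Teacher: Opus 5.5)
Your proposal is correct and follows essentially the paper's proof: $V_k$ decreases monotonically from $V_0=\infty$; the easy inclusion comes from a trajectory visiting $\{r\ge c\}$ infinitely often; and the hard inclusion uses finiteness of $\mathcal{X}$ to splice finite witnesses into one infinite trajectory. The paper phrases that last step as stabilization of the superlevel sets $W_k$ at some finite $K$, whereas you work pointwise at the threshold $c(x)$, use a pigeonhole on the pair $(y,z)$ plus monotonicity, and also prove the ``at least $k$ visits'' form of $V_k$ that the paper only asserts; note that the paper additionally treats finite or compact action spaces (via K\"onig's lemma and Cantor's intersection theorem), which your finite-state assumption does not cover.
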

\end{tcolorbox}
\proof
    First, for an arbitrary threshold $\lambda$, construct the superlevel sets $R$, $W^*$ and $W_k$ as
    \begin{align*}
        R &\coloneqq \{ x : r(x) \geq \lambda \},\\
        W^* &\coloneqq \{ x : V^*[\mathsf{GFr}](x) \geq \lambda \}, \\
        W_k &\coloneqq \{ x : V_{k}(x) \geq \lambda \}.
    \end{align*}
    Note that $W_k$ is exactly the set of states from which it is possible to reach $R$ at least $k$ times.

    Since $V_{0}(x) = \infty$ for all $x$, we have $W_{0} = \mathcal{X}$.
    Let $\mathcal{T}$ denote the operator that maps $V_{k}$ to $V_{k+1}$, i.e., $V_{k+1} = \mathcal{T} V_{k}$. By Lem.~\ref{lem:gf_monotone}, $\mathcal{T}$ is monotone, i.e., $U(x) \leq V(x) \implies \mathcal{T} U(x) \leq \mathcal{T} V(x)$ for all $x$. Moreover, since $V_{1} \leq V_{0}$, we have $V_{k+1} \leq V_{k}$ for all $k$ by induction, and thus $W_{k+1} \subseteq W_{k}$ for all $k$.

    Since $W_k$ is a decreasing sequence of sets, the limit $W_\infty = \bigcap_{k=0}^\infty W_k$ exists, and also that $\lim_{k \to \infty} V_{k}(x) = V^{\infty}(x)$ exists for all $x$.


    \paragraph{1. ($W^* \subseteq W_\infty$)}
    Let $x \in W^*$.
    Then, by definition of $V^*[\mathsf{GFr}]$, there exists an action sequence $\alpha$ such that the system visits $R$ infinitely often.
    In particular, for any $k \in \mathbb{N}$, the system can reach $R$ at least $k$ times under $\alpha$.
    Hence, $x \in W_k$ for all $k$, and thus $x \in W_\infty$.

    \paragraph{2. ($W^* \supseteq W_\infty$)}
    We apply either Lem.~\ref{lem:Wstar_in_Winfty:finite_states}, \ref{lem:Wstar_in_Winfty:finite_actions}, or \ref{lem:Wstar_in_Winfty:compact_actions} depending on the assumptions on the state and action spaces to conclude that $W_\infty \subseteq W^*$.

    Since we have shown both inclusions, we conclude that $W^* = W_\infty$.
    Since this holds for any threshold $\lambda$, we have $\lim_{k \to \infty} V_{k}(x) = V^*[\mathsf{GFr}](x)$ for all $x$, i.e., $V_{k}$ converges pointwise to $V^*[\mathsf{GFr}]$.
\endproof

\begin{tcolorbox}[DefFrame]
\begin{lemma} \label{lem:gf_monotone}
    The operator $\mathcal{T}$ defined as
    \begin{equation*}
        \mathcal{T} V(x) = \max_{\alpha} \max_{t \geq 0} \min\Big( r(\xi_x^\alpha(t)),\; V(\xi_x^\alpha(t+1)) \Big)
    \end{equation*}
    is monotone, i.e., for any two functions $U$ and $V$ such that $U(x) \leq V(x)$ for all $x$, we have $\mathcal{T} U(x) \leq \mathcal{T} V(x)$ for all $x$.
\end{lemma}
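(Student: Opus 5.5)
The plan is a direct pointwise comparison: fix $x$, an action sequence $\alpha$ and a time $t\ge 0$, bound the inner expression for $U$ by the one for $V$, and then push the inequality through the two maxima. Monotonicity of $\mathcal{T}$ is just monotonicity of the compositions $\min$, $\max_t$ and $\max_\alpha$, each of which preserves pointwise order.

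First, I will use the hypothesis $U\le V$ at the specific state $\xi_x^\alpha(t+1)$, which gives $U(\xi_x^\alpha(t+1))\le V(\xi_x^\alpha(t+1))$. This is legitimate because the trajectory $\xi_x^\alpha$ depends only on $x$, $\alpha$ and the dynamics $f$, and not on the function the operator acts on. Since $\min\{c,\cdot\}$ is nondecreasing for fixed $c=r(\xi_x^\alpha(t))$, we get
\begin{equation*}
\min\big(r(\xi_x^\alpha(t)),\,U(\xi_x^\alpha(t+1))\big)\le \min\big(r(\xi_x^\alpha(t)),\,V(\xi_x^\alpha(t+1))\big)
\end{equation*}
for every pair $(\alpha,t)$.

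Next, I will take the supremum over $t\ge 0$ and then over $\alpha\in\mathbb{A}$. Each supremum preserves pointwise inequalities between families indexed by the same set: every term on the left is at most the corresponding term on the right, which is in turn at most the right-hand supremum. This yields $\mathcal{T}U(x)\le\mathcal{T}V(x)$.

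The only subtlety, and it is a mild one, is that the maxima may not be attained. There are infinitely many $t$, and $\mathbb{A}$ may be infinite. In addition, $V_0\equiv\infty$ makes the values extended-real. I would therefore read $\max$ as $\sup$ in $\overline{\mathbb{R}}$, where the comparison argument above goes through verbatim. In the finite-state setting of Lem.~\ref{lem:ctrlsplit}, where the paper's maxima are attained, the maximizer $(\alpha^*,t^*)$ for $U$ can be used directly: $\mathcal{T}U(x)=\min(r,U)\big|_{(\alpha^*,t^*)}\le\min(r,V)\big|_{(\alpha^*,t^*)}\le\mathcal{T}V(x)$.
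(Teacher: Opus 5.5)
Your proof is correct and is the same argument as the paper's: compare $\min\bigl(r(\xi_x^\alpha(t)),U(\xi_x^\alpha(t+1))\bigr)\le\min\bigl(r(\xi_x^\alpha(t)),V(\xi_x^\alpha(t+1))\bigr)$ pointwise in $(\alpha,t)$, then take the maxima over $t$ and $\alpha$. Your remark about reading $\max$ as $\sup$ in the extended reals (needed since $V_0\equiv\infty$ and the maxima need not be attained) is a harmless refinement that the paper leaves implicit.
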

\end{tcolorbox}
\proof
    Let $U$ and $V$ be two functions such that $U(x) \leq V(x)$ for all $x$.
    Then, for any action sequence $\alpha$ and any time $t$,
    \begin{align*}
        \min&\Big( r(\xi_x^\alpha(t)),\; U(\xi_x^\alpha(t+1)) \Big) \leq \\ &\qquad \qquad \qquad \min\Big( r(\xi_x^\alpha(t)),\; V(\xi_x^\alpha(t+1)) \Big).
    \end{align*}
    Taking $\max$ over $t$ and $\alpha$ on both sides yields
    \begin{equation*}
        \mathcal{T} U(x) \leq \mathcal{T} V(x).
    \end{equation*}
\endproof

\begin{lemma} \label{lem:Wstar_in_Winfty:finite_states}
    Suppose the set of states $\mathcal{X}$ is finite.
    Then, $W_\infty \subseteq W^*$.
\end{lemma}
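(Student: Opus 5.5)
We need to show: if $x \in W_\infty$, i.e. $V_k(x)\ge\lambda$ for all $k$, then some action sequence visits $R$ infinitely often without leaving a "good" region, so $V^*[\mathsf{GFr}](x)\ge\lambda$. The plan is a pigeonhole/graph argument that exploits finiteness of $\mathcal{X}$.

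First, I would unpack the recursion. Since $V_{k+1}(y)\ge\lambda$ means there exist $\alpha$ and $t$ with $r(\xi_y^\alpha(t))\ge\lambda$ and $V_k(\xi_y^\alpha(t+1))\ge\lambda$, we get: $y\in W_{k+1}$ iff there is a finite controlled path from $y$ to some $z\in R$ with a successor $z'=f(z,a)\in W_k$. (Finiteness of $\mathcal{X}$ means the maxima are attained: only finitely many values, so sup is a max.) Next, because $W_k$ is a decreasing sequence of subsets of the finite set $\mathcal{X}$, it stabilizes: there is $K$ with $W_k=W_\infty$ for all $k\ge K$. This is the key use of finiteness and the step I expect to carry the argument; the subtle point is ensuring that attainment of maxima holds so that "$V_{k+1}\ge\lambda$" yields an actual witness path rather than an approximate one. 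With $|\mathcal{X}|<\infty$ this follows since each $\max_t$ ranges over a value set of finite cardinality.

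Then $W_\infty=W_{K+1}$, and for every $y\in W_\infty$ there is a finite path from $y$ through a point of $R$, followed by one step into $W_K=W_\infty$. Concatenating these witnesses gives an infinite action sequence. Start at $x\in W_\infty$, take a segment reaching $R$ and then stepping into $W_\infty$, and repeat from the new point forever. The resulting trajectory hits $R$ at infinitely many times, since each segment contributes a visit at a strictly later time.

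Finally, I convert the result back to values. Along this trajectory $\limsup_s r(\xi_x^\alpha(s))\ge\lambda$, so $\rho[\mathsf{GFr}](\xi_x^\alpha)\ge\lambda$ and hence $V^*[\mathsf{GFr}](x)\ge\lambda$, i.e. $x\in W^*$. The concatenation of the infinite action sequence is routine via Lem.~\ref{lem:ctrlsplit}-style splitting of $\alpha$ into prefixes. The main obstacle remains the stabilization-plus-attainment step; without finiteness (compare Lems.~\ref{lem:Wstar_in_Winfty:finite_actions} and \ref{lem:Wstar_in_Winfty:compact_actions}) one would instead need a König's-lemma or compactness argument.
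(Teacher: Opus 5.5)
Your proof is correct and follows the paper's argument: the decreasing sets $W_k$ must stabilize in the finite $\mathcal{X}$, so $W_\infty$ is a fixed point, and concatenating witness segments (reach $R$, then step into $W_\infty$) gives a trajectory that visits $R$ infinitely often. You are also slightly more careful than the paper in two places: you justify that the maxima are attained because only finitely many values $\min\{r(z),V_k(z')\}$ can occur, and you correctly place the successor $\xi_x^\alpha(t+1)$ in $W_\infty$, where the paper writes $\xi_x^\alpha(t)$.
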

\proof
    First, since $\mathcal{X}$ is finite, $W_k \subseteq \mathcal{X}$ is finite for all $k$.
    Moreover, since $W_{k+1} \subseteq W_{k}$ for all $k$, the sequence $W_k$ must stabilize at some finite $K$, i.e., $W_K = W_\infty$ for some $K$.
    Hence, $W_\infty$ is a fixed point of the operator that maps $W_k$ to $W_{k+1}$.

    Now, let $x \in W_\infty$. Since $W_\infty$ is a fixed point, there exists some action sequence $\alpha$ and time $t$ such that $\xi_x^\alpha(t) \in R$, and $\xi_x^\alpha(t) \in W_\infty$.
    We can repeat this argument to construct an infinite action sequence $\alpha$ under which the system visits $R$ infinitely often.
    Thus, $x \in W^*$, and $W_\infty \subseteq W^*$.
\endproof

\begin{lemma} \label{lem:Wstar_in_Winfty:finite_actions}
    Suppose the set of actions $\mathcal{A}$ is finite.
    Then, $W_\infty \subseteq W^*$.
\end{lemma}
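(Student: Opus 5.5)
We need to prove $W_\infty \subseteq W^*$ when $\mathcal{A}$ is finite, where $W_k$ is the $\lambda$-superlevel set of $V_k$. The natural route is a compactness (König's lemma) argument on the tree of finite action sequences, replacing the stabilization argument of Lem.~\ref{lem:Wstar_in_Winfty:finite_states}.

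First, we reduce to finite-horizon witnesses. Fix $x \in W_\infty$. For each $k$, $V_k(x) \ge \lambda$ gives, by unrolling the definition $V_{k} = \mathcal{T} V_{k-1}$ with $V_0 \equiv \infty$, an action sequence $\alpha^{(k)}$ and times $t_1 < t_2 < \dots < t_k$ with $r(\xi_x^{\alpha^{(k)}}(t_i)) \ge \lambda$ for all $i \le k$. There are two points to handle here:
\begin{itemize}
\item The supremum defining $V_k$ must be attained. Since $\mathcal{A}$ is finite, every finite prefix ranges over a finite set. By Lem.~\ref{lem:ctrlsplit}-style splitting, the maximum over $t$ and the prefix is a maximum over finitely many choices at each stage. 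Taking $\max$ rather than $\sup$ follows the paper's convention.
\item The strict increase $t_{i+1} > t_i$ comes from the $\mathsf{X}$ in $V_{k}$.
\end{itemize}
This yields a set of "good" finite prefixes $\beta \in \mathcal{A}^{t_k+1}$ that visit $R$ at least $k$ times, for every $k$.

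Next, we build the tree. Let $T$ be the set of finite action words $\beta$ that can be extended to an infinite sequence visiting $R$ at least $k$ times, for every $k$. Put differently, $\beta \in T$ iff for every $k$ some $\alpha$ extending $\beta$ visits $R$ $k$ times. $T$ is prefix-closed and contains the empty word, since $x \in W_\infty$. It is also infinite, and every node of $T$ has a child in $T$. To see the latter, suppose each of the finitely many children $\beta a$ failed at some bound $k_a$. Then $\beta$ fails at $\max_a k_a$, contradicting $\beta \in T$. Finiteness of $\mathcal{A}$ is used exactly here, making this a pigeonhole argument. Following children, we get an infinite branch $\alpha^\infty$.

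The main obstacle is showing that $\alpha^\infty$ itself visits $R$ infinitely often. Membership in $T$ only guarantees extendability, not actual visits along the branch. To fix this, we strengthen the tree by attaching visit counts. Define $T_m$ as the set of prefixes $\beta$ which already contain at least $m$ visits to $R$ and belong to $T$. We then show that every $\beta \in T$ has an extension in $T_{m+1}$ whenever $\beta \in T_m$. The argument is again pigeonhole over the finitely many extensions of bounded length: if every extension avoided a new visit up to length $L$ for all $L$, we would contradict extendability to $k$ visits. The delicate part is that the length $L$ needed may be unbounded. We handle this by applying König's lemma to the subtree of extensions that do not yet contain the $(m+1)$-th visit but remain in $T$. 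If that subtree were infinite, it would have an infinite branch whose every prefix is extendable to many visits yet never visits. This is consistent, however, so we need a cleaner route.

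The cleaner route is to exploit the monotone structure $W_{k+1} = \{x : \exists \alpha, t,\ \xi(t)\in R,\ \xi(t+1)\in W_k\}$. We show directly that $W_\infty \subseteq \{x : \exists \alpha, t,\ \xi(t)\in R,\ \xi(t+1)\in W_\infty\}$, i.e.\ that $W_\infty$ is a fixed point. Once this holds, the iterative construction of Lem.~\ref{lem:Wstar_in_Winfty:finite_states} applies verbatim and gives $x \in W^*$.

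For the fixed-point claim, for each $k$ choose $(\alpha^{(k)}, t^{(k)})$ with $\xi(t^{(k)}) \in R$ and $\xi(t^{(k)}+1) \in W_k$. We need a uniform bound on $t^{(k)}$ to pass to a constant subsequence of the finite prefixes $\alpha^{(k)}_{0:t^{(k)}}$. Such a bound is where I expect the real difficulty. I would obtain it from the reachable set of $x$ in $t$ steps being finite, by exhibiting for each fixed horizon a nested finite-set argument. If no bound exists, I would fall back to a diagonal König argument on the finitely branching tree of prefixes labeled by the eventual membership, and then use $W_{k+1}\subseteq W_k$ so that a constant prefix lands in $\bigcap_k W_k = W_\infty$.
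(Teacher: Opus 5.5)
You have located exactly the right weak point, and it is the point where the paper's own argument fails. The paper prunes the action tree to nodes whose states lie in $W_\infty$, extracts an infinite branch by K\"onig's lemma, and then asserts that this branch visits $R$ infinitely often. That is the non sequitur you flag: a branch can stay in $W_\infty$ forever without ever entering $R$.

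Your repair, however, leaves the decisive step unproved. Everything rests on a closure property: from every $x\in W_\infty$, some trajectory visits $R$ at a time $t$ with $\xi_x^\alpha(t+1)\in W_\infty$. You correctly reduce this to a uniform bound on the witness times $t^{(k)}$, using pigeonhole over finitely many bounded-length prefixes together with $W_{k+1}\subseteq W_k$. But you then offer only tentative routes to that bound. Finiteness of the $t$-step reachable set controls each fixed horizon, not the horizon itself. A diagonal K\"onig argument over prefixes whose witness times escape to infinity says nothing about visits along the limiting branch; it is the same extendability-versus-visits problem you identified earlier.

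In fact no argument can close this gap: with a finite action set and an infinite state space, the statement is false. Take $\mathcal{A}=\{0,1\}$, with states $s_n$ ($n\ge 0$) and $b_{n,j}$ ($n\ge 0$, $j\ge 1$), placed at isolated points of $\mathbb{R}^2$ if you like. Let the dynamics be $f(s_n,0)=s_{n+1}$, $f(s_n,1)=b_{n,1}$, $f(b_{n,j},a)=b_{n,j+1}$. Let $r=1$ on $R:=\{b_{n,j}: 1\le j\le n\}$, $r=-1$ elsewhere, and $\lambda=0$.

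By induction, for $k\ge 1$, $V_k$ takes values in $\{-1,1\}$, and $V_k(y)=1$ iff some trajectory from $y$ visits $R$ at $k$ distinct times. The sequence $0^k1\cdots$ shows $s_0\in W_k$ for every $k$, so $s_0\in W_\infty$. Yet every trajectory from $s_0$ either stays on the spine $s_0,s_1,\dots$ (never in $R$), or enters some branch $n$ and leaves $R$ for good after $n$ visits. Hence $V^*[\mathsf{GFr}](s_0)=-1$ and $s_0\notin W^*$.

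In this example $W_\infty=\{s_n\}_{n\ge 0}$, and each branch node lies in only finitely many $W_k$. So your closure property fails, the witness times satisfy $t^{(k)}\ge k+2$ and are unbounded, and both your tree $T$ and the paper's pruned tree have the spine $0^\infty$ as their only infinite branch.

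The lemma therefore needs an additional hypothesis. If $\mathcal{X}$ is finite, $W_k$ stabilizes and the closure property is immediate; that is Lem.~\ref{lem:Wstar_in_Winfty:finite_states}, where finiteness of $\mathcal{A}$ plays no role. Otherwise you would have to assume something that bounds $t^{(k)}$ uniformly, which is precisely what your pigeonhole step consumes.
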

\proof
    Let $x \in W_\infty$.
    Then, for any $k \in \mathbb{N}$, there exists an action sequence $\alpha^k$ such that the system can reach $R$ at least $k$ times under $\alpha^k$.
    We now construct a ``success tree'' where, from every node, we create a branch for each action in $\mathcal{A}$, and we remove all nodes that are not in $W_\infty$.
    Since $\mathcal{A}$ is finite, this tree has a finite branching factor.
    Moreover, since $x \in W_\infty$, for any depth $k$, there exists a path from the root to a node at depth $k$.
    By K\"onig's lemma \citep{diestel2025graph}, there exists an infinite path from the root. Since all nodes in the tree are in $W_\infty$, this infinite path corresponds to an action sequence under which the system visits $R$ infinitely often. Thus, $x \in W^*$, and $W_\infty \subseteq W^*$.
\endproof

\begin{lemma} \label{lem:Wstar_in_Winfty:compact_actions}
    Suppose the set of actions $\mathcal{A}$ is a compact space, and the dynamics $f$ is continuous in $a$.
    Then, $W_\infty \subseteq W^*$.
\end{lemma}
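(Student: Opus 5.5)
Fix $x \in W_\infty$. We aim to build an infinite action sequence from $x$ that enters $R$ infinitely often; this gives $x \in W^*$. The finite-action case used K\"onig's lemma. Here compactness of $\mathcal{A}$ (hence of $\mathbb{A} = \mathcal{A}^\mathbb{N}$ in the product topology, by Tychonoff) replaces finite branching, and the construction becomes a limit of witnessing sequences.

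\textbf{Step 1 (witnesses).} Since $x \in W_k$ for every $k$, the definition of $V_k$ through the nested $\max_\alpha\max_t$ gives an action sequence $\alpha^k \in \mathbb{A}$ and times $t^k_1 < t^k_2 < \dots < t^k_k$ with $r(\xi_x^{\alpha^k}(t^k_m)) \ge \lambda$ for each $m \le k$. If the maxima are not attained, we take $\epsilon$-optimal witnesses and pass to $\lambda - \epsilon$ at the end.

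\textbf{Step 2 (continuity).} We show that for each fixed $t$ the map $\alpha \mapsto \xi_x^\alpha(t)$ is continuous on $\mathbb{A}$. This follows by induction on $t$, since $f$ is continuous in $a$. It also needs continuity in the state argument, or $\mathcal{X}$ finite/discrete as in Lem.~\ref{lem:ctrlsplit}'s standing assumption; I would add joint continuity of $f$ as an explicit hypothesis. We also need $R = \{r \ge \lambda\}$ to be closed, i.e.\ $r$ upper semicontinuous. In the finite-$\mathcal{X}$ setting this is automatic.

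\textbf{Step 3 (extraction).} By sequential compactness of $\mathbb{A}$ (a countable product of compact metric spaces), we extract a subsequence $\alpha^{k_n} \to \alpha^\infty$. The delicate part is the visit times $t^{k}_m$, which may drift to infinity with $k$ and destroy recurrence in the limit. This is the main obstacle. I would handle it by a diagonal argument on time windows instead of individual visits. Instead of the raw witnesses, use the fixed-point structure: $W_\infty$ satisfies $W_\infty \subseteq \{y : \exists \alpha, t,\ \xi_y^\alpha(t) \in R,\ \xi_y^\alpha(t+1) \in W_\infty\}$. First show the inclusion "from $y \in W_\infty$ one can reach $R \cap \mathsf{pre}(W_\infty)$" with a \emph{bounded} hitting time. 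If no bound existed, compactness plus continuity would produce a limit trajectory staying forever in the complement, contradicting $y \in W_1$ after an appropriate closedness argument. Having a bounded hitting time $T(y)$, each visit uses only finitely many coordinates of $\alpha$, so limits preserve the visit.

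\textbf{Step 4 (concatenation).} Iterate the one-step claim exactly as in Lem.~\ref{lem:Wstar_in_Winfty:finite_states}. From $x$, reach $R$ at time $t_1$ with $\xi(t_1+1) \in W_\infty$. Restart from that state, reach $R$ again, and so on, concatenating the finite action segments into one $\alpha \in \mathbb{A}$. This trajectory visits $R$ infinitely often, so $\rho[\mathsf{GFr}](\xi_x^\alpha) \ge \lambda$ and $x \in W^*$. The only nontrivial ingredient is the closedness of $W_\infty$ used in Step 3. I would obtain it by showing inductively that each $W_k$ is closed, using compactness of $\mathcal{A}$ and continuity, so that maxima are attained. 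Then $W_\infty = \bigcap_k W_k$ is closed, and membership is preserved under the limits taken.
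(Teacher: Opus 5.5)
The proposal has a genuine gap, and it sits exactly at the point you flag as the main obstacle in Step 3. Your remedy there does not work. A limit trajectory that stays forever outside $R$ (or outside $R\cap\mathsf{pre}(W_\infty)$) does not contradict $y\in W_1$, because membership in $W_1$ only says that \emph{some other} action sequence reaches $R$. So the absence of a bounded hitting time gives no contradiction.

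Likewise, the ``fixed-point structure'' $W_\infty\subseteq\{y:\exists\alpha,t,\ \xi_y^\alpha(t)\in R,\ \xi_y^\alpha(t+1)\in W_\infty\}$ is precisely what can fail for infinite $\mathcal{X}$. The set $W_\infty=\bigcap_k W_k$ is only the $\omega$-th stage of the downward iteration toward the greatest fixed point, and it need not be a fixed point itself. In Lem.~\ref{lem:Wstar_in_Winfty:finite_states} it is one, but only because the decreasing sequence stabilizes at a finite $K$. Closedness of the $W_k$, which you name as the only nontrivial ingredient, is not the issue.

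In fact the obstacle cannot be overcome: the statement is false without finiteness of $\mathcal{X}$. Here is a counterexample.
\begin{itemize}
\item States: $h_n$ for $n\ge0$, $c_{n,j}$ for $1\le j\le n$, and a sink $d$.
\item Actions: $\mathcal{A}=\{0,1\}$, which is finite, hence compact, and every $f$ is continuous in $a$.
\item Dynamics: $f(h_n,0)=h_{n+1}$; $f(h_n,1)=c_{n,1}$ for $n\ge1$ and $f(h_0,1)=d$; $f(c_{n,j},\cdot)=c_{n,j+1}$ for $j<n$; $f(c_{n,n},\cdot)=f(d,\cdot)=d$.
\item Predicate: $r=1$ on the $c_{n,j}$, $r=-1$ elsewhere, with threshold $\lambda=0$.
\end{itemize}
Going right $k$ times and then down gives $k$ visits to $R$, so $V_k(h_0)=1$ for all $k$ and $h_0\in W_\infty$. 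Yet every action sequence visits $R$ only finitely often, so $V^*[\mathsf{GFr}](h_0)=-1$ and $h_0\notin W^*$.

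Realizing these states as isolated points of $\mathbb{R}^2$ makes $f$ jointly continuous and makes $R$ and every $W_k$ closed. So the hypotheses you propose to add do not help. Compactness of $\mathcal{X}$ does not help either: take $\mathcal{X}=[0,1]^4$ with continuous $f$ and $r$, where the first action fixes a speed $s\in[0,1]$ that is then frozen. The visits then occur in a window of length of order $1/s$ starting at a time of order $1/s$, while $s=0$ never visits.

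In the discrete example, $R\cap\mathsf{pre}(W_\infty)=\emptyset$. The limit of your witnesses $\alpha^k$ is ``always go right'', which never visits $R$. This is exactly the drift you anticipated, and no contradiction is available.

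The paper's own proof does not get past this point either. Cantor's intersection theorem yields $a_0\in\bigcap_n C_n$, and the proof then asserts one continuation $a_{1:\infty}$ giving at least $n$ visits for every $n$. But $a_0\in C_n$ only supplies a continuation that depends on $n$. This swaps ``for every $n$ there exists'' for ``there exists for every $n$.'' In the example, $\bigcap_n C_n=\{0\}$, and no single continuation from $h_1$ works. Separately, continuity in $a$ alone does not make $C_n$ closed.

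The same example refutes Lem.~\ref{lem:Wstar_in_Winfty:finite_actions}, on which you modeled your approach. K\"onig's lemma produces an infinite path inside $W_\infty$, but not one that visits $R$ infinitely often. A correct statement needs finite $\mathcal{X}$, as in Lem.~\ref{lem:Wstar_in_Winfty:finite_states}, or an additional hypothesis that prevents the visit times from escaping to infinity.
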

\proof
    Let $x \in W_\infty$.
    Then, for any $k \in \mathbb{N}$, there exists an action sequence $\alpha^k$ such that the system can reach $R$ at least $k$ times under $\alpha^k$.
    We now construct a sequence of non-empty compact sets $C_n$ as follows.
    Let $C_0 = \mathcal{A}$.
    For each $n \geq 1$, let
    \begin{align*}
        &C_n = \{ a \in C_{n-1} : \exists a_{1:\infty} \text{ s.t. } \\ & \qquad \text{the system reaches } R \text{ at least } n \text{ times under } (a, a_{1:\infty}) \}.
    \end{align*}
    Note that $C_n$ is non-empty since $x \in W_\infty$.
    Moreover, $C_n$ is closed since the dynamics $f$ is continuous in $a$, and thus $C_n$ is compact as a closed subset of the compact set $C_{n-1}$.
    Since $C_{n+1} \subseteq C_n$ for all $n$, by Cantor's intersection theorem \citep{rudin1976principles}, the intersection $\bigcap_{n=0}^\infty C_n$ is non-empty.
    Let $a_0$ be an element in this intersection.
    By construction of $C_n$, there exists an action sequence $a_{1:\infty}$ such that the system reaches $R$ at least $n$ times under $(a_0, a_{1:\infty})$ for all $n$.
    Hence, the system visits $R$ infinitely often under the action sequence $(a_0, a_{1:\infty})$, and thus $x \in W^*$.
    Therefore, $W_\infty \subseteq W^*$.
\endproof

\subsection{Multi-Predicate Recurrence}

Here we give a generalization of the previous finite recurrence approach to compositions of $\mathsf{G}$ with multi-Until predicates. We give the proofs for the case with $N=2$ but the generalization to $N > 2$ follows similarly.

\newcommand{\Vstar}{%
  V^*[\mathsf{G}(\land_j \mathsf{q}_j
  \, \mathsf{U} \, \mathsf{r}_j)]}

Let the globally-(until and until) value function
be defined as
\begin{align*}
    &\Vstar(x_0) \\
    &\coloneqq \max_{\alpha}
      \rho\bigl[
        \mathsf{G}(
          \mathsf{q}_1 \, \mathsf{U} \, \mathsf{r}_1
          \land
          \mathsf{q}_2 \, \mathsf{U} \, \mathsf{r}_2
        )
      \bigr](x_0, 0) \\
    &= \max_{\alpha} \min_{t \geq 0}
      \min\Bigl\{
        \max_{s \geq t} \min\bigl\{
          r_1\bigl(\xi_{x_0}^\alpha(s)\bigr),
          \\
    &\qquad\quad
          \min_{0 \leq \ell < s}
            q_1\bigl(\xi_{x_0}^\alpha(\ell)\bigr)
        \bigr\}, \\
    &\qquad
        \max_{u \geq t} \min\bigl\{
          r_2\bigl(\xi_{x_0}^\alpha(u)\bigr),
          \\
    &\qquad\quad
          \min_{0 \leq \ell < u}
            q_2\bigl(\xi_{x_0}^\alpha(\ell)\bigr)
        \bigr\}
      \Bigr\}.
\end{align*}

Let $w_1 \coloneqq q_1 \lor r_1$ and
$w_2 \coloneqq q_2 \lor r_2$, and define the
``until'' objective function $U_i$ as
\begin{align*}
    U_i\bigl(\xi_{x}^{\alpha_{t:\infty}}\bigr)
    &\coloneqq \sup_{s \geq t} \min\bigl\{
      r_i\bigl(\xi_{x}^{\alpha_{t:\infty}}(s)\bigr), \\
    &\qquad
      \min_{t \leq \ell < s}
        q_i\bigl(\xi_{x}^\alpha(\ell)\bigr)
    \bigr\}.
\end{align*}

We now consider the following coupled system of
value functions:
\begin{align*}
    &V_{1,k+1}(x_0) \\
    &\coloneqq \max_{\alpha}
      \rho\bigl[
        (\mathsf{q}_1 \land \mathsf{w}_2)
        \, \mathsf{U} \,
        (\mathsf{r}_1 \land \mathsf{w}_2
          \land \mathsf{X} \, V_{2,k})
      \bigr](x_0) \\
    &= \max_{\alpha} \max_{t \geq 0}
      \min\Bigl\{
        \min\bigl(
          r_1\bigl(\xi_{x_0}^\alpha(t)\bigr), \\
    &\qquad
          w_2\bigl(\xi_{x_0}^\alpha(t)\bigr),
          V_{2,k}\bigl(
            \xi_{x_0}^\alpha(t\!+\!1)\bigr)
        \bigr), \\
    &\qquad
        \min_{0 \leq \ell < t} \min\Bigl(
          q_1\bigl(\xi_{x_0}^\alpha(\ell)\bigr),
          \\
    &\qquad\quad
          w_2\bigl(\xi_{x_0}^\alpha(\ell)\bigr)
        \Bigr)
      \Bigr\}, \\[6pt]
    &V_{2,k+1}(x_0) \\
    &\coloneqq \max_{\alpha}
      \rho\bigl[
        (\mathsf{q}_2 \land \mathsf{w}_1)
        \, \mathsf{U} \,
        (\mathsf{r}_2 \land \mathsf{w}_1
          \land \mathsf{X} \, V_{1,k})
      \bigr](x_0) \\
    &= \max_{\alpha} \max_{t \geq 0}
      \min\Bigl\{
        \min\bigl(
          r_2\bigl(\xi_{x_0}^\alpha(t)\bigr), \\
    &\qquad
          w_1\bigl(\xi_{x_0}^\alpha(t)\bigr),
          V_{1,k}\bigl(
            \xi_{x_0}^\alpha(t\!+\!1)\bigr)
        \bigr), \\
    &\qquad
        \min_{0 \leq \ell < t} \min\Bigl(
          q_2\bigl(\xi_{x_0}^\alpha(\ell)\bigr),
          \\
    &\qquad\quad
          w_1\bigl(\xi_{x_0}^\alpha(\ell)\bigr)
        \Bigr)
      \Bigr\},
\end{align*}
where $V_{1,0}(x) \coloneqq \infty$ and
$V_{2,0}(x) \coloneqq \infty$ for all $x$.

\begin{tcolorbox}[DefFrame]
\begin{lemma}
    For any $k > 0$, let $\xi_{x_0}^\alpha$
    be the trajectory generated by the policy
    achieving the supremum in $V_{i,k}(x_0)$.
    Then,
    \begin{equation}
        V_{i,k}(x_0)
          \leq U_i\bigl(
            \xi_{x_0}^{{\alpha}_{0:\infty}}\bigr)
    \end{equation}
\end{lemma}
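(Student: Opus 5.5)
We prove the bound by unrolling the definition of $V_{i,k}$ once along the optimal trajectory. Fix $i=1$; the case $i=2$ is symmetric with the roles of the indices swapped. Let $\alpha$ and $t^*$ attain the outer $\max_\alpha\max_{t\ge0}$ in the definition of $V_{1,k}(x_0)$. Existence is ensured under the standing finiteness/compactness assumptions of this appendix; otherwise we work with an $\varepsilon$-optimal pair and let $\varepsilon\to 0$ at the end.

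By definition,
\begin{equation*}
V_{1,k}(x_0)=\min\Bigl\{ r_1\bigl(\xi_{x_0}^\alpha(t^*)\bigr),\, w_2\bigl(\xi_{x_0}^\alpha(t^*)\bigr),\, V_{2,k-1}\bigl(\xi_{x_0}^\alpha(t^*+1)\bigr),\, \min_{0\le \ell<t^*}\min\bigl(q_1,w_2\bigr)\bigl(\xi_{x_0}^\alpha(\ell)\bigr)\Bigr\}.
\end{equation*}
Since the minimum of a collection is bounded above by the minimum of any subcollection, we drop the $w_2$ terms and the continuation term $V_{2,k-1}$. This gives
\begin{equation*}
V_{1,k}(x_0)\le \min\Bigl\{ r_1\bigl(\xi_{x_0}^\alpha(t^*)\bigr),\, \min_{0\le \ell<t^*} q_1\bigl(\xi_{x_0}^\alpha(\ell)\bigr)\Bigr\}.
\end{equation*}
The right-hand side is exactly the quantity inside the $\sup_{s\ge 0}$ defining $U_1(\xi_{x_0}^{\alpha_{0:\infty}})$, evaluated at $s=t^*$. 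It is therefore bounded by that supremum, i.e. by $U_1$ of the optimal trajectory. This is all the lemma claims. The same argument applies verbatim for $k=1$, where $V_{2,0}\equiv\infty$ so the continuation term is trivially dropped, and for any $k>1$, because the continuation term is discarded rather than analysed.

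The main obstacle is bookkeeping rather than mathematics. We must make sure that ``the trajectory achieving the supremum'' is a single, well-defined object: the same $\alpha$ has to be used in both $V_{1,k}$ and $U_1$. We must also check that the index conventions line up, namely strict $\ell<s$ in $U_i$ versus $\ell<t$ in the until definition, and the base time $t=0$. If the maximiser does not exist, we take $\alpha^\varepsilon, t^\varepsilon$ with objective at least $V_{1,k}(x_0)-\varepsilon$. The same chain then gives $V_{1,k}(x_0)-\varepsilon\le U_1(\xi_{x_0}^{\alpha^\varepsilon})$, which is the $\varepsilon$-version of the claim.

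Looking ahead, this bound is the ``upper'' half needed to show $\lim_k V_{i,k}=\Vstar$. It will later be combined with the fact that the $V_{i,k}$ are monotone nonincreasing in $k$, by the analogue of Lem.~\ref{lem:gf_monotone}, and with a lower-bound construction that alternates the witnesses $k_1<k_2<\cdots$ as in Lem.~\ref{lem:symbolic:G_of_double_U}.
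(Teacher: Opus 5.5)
Your proof is correct and is essentially the paper's argument: fix the optimal $\alpha$, drop the $w_{\neg i}$ terms and the continuation term $V_{\neg i,k-1}$ from the inner minimum, and recognize what remains as the objective defining $U_i$. The only difference is cosmetic: the paper keeps the $\max_{t\ge 0}$ and compares the two objectives term by term for every $t$, so it never needs a maximizing $t^*$, whereas you fix $t^*$ and fall back on an $\varepsilon$-argument when it is not attained.
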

\end{tcolorbox}
\begin{proof}
    \begin{align*}
        &V_{i,k}(x_0) \\
        &= \max_{t \geq 0} \min\Bigl\{
          \min\bigl(
            r_i\bigl(\xi_{x_0}^\alpha(t)\bigr),
            \\
        &\qquad
            w_{\neg i}\bigl(
              \xi_{x_0}^\alpha(t)\bigr),
            V_{\neg i,k-1}\bigl(
              \xi_{x_0}^\alpha(t\!+\!1)\bigr)
          \bigr), \\
        &\qquad
          \min_{0 \leq \ell < t} \min\Bigl(
            q_i\bigl(\xi_{x_0}^\alpha(\ell)\bigr),
            \\
        &\qquad\quad
            w_{\neg i}\bigl(
              \xi_{x_0}^\alpha(\ell)\bigr)
          \Bigr)
        \Bigr\} \\
        &\leq \max_{t \geq 0} \min\Bigl\{
          r_i\bigl(\xi_{x_0}^\alpha(t)\bigr),
          \\
        &\qquad
          \min_{0 \leq \ell < t}
            q_i\bigl(\xi_{x_0}^\alpha(\ell)\bigr)
        \Bigr\} \\
        &= U_i\bigl(
          \xi_{x_0}^{{\alpha}_{0:\infty}}\bigr).
    \end{align*}
\end{proof}

\begin{tcolorbox}[DefFrame]
\begin{lemma} \label{lem:GUU:fixed_point_convergence}
    Both sequences $V_{1,k}$ and $V_{2,k}$
    converge to $\Vstar$ pointwise, i.e.,
    for all $x$,
    \begin{align*}
        \lim_{k \to \infty} V_{1,k}(x)
        &= \lim_{k \to \infty} V_{2,k}(x) \\
        &= \Vstar(x).
    \end{align*}
\end{lemma}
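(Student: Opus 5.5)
I will mirror the single-predicate argument for $\mathsf{GFr}$, now for the coupled pair $(V_{1,k},V_{2,k})$. The plan has four steps: monotonicity of the coupled operator, existence of the limits, and then two inclusions of superlevel sets.

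\textbf{Step 1: monotonicity and existence of the limit.} Define the coupled operator $\mathcal{T}(V_1,V_2):=(\mathcal{T}_1 V_2,\mathcal{T}_2 V_1)$, where $\mathcal{T}_i$ is the map sending $V_{\neg i,k}$ to $V_{i,k+1}$.
\begin{itemize}
\item Exactly as in Lem.~\ref{lem:gf_monotone}, each $\mathcal{T}_i$ is monotone. The argument of its inner $\min$ is monotone in $V_{\neg i}$, and monotonicity is preserved under $\max_t$ and $\max_\alpha$.
\item Since $V_{i,0}\equiv\infty$, we have $V_{i,1}\le V_{i,0}$. Induction then gives $V_{i,k+1}\le V_{i,k}$ for both $i$.
\item Hence the pointwise limits $V_{i,\infty}$ exist in $[-\infty,\infty)$. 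Boundedness of the predicates makes them finite.
\end{itemize}

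\textbf{Step 2: reduction to superlevel sets.} Fix a threshold $\lambda$ and set $W^*:=\{x:\Vstar(x)\ge\lambda\}$, $W_{i,k}:=\{x:V_{i,k}(x)\ge\lambda\}$, and $W_{i,\infty}:=\bigcap_k W_{i,k}$. By construction, $W_{i,k}$ is exactly the set of states from which one can complete $k$ alternating phases $i,\neg i,i,\dots$ at level $\lambda$. In phase $i$, the trajectory keeps $q_i\wedge w_{\neg i}\ge\lambda$ until it hits $r_i\wedge w_{\neg i}\ge\lambda$, and then hands over to the other phase at the next step. Showing $W^*=W_{1,\infty}=W_{2,\infty}$ for every $\lambda$ yields pointwise convergence of both sequences to $\Vstar$.

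\textbf{Step 3: $W^*\subseteq W_{i,\infty}$.} Take $x\in W^*$ and an optimal $\alpha$. I follow the LHS$\models$RHS direction of Lem.~\ref{lem:symbolic:G_of_double_U} at the quantitative level $\lambda$.
\begin{itemize}
\item Since every suffix satisfies both Untils at level $\lambda$, pick $k_1$ as the first $r_1$-hit. Along $[0,k_1]$ the constraint $w_2\ge\lambda$ holds, because a $\lambda$-level $\mathsf{q}_2\,\mathsf{U}\,\mathsf{r}_2$ is pending from each time.
\item Then pick $k_2>k_1$ hitting $r_2$, with $w_1$ holding along the way. Here I use the ``tail'' property and the time-invariance lemma for $\mathsf{G(q\,U\,r)}$ to restart from $k_1+1$.
\item Iterating produces arbitrarily many alternating phases, so $x\in W_{1,k}$ for every $k$. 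The same argument started with $r_2$ gives $x\in W_{2,k}$.
\end{itemize}

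\textbf{Step 4: $W_{i,\infty}\subseteq W^*$ (the main obstacle).} Membership in every $W_{i,k}$ gives a possibly $k$-dependent action sequence for each $k$, and these must be glued into one infinite sequence.
\begin{itemize}
\item \emph{Finite $\mathcal{X}$:} as in Lem.~\ref{lem:Wstar_in_Winfty:finite_states}, the pair $(W_{1,k},W_{2,k})$ stabilizes at some finite $K$. Then from $W_{1,\infty}$ one can reach $R_1\cap\{w_2\ge\lambda\}$ with next state in $W_{2,\infty}$, and symmetrically from $W_{2,\infty}$. Concatenating these segments gives an infinite alternating trajectory.
\item \emph{Finite $\mathcal{A}$ or compact $\mathcal{A}$ with $f$ continuous:} use König's lemma or Cantor's intersection theorem, as in Lem.~\ref{lem:Wstar_in_Winfty:finite_actions},~\ref{lem:Wstar_in_Winfty:compact_actions}, on the tree of prefixes extendable to $k$ alternating phases.
\end{itemize}
It then remains to check that an infinite alternating trajectory of this kind satisfies $\mathsf{G}(\mathsf{q}_1\mathsf{U}\mathsf{r}_1\wedge\mathsf{q}_2\mathsf{U}\mathsf{r}_2)$ at level $\lambda$. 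This is the RHS$\models$LHS argument of Lem.~\ref{lem:symbolic:G_of_double_U}: at any time $t$ inside a phase-$i$ segment, $\mathsf{q}_i\,\mathsf{U}\,\mathsf{r}_i$ is witnessed by the end of that segment. $\mathsf{q}_{\neg i}\,\mathsf{U}\,\mathsf{r}_{\neg i}$ is witnessed by the end of the next segment, using the padding $w_{\neg i}\ge\lambda$ (either $q_{\neg i}$ or $r_{\neg i}$ holds at each step). The delicate case is a step where $w_{\neg i}$ holds only through $r_{\neg i}$. That step should itself serve as an immediate witness, and this is where I expect the bookkeeping to be most error-prone.

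The lemma right before this statement, $V_{i,k}\le U_i$, can be used as a sanity check on the upper bound. With both inclusions established for every $\lambda$, both limits equal $\Vstar$.
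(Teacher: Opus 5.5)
In the finite-state setting the paper assumes, your proposal is correct and is essentially the paper's argument: monotone nonincreasing limits, a lower bound by induction on $k$ along a good trajectory, and an upper bound by gluing alternating phase segments and checking the pending Until via the first-$r_{\neg i}$-hit case split. The differences are that you track exact $\lambda$-superlevel sets instead of the paper's summable slacks $\delta_j=\epsilon/2^{j+1}$, and that your stabilization at a finite $K$ explicitly supplies the one-step handoff $V_{1,\infty}\le\mathcal{T}_1V_{2,\infty}$, which the paper only asserts ``by definition of $V_{1,\infty}$''.

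You should drop the finite/compact-action branch. Prefixes that each extend to $k$ phases for every $k$ need not lie on one sequence with infinitely many phases, and the lemma itself fails there. Take $q_1=q_2\equiv 1$, $r_1=r_2=r$, and a chain $c_0\to c_1\to\cdots$. At each $c_n$ a second action exits into $n$ consecutive target states with $r=1$, followed by an absorbing sink with $r=-1$. Then $V_{i,k}(c_0)=1$ for every $k$, yet every infinite trajectory hits the target only finitely often, so $V^*(c_0)=-1$, and this uses only two actions.
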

\end{tcolorbox}

Before we prove
Lem.~\ref{lem:GUU:fixed_point_convergence},
we set up a few useful definitions and lemmas.

Define the operator $\mathcal{T}$ mapping
$(J_1, J_2)$ to $(J_1', J_2')$ as
\begin{align*}
    &J_1'(x_0) \\
    &\coloneqq \sup_{\alpha} \sup_{t \geq 0}
      \min\Bigl\{
        \min\bigl(
          r_1\bigl(\xi_{x_0}^\alpha(t)\bigr), \\
    &\qquad
          w_2\bigl(\xi_{x_0}^\alpha(t)\bigr),
          J_2\bigl(
            \xi_{x_0}^\alpha(t\!+\!1)\bigr)
        \bigr), \\
    &\qquad
        \min_{0 \leq \ell < t} \min\Bigl(
          q_1\bigl(\xi_{x_0}^\alpha(\ell)\bigr),
          \\
    &\qquad\quad
          w_2\bigl(\xi_{x_0}^\alpha(\ell)\bigr)
        \Bigr)
      \Bigr\}, \\[6pt]
    &J_2'(x_0) \\
    &\coloneqq \sup_{\alpha} \sup_{t \geq 0}
      \min\Bigl\{
        \min\bigl(
          r_2\bigl(\xi_{x_0}^\alpha(t)\bigr), \\
    &\qquad
          w_1\bigl(\xi_{x_0}^\alpha(t)\bigr),
          J_1\bigl(
            \xi_{x_0}^\alpha(t\!+\!1)\bigr)
        \bigr), \\
    &\qquad
        \min_{0 \leq \ell < t} \min\Bigl(
          q_2\bigl(\xi_{x_0}^\alpha(\ell)\bigr),
          \\
    &\qquad\quad
          w_1\bigl(\xi_{x_0}^\alpha(\ell)\bigr)
        \Bigr)
      \Bigr\}.
\end{align*}

\begin{tcolorbox}[DefFrame]
\begin{lemma}
    The operator $\mathcal{T}$ is monotone.
\end{lemma}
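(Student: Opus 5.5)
We prove monotonicity of $\mathcal{T}$ componentwise, in the same way as Lem.~\ref{lem:gf_monotone}. Order pairs of functions pointwise: $(J_1,J_2)\le(K_1,K_2)$ means $J_1(x)\le K_1(x)$ and $J_2(x)\le K_2(x)$ for all $x$. We must show that $\mathcal{T}(J_1,J_2)\le\mathcal{T}(K_1,K_2)$ in this order. The two components are symmetric under swapping the indices $1\leftrightarrow 2$, so it suffices to treat $J_1'$. Note that $J_1'$ depends on the input pair only through $J_2$, and $J_2'$ only through $J_1$.

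Fix $x_0$, an action sequence $\alpha$, and a time $t\ge 0$. The inner expression defining $J_1'$ is a $\min$ of several terms. The terms $r_1(\xi_{x_0}^\alpha(t))$, $w_2(\xi_{x_0}^\alpha(t))$, and the running $\min_{0\le \ell<t}\min(q_1,w_2)$ do not involve $J_2$ at all. The only term that does is $J_2(\xi_{x_0}^\alpha(t+1))$, and it is at most $K_2(\xi_{x_0}^\alpha(t+1))$ by assumption. Since $\min$ is nondecreasing in each argument, the inner expression built from $J_2$ is at most the one built from $K_2$, for every $(\alpha,t)$.

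Taking $\sup_{t\ge0}$ and then $\sup_\alpha$ preserves pointwise inequalities, which gives $J_1'(x_0)\le K_1'(x_0)$. The same argument applied to $J_1\le K_1$ gives $J_2'\le K_2'$.

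The only point needing care is that some inputs, such as $V_{i,0}\equiv\infty$, may take the value $+\infty$. The argument works over the extended reals, where $\min$ and $\sup$ remain monotone. Since the predicate functions $r_i,q_i$ are bounded, the outputs are finite, so nothing breaks. I expect no real obstacle here. The substantive work lies in the later use of monotonicity: it will show that the iterates $V_{i,k}$ decrease in $k$, because $\mathcal{T}(V_{1,1},V_{2,1})\le(\infty,\infty)$, and that the limits therefore exist.
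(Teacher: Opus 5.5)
Your proof is correct and takes the same route as the paper, which just notes that monotonicity follows immediately from the monotonicity of $\min$ and $\sup$. You make the details explicit: only the term $J_2(\xi_{x_0}^\alpha(t+1))$ (resp.\ $J_1$) depends on the input, and $\min$, $\sup_{t}$, $\sup_{\alpha}$ all preserve pointwise order, including over the extended reals. One small point in your closing aside: the base step for the descent of $V_{i,k}$ should read $(V_{1,1},V_{2,1})=\mathcal{T}(V_{1,0},V_{2,0})\le(V_{1,0},V_{2,0})$, though this concerns the next lemma rather than this one.
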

\end{tcolorbox}
\begin{proof}
    It follows immediately from the monotonicity
    of the $\sup$ and $\min$ operators.
\end{proof}

\begin{tcolorbox}[DefFrame]
\begin{lemma}
    Both sequences converge pointwise, i.e.,
    $V_{1,\infty}$ and $V_{2,\infty}$ exist.
\end{lemma}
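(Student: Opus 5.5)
The plan is to show that the sequences $V_{1,k}$ and $V_{2,k}$ are pointwise monotone non-increasing in $k$. Convergence then follows because the sequences are bounded below, or because they converge in the extended reals. This mirrors the single-predicate argument for $\mathsf{GFr}$, where $V_{k+1}\le V_k$ was obtained from monotonicity of the operator together with $V_1\le V_0$.

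\textbf{Base step.} Since $V_{1,0}\equiv V_{2,0}\equiv\infty$, we trivially have $V_{1,1}\le V_{1,0}$ and $V_{2,1}\le V_{2,0}$. In other words, $(V_{1,1},V_{2,1})=\mathcal{T}(V_{1,0},V_{2,0})\le (V_{1,0},V_{2,0})$ componentwise.

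\textbf{Inductive step.} Suppose $(V_{1,k+1},V_{2,k+1})\le (V_{1,k},V_{2,k})$ componentwise. Applying the monotone operator $\mathcal{T}$ (previous lemma) to both sides gives $(V_{1,k+2},V_{2,k+2})\le(V_{1,k+1},V_{2,k+1})$. The one point to check is that the coupling respects the ordering: $J_1'$ depends only on $J_2$ and $J_2'$ only on $J_1$, and the componentwise order is preserved under this swap. By induction, each sequence $V_{i,k}(x)$ is non-increasing in $k$ for every $x$.

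\textbf{Lower bound and finiteness.} For $k\ge1$, every value $V_{i,k}(x)$ is a $\max_\alpha\max_t$ of a $\min$ that includes the bounded predicate functions $r_i$, $w_{\neg i}$, $q_i$. Hence $V_{i,k}(x)\le \sup r_i<\infty$. For the lower bound, note that each predicate function is bounded (by assumption in the Preliminaries). I will show by induction that $V_{i,k}\ge -M$, where $M$ is a common bound on $|r_i|$, $|q_i|$, $|w_i|$. Choosing $t=0$ gives $V_{i,k+1}(x)\ge \min\{r_i(x),w_{\neg i}(x),\max_a V_{\neg i,k}(f(x,a))\}\ge -M$, using the inductive hypothesis $V_{\neg i,k}\ge -M$ and the base case $V_{\neg i,0}=\infty$.

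A monotone non-increasing real sequence that is bounded below converges. Therefore $V_{i,\infty}(x):=\lim_k V_{i,k}(x)$ exists for each $x$ and $i\in\{1,2\}$.

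The main (minor) obstacle is the careful handling of the coupled indices in the monotonicity induction, together with the initial value $\infty$. Both are resolved by working with the pair $(V_{1,k},V_{2,k})$ as a single iterate of $\mathcal{T}$ under the componentwise partial order.
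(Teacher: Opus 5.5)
Your proof is correct and follows the paper's argument. The steps are the same: $V_{i,1}\le V_{i,0}\equiv\infty$, monotonicity of $\mathcal{T}$ makes each sequence non-increasing, and a lower bound together with monotone convergence gives the pointwise limits. Your version is more careful than the paper's in two places. The paper asserts the bound $\min(\inf_x r_1,\inf_x r_2)$ without proof, and writes $V_{1,0}$ where $V_{1,k}$ is meant; you instead prove a lower bound by induction via the $t=0$ choice. You also make the coupled induction explicit by treating $(V_{1,k},V_{2,k})$ as a single iterate of $\mathcal{T}$.
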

\end{tcolorbox}
\begin{proof}
    Since $V_{1,0}(x) = \infty$ and $V_{1,1}(x)$
    is finite,
    $V_{1,1}(x) \leq V_{1,0}(x)$ for all $x$.
    By monotonicity of $\mathcal{T}$, the sequence
    $V_{1,k}$ is non-increasing.
    Moreover, $V_{1,0}(x)$ is bounded below by
    $\min( \inf_x r_1(x), \inf_x r_2(x) )$.
    Thus, by the monotone convergence theorem,
    $V_{1,\infty}(x)
      = \lim_{k \to \infty} V_{1,k}(x)$
    exists for all $x$.
    The same reasoning applies to $V_{2,k}$
    to show that
    $V_{2,\infty}(x)
      = \lim_{k \to \infty} V_{2,k}(x)$
    exists for all $x$.
\end{proof}

We now show that $V_{1,\infty}$ and $V_{2,\infty}$
both equal $\Vstar$ via double inequality.

\begin{tcolorbox}[DefFrame]
\begin{lemma}
    \begin{equation} \label{lem:GUU:v_smaller_Vinf}
        \Vstar(x) \leq V_{i,\infty}(x)
        \quad \text{for } i = 1, 2.
    \end{equation}
\end{lemma}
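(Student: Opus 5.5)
We need to show $V^*[\mathsf{G}(\land_j \mathsf{q}_j \,\mathsf{U}\, \mathsf{r}_j)](x) \le V_{i,\infty}(x)$. Since $V_{i,k}$ is non-increasing in $k$, it suffices to show the bound for every finite $k$, by induction on $k$, simultaneously for $i=1,2$.

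The base case $k=0$ is trivial because $V_{i,0}\equiv\infty$.

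For the inductive step, assume $V^*[\cdot](y) \le V_{j,k}(y)$ for all $y$ and $j=1,2$, and fix $x_0$ and any action sequence $\alpha$. Write $\rho^\alpha := \rho[\mathsf{G}(\land_j \mathsf{q}_j\,\mathsf{U}\,\mathsf{r}_j)](\xi_{x_0}^\alpha,0)$. We exhibit a time $t$ at which the objective defining $V_{1,k+1}$ is at least $\rho^\alpha$ (up to an arbitrary $\epsilon$ if maxima are not attained).

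First, since $\rho^\alpha \le \rho[\mathsf{q}_1\,\mathsf{U}\,\mathsf{r}_1](\xi^\alpha_{x_0},0)$, there is a $t$ with $r_1(\xi(t)) \ge \rho^\alpha - \epsilon$ and $q_1(\xi(\ell)) \ge \rho^\alpha - \epsilon$ for $\ell < t$.

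Second, for the $w_2 = \max\{q_2,r_2\}$ terms, we use the quantitative version of the argument in Lem.~\ref{lem:symbolic:G_of_double_U}. At every time $\ell$, the $\mathsf{G}$ guarantees $\rho[\mathsf{q}_2\,\mathsf{U}\,\mathsf{r}_2](\xi,\ell) \ge \rho^\alpha$. With the HJR convention that $\mathsf{U}$ includes the arrival time, either $r_2(\xi(\ell)) \ge \rho^\alpha$ (witness $s=\ell$) or $q_2(\xi(\ell)) \ge \rho^\alpha$ (since $\ell$ lies in the safe window of any later witness). Hence $w_2(\xi(\ell)) \ge \rho^\alpha$ for all $\ell$.

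Third, for the term $V_{2,k}(\xi(t+1))$, note that the tail trajectory under $\alpha_{(t+1)^+}$ satisfies $\rho[\mathsf{G}(\cdot)](\xi,t+1) \ge \rho(\xi,0)=\rho^\alpha$, because $\mathsf{G}$ takes a min over a smaller tail set (equivalently, the time-shift properties of $\mathsf{G}(\dots)$ listed in Appendix~\ref{apx:raloop-proof}). Therefore $V^*[\cdot](\xi(t+1)) \ge \rho^\alpha$, and by the induction hypothesis $V_{2,k}(\xi(t+1)) \ge \rho^\alpha$.

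Combining the three bounds, the inner expression at this $t$ is at least $\rho^\alpha-\epsilon$. Taking the maximum over $t$ and $\alpha$ and letting $\epsilon\to0$ gives $V_{1,k+1}(x_0) \ge V^*(x_0)$; the case $i=2$ is symmetric. Passing to the limit in $k$ yields the claim.

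The main obstacle is the second step. We must confirm that the paper's quantitative $\mathsf{U}$ (with $\min_{\kappa\in[t,\tau]}$ including the endpoint, versus the $\ell<s$ used in $U_i$) really gives $\max\{q_2,r_2\}\ge\rho^\alpha$ pointwise. If the strict window is used, the witness-at-$\ell$ case still yields $r_2\ge\rho^\alpha$ and the other case yields $q_2\ge\rho^\alpha$, so the argument should survive either convention. We also need to handle non-attained suprema via $\epsilon$.
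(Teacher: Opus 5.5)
Your proof is correct and uses the same ingredients as the paper's: induction on $k$, a witness time for $\mathsf{q}_1\,\mathsf{U}\,\mathsf{r}_1$, the pointwise bound $w_2 \ge \rho^\alpha$ (the paper gets this from the recursion $U_i(t)=\max\{r_i,\min(q_i,U_i(t+1))\}\le w_i$, which is equivalent to your case split), and a tail bound on $V_{2,k}(\xi(t+1))$. The one difference is the inductive hypothesis: you state it globally as $V^*\le V_{j,k}$ and reach $\xi(t+1)$ through suffix monotonicity of $\mathsf{G}$, whereas the paper fixes a single near-optimal $\alpha$ and inducts on $V_{j,k}\ge\lambda-\epsilon$ along its visited states — your version is slightly tidier, since the paper writes its inductive step only at $x_0$.
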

\end{tcolorbox}
\begin{proof}
    Let $\Vstar(x_0) = \lambda$.
    By definition of the $\sup$ in $\Vstar$,
    for any $\epsilon > 0$, there exists a policy
    $\alpha$ such that for all $t \geq 0$,
    \begin{align} \label{eq:GUU:U_lower_bound}
        U_1\bigl(
          \xi_{x_0}^{{\alpha}_{t:\infty}}\bigr)
          &\geq \lambda - \epsilon,
          \notag \\
        U_2\bigl(
          \xi_{x_0}^{{\alpha}_{t:\infty}}\bigr)
          &\geq \lambda - \epsilon.
    \end{align}
    Using the recursive relation of $U_i$,
    \begin{align*}
        &U_i\bigl(
          \xi_{x_0}^{{\alpha}_{t:\infty}}\bigr)
        \\
        &= \max\bigl\{
          r_i\bigl(\xi_{x_0}^\alpha(t)\bigr),
          \\
        &\qquad
          \min\bigl(
            q_i\bigl(\xi_{x_0}^\alpha(t)\bigr),
            U_i\bigl(
              \xi_{x_0}^{{\alpha}_{t+1:\infty}}
            \bigr)
          \bigr) \bigr\} \\
        &\leq \max\bigl\{
          r_i\bigl(\xi_{x_0}^\alpha(t)\bigr),
          q_i\bigl(\xi_{x_0}^\alpha(t)\bigr)
        \bigr\} \\
        &= w_i\bigl(\xi_{x_0}^\alpha(t)\bigr).
    \end{align*}
    Hence, \eqref{eq:GUU:U_lower_bound} implies
    that under $\alpha$,
    $w_i\bigl(\xi_{x_0}^\alpha(t)\bigr)
      \geq \lambda - \epsilon$
    for all $t \geq 0$.

    We now show via induction on $k$ that
    $V_{1,k}\bigl(\xi_{x_0}^\alpha(t)\bigr)
      \geq \lambda - \epsilon$
    and
    $V_{2,k}\bigl(\xi_{x_0}^\alpha(t)\bigr)
      \geq \lambda - \epsilon$
    for all states visited by $\alpha$.

    \paragraph{Base Case ($k=0$):}
    By definition,
    $V_{1,0}(x) = V_{2,0}(x) = \infty
      \geq \lambda - \epsilon$.

    \paragraph{Inductive Step:}
    Assume the statement holds for some $k$,
    i.e., for all visited states,
    \begin{equation}
        V_{2,k}\bigl(\xi_{x_0}^\alpha(t)\bigr)
          \geq \lambda - \epsilon.
    \end{equation}
    Consider $V_{1,k+1}(x_0)$.
    Under $\alpha$, since
    $U_1\bigl(
      \xi_{x_0}^{{\alpha}_{0:\infty}}\bigr)
      \geq \lambda - \epsilon$,
    there exists some time $t$ where
    $r_1\bigl(\xi_{x_0}^\alpha(t)\bigr)
      \geq \lambda - \epsilon$
    and for all $0 \leq \ell < t$,
    $q_1\bigl(\xi_{x_0}^\alpha(\ell)\bigr)
      \geq \lambda - \epsilon$.
    By the inductive hypothesis,
    $V_{2,k}\bigl(\xi_{x_0}^\alpha(t\!+\!1)\bigr)
      \geq \lambda - \epsilon$.
    Thus,
    \begin{align*}
        &V_{1,k+1}(x_0) \\
        &\geq \min\Bigl\{
          r_1\bigl(\xi_{x_0}^\alpha(t)\bigr),
          w_2\bigl(\xi_{x_0}^\alpha(t)\bigr),
          \\
        &\qquad
          V_{2,k}\bigl(
            \xi_{x_0}^\alpha(t\!+\!1)\bigr), \\
        &\qquad
          \min_{0 \leq \ell < t} \min\bigl\{
            q_1\bigl(
              \xi_{x_0}^\alpha(\ell)\bigr),
            \\
        &\qquad\quad
            w_2\bigl(
              \xi_{x_0}^\alpha(\ell)\bigr)
          \bigr\}
        \Bigr\} \\
        &\geq \lambda - \epsilon.
    \end{align*}
    By symmetry, the same reasoning applies to
    $V_{2,k+1}(x_0)$.
    Since $\epsilon > 0$ was arbitrary, we have
    shown \eqref{lem:GUU:v_smaller_Vinf}.
\end{proof}

\begin{tcolorbox}[DefFrame]
\begin{lemma}
    \begin{equation} \label{lem:GUU:Vinf_smaller_v}
        V_{i,\infty}(x) \leq \Vstar(x)
        \quad \text{for } i = 1, 2.
    \end{equation}
\end{lemma}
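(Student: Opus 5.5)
We need to prove $V_{i,\infty}(x) \le V^*[\mathsf{G}(\land_j \mathsf{q}_j \,\mathsf{U}\, \mathsf{r}_j)](x)$. The plan is a level-set argument, mirroring the single-predicate case, followed by an explicit construction of an infinite action sequence. Fix $x_0$ and a threshold $\lambda < V_{1,\infty}(x_0)$; a level below the limit suffices, and we then let it increase to $V_{1,\infty}(x_0)$. Define $W_{i,k} := \{x : V_{i,k}(x) \ge \lambda\}$ and $W_{i,\infty} := \bigcap_k W_{i,k}$, using the monotone non-increasing property already established. The goal is to produce, from any $x_0 \in W_{1,\infty}$, an action sequence $\alpha$ such that $\rho[\mathsf{G}((\mathsf{q}_1 \,\mathsf{U}\, \mathsf{r}_1)\land(\mathsf{q}_2 \,\mathsf{U}\, \mathsf{r}_2))](\xi_{x_0}^\alpha) \ge \lambda$ (or $\ge \lambda - \epsilon$). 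The case $i=2$ is symmetric.

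\textbf{Step 1: show $W_{1,\infty}$ and $W_{2,\infty}$ form a mutually reachable pair.} We first show that from each $x \in W_{1,\infty}$ there is a finite action segment and a time $t$ such that
\begin{itemize}
\item $q_1 \ge \lambda$ and $w_2 \ge \lambda$ hold on $[0,t)$,
\item $r_1 \ge \lambda$ and $w_2 \ge \lambda$ hold at $t$,
\item $\xi(t+1) \in W_{2,\infty}$,
\end{itemize}
and symmetrically from $W_{2,\infty}$ back into $W_{1,\infty}$. Each finite $k$ only guarantees landing in $W_{2,k-1}$, so passing to the limit is the main obstacle, exactly as in Lemmas~\ref{lem:Wstar_in_Winfty:finite_states}--\ref{lem:Wstar_in_Winfty:compact_actions}. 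Under finite $\mathcal{X}$, the nested sets $W_{i,k}$ stabilize at some $K$, so $(W_{1,K}, W_{2,K})$ is a fixed point of the set-valued version of $\mathcal{T}$ and the step is immediate. Under finite or compact actions, we would use K\"onig's lemma or Cantor intersection on the nested sets of admissible first actions, just as in those lemmas. We would also need to check that the supremum over $t$ is attained, or work with $\lambda-\epsilon$ throughout.

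\textbf{Step 2: concatenate segments.} Starting in $W_{1,\infty}$, apply the segment from Step 1 to reach $W_{2,\infty}$, then the segment back to $W_{1,\infty}$, and repeat indefinitely. This produces switching times $k_1 < k_2 < k_3 < \cdots$ along a single trajectory. On the resulting trajectory:
\begin{itemize}
\item $r_1 \ge \lambda$ at every odd switch and $r_2 \ge \lambda$ at every even switch;
\item between consecutive switches, $q_{\text{active}} \ge \lambda$ for the predicate currently being pursued;
\item $w_{\text{other}} = \max\{q,r\} \ge \lambda$ everywhere for the other predicate.
\end{itemize}

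\textbf{Step 3: verify robustness.} For any $t \ge 0$ we must show $U_1(\xi^{\alpha_{t:\infty}}) \ge \lambda$, and likewise for $U_2$. Take the next $r_1$-switch time $s \ge t$. On $[t,s)$, the trajectory is either in the $\mathsf{q}_1$-phase, where $q_1 \ge \lambda$, or in the $\mathsf{q}_2$-phase, where only $w_1 \ge \lambda$ is known. The second case is delicate: at a time with $w_1 \ge \lambda$ but $q_1 < \lambda$, we must have $r_1 \ge \lambda$, and then $s$ can be taken to be that earlier time instead. So the choice is $s := $ the first time $\ge t$ at which $r_1 \ge \lambda$; before $s$, $r_1 < \lambda$, which together with $w_1 \ge \lambda$ forces $q_1 \ge \lambda$. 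This is precisely the role of the $\mathsf{w}$ augmentation, and it mirrors the logical argument of Lem.~\ref{lem:symbolic:G_of_double_U}.

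Hence $\min_t \min\{U_1, U_2\} \ge \lambda$, so $V^* \ge \lambda$. Letting $\lambda \uparrow V_{i,\infty}(x_0)$ gives the claim.
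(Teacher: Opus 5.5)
Your construction is the same as the paper's: stitch finite segments that alternate between the two modes, then check that $U_1$ and $U_2$ stay above the level from every start time. The paper tracks the loss with slacks $\delta_j=\epsilon/2^{j+1}$ instead of a fixed level set, and checks $U_2$ by a two-case split. Your choice of $s$ as the first time $\ge t$ with $r_1\ge\lambda$, together with $w_1,w_2\ge\lambda$ at all times, covers both cases at once.

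The real difference is at the step you call the main obstacle. The paper obtains, ``by definition of $V_{1,\infty}$,'' a segment whose endpoint satisfies $V_{2,\infty}(\xi(t_1+1))\ge\lambda-\sum\delta_i$. That is the inequality $V_{1,\infty}\le\mathcal{T}_1V_{2,\infty}$, with $\mathcal{T}_1$ the first component of $\mathcal{T}$. It requires interchanging $\lim_k$ with $\sup_{\alpha,t}$, and the paper does not prove it. Your finite-$\mathcal{X}$ argument does.

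You can also discharge your hedge about attainment. By induction on $k$, every candidate inside the suprema defining $V_{i,k}$ lies in the finite set of values taken by $r_i,q_i,w_i$, so all these suprema are attained. Consequently, level sets of $\mathcal{T}_iV$ depend only on level sets of $V$. Once $W_{1,k}$ and $W_{2,k}$ have both stabilized at some $K$, $W_{1,\infty}=W_{1,K+1}$ is exactly the set of points admitting an admissible segment into $W_{2,K}=W_{2,\infty}$, and symmetrically for $W_{2,\infty}$. With attainment you may even take $\lambda=V_{1,\infty}(x_0)$ directly. Since finite states is the paper's standing assumption (cf.\ Lem.~\ref{lem:ctrlsplit}), this gives a complete proof, tighter than the paper's at the critical step.

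Drop the fallback for finite or compact action spaces, however: it fails, and so does the lemma in that setting. Nested sets of admissible first actions only produce an $a_0$ such that for each $n$ \emph{some} continuation realizes $n$ alternations. They do not produce a single continuation realizing all of them. The same quantifier swap sits in Lemmas~\ref{lem:Wstar_in_Winfty:finite_actions} and~\ref{lem:Wstar_in_Winfty:compact_actions}, so citing them does not help.

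Here is a counterexample. Take $q_1=q_2\equiv1$ and $r_1=r_2=r\in\{-1,1\}$, so that $w_1=w_2\equiv1$ and $V_{1,k}=V_{2,k}$. Use two actions and states $z_0,z_1,\dots$ with $r(z_n)=-1$. One action moves $z_n\mapsto z_{n+1}$. The other sends $z_n$ into a chain of exactly $n$ states with $r=1$, ending in an absorbing state with $r=-1$. From $z_0$ one can hit $\{r=1\}$ any prescribed finite number of times, so $V_{1,k}(z_0)=1$ for every $k\ge1$ and $V_{1,\infty}(z_0)=1$. But no trajectory hits $\{r=1\}$ infinitely often, so the value of $\mathsf{G}\bigl((\mathsf{q}_1\,\mathsf{U}\,\mathsf{r}_1)\land(\mathsf{q}_2\,\mathsf{U}\,\mathsf{r}_2)\bigr)$ at $z_0$ is $-1$. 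So the passage to the limit in your Step~1 genuinely needs $|\mathcal{X}|<\infty$ here, not merely finite or compact actions.
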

\end{tcolorbox}
\begin{proof}
    We construct a policy $\alpha$ that achieves a
    value arbitrarily close to
    $V_{1,\infty}(x_0)$.

    Let $\lambda = V_{1,\infty}(x_0)$, and fix
    $\epsilon > 0$.
    Define ``slack'' variables
    $\delta_j = \epsilon / 2^{j+1}$
    for $j = 0, 1, \dots$,
    so that $\sum_{j=0}^\infty \delta_j = \epsilon$
    and $\sum_{j=0}^N \delta_j < \epsilon$
    for all finite $N$.

    We iteratively construct $\alpha$ by stitching
    together finite segments.
    Let $m = j \bmod 2 + 1$ denote the ``mode'' at
    switch $j$.
    We show that after $j$ switches, the state
    $x_{\mathrm{sw}}$ satisfies
    \begin{equation*}
        V_{m,\infty}(x_{\mathrm{sw}})
          \geq \lambda
            - \textstyle\sum_{i=0}^{j-1} \delta_i,
    \end{equation*}
    and for all times $t$ between switches,
    \begin{align*}
        U_1\bigl(
          \xi_{x_0}^{{\alpha}_{t:\infty}}\bigr)
          &\geq \lambda - \epsilon, \\
        U_2\bigl(
          \xi_{x_0}^{{\alpha}_{t:\infty}}\bigr)
          &\geq \lambda - \epsilon.
    \end{align*}

    \paragraph{Base Case.}
    At $j=0$, we begin at $x_0$ with
    $V_{1,\infty}(x_0) = \lambda$.

    \paragraph{Inductive Step.}
    Suppose after $j$ switches we are at state
    $\xi_{x_0}^\alpha(t)$ with $m = 1$
    (the case $m=2$ follows by symmetry).
    Suppose
    $V_{1,\infty}\bigl(\xi_{x_0}^\alpha(t)\bigr)
      \geq \lambda
        - \sum_{i=0}^{2j-1} \delta_i$.
    By definition of $V_{1,\infty}$, there exists
    a finite time $t_1$ and policy segment
    $\alpha_{t:t_1-1}$ such that
    \begin{itemize}
        \item $r_1\bigl(
          \xi_{x_0}^\alpha(t_1)\bigr)
          \geq \lambda
            - \sum_{i=1}^{2j} \delta_i$
        \item $w_2\bigl(
          \xi_{x_0}^\alpha(t_1)\bigr)
          \geq \lambda
            - \sum_{i=1}^{2j} \delta_i$
        \item $V_{2,\infty}\bigl(
          \xi_{x_0}^\alpha(t_1\!+\!1)\bigr)
          \geq \lambda
            - \sum_{i=1}^{2j} \delta_i$
        \item $q_1\bigl(
          \xi_{x_0}^\alpha(s)\bigr)
          \geq \lambda
            - \sum_{i=1}^{2j} \delta_i$
          for all $t \leq s < t_1$
        \item $w_2\bigl(
          \xi_{x_0}^\alpha(s)\bigr)
          \geq \lambda
            - \sum_{i=1}^{2j} \delta_i$
          for all $t \leq s < t_1$
    \end{itemize}
    Hence, for all $\tau$ with
    $t \leq \tau < t_1$,
    \begin{align*}
        &U_1\bigl(
          \xi_{x_0}^{{\alpha}_{\tau:\infty}}\bigr)
        \\
        &\geq \min\Bigl(
          r_1\bigl(\xi_{x_0}^\alpha(t_1)\bigr),
          \\
        &\qquad
          \min_{\tau \leq s < t_1}
            q_1\bigl(\xi_{x_0}^\alpha(s)\bigr)
        \Bigr) \\
        &\geq \lambda - \epsilon.
    \end{align*}
    For $U_2$, let $\tau \in [t, t_1 - 1]$.
    We consider two cases.
    \begin{enumerate}
        \item \emph{There exists $t'$ with
          $\tau \leq t' < t_1$ and
          $r_2\bigl(\xi_{x_0}^\alpha(t')\bigr)
            \geq \lambda - \epsilon$.}
        Let $t'$ be the smallest such time.
        Since
        $w_2\bigl(\xi_{x_0}^\alpha(s)\bigr)
          \geq \lambda - \epsilon$
        and $t'$ is minimal, we have
        $q_2\bigl(\xi_{x_0}^\alpha(s)\bigr)
          \geq \lambda - \epsilon$
        for all $\tau \leq s < t'$.
        Hence,
        \begin{align*}
            &U_2\bigl(
              \xi_{x_0}^{{\alpha}_{\tau:\infty}}
            \bigr) \\
            &\geq \min\Bigl(
              r_2\bigl(
                \xi_{x_0}^\alpha(t')\bigr),
              \\
            &\qquad
              \min_{\tau \leq s < t'}
                q_2\bigl(
                  \xi_{x_0}^\alpha(s)\bigr)
            \Bigr) \\
            &\geq \lambda - \epsilon.
        \end{align*}

        \item \emph{No such $t'$ exists.}
        Since
        $U_2\bigl(
          \xi_{x_0}^{{\alpha}_{t+1:\infty}}\bigr)
          \geq V_{2,\infty}\bigl(
            \xi_{x_0}^\alpha(t\!+\!1)\bigr)
          \geq \lambda - \epsilon$,
        there exists $t'' \geq t_1$ with
        $r_2\bigl(\xi_{x_0}^\alpha(t'')\bigr)
          \geq \lambda - \epsilon$
        and
        $q_2\bigl(\xi_{x_0}^\alpha(s)\bigr)
          \geq \lambda - \epsilon$
        for all $t+1 \leq s < t''$.
        Since no $t'$ exists,
        $q_2\bigl(\xi_{x_0}^\alpha(s)\bigr)
          \geq \lambda - \epsilon$
        for all $\tau \leq s \leq t_1$.
        Thus,
        \begin{align*}
            &U_2\bigl(
              \xi_{x_0}^{{\alpha}_{\tau:\infty}}
            \bigr) \\
            &= \sup_{s \geq \tau} \min\bigl\{
              r_2\bigl(
                \xi_{x_0}^\alpha(s)\bigr),
              \\
            &\qquad
              \min_{\tau \leq \ell < s}
                q_2\bigl(
                  \xi_{x_0}^\alpha(\ell)\bigr)
            \bigr\} \\
            &\geq \min\bigl\{
              r_2\bigl(
                \xi_{x_0}^\alpha(t'')\bigr),
              \\
            &\qquad
              \min_{\tau \leq \ell < t''}
                q_2\bigl(
                  \xi_{x_0}^\alpha(\ell)\bigr)
            \bigr\} \\
            &\geq \lambda - \epsilon.
        \end{align*}
    \end{enumerate}
    Hence, for all $\tau$ with
    $t \leq \tau < t_1$, both $U_1$ and $U_2$
    are at least $\lambda - \epsilon$.
    We extend $\alpha$ with the segment
    $\alpha_{t:t_1-1}$ and transition to
    $\xi_{x_0}^\alpha(t_1\!+\!1)$, where
    \begin{equation*}
        V_{2,\infty}\bigl(
          \xi_{x_0}^\alpha(t_1\!+\!1)\bigr)
          \geq \lambda
            - \textstyle\sum_{i=1}^{2j} \delta_i.
    \end{equation*}

    By symmetry, the same holds when $m=2$.
    Thus, the inductive step holds.

    By induction, at all times $t$,
    \begin{align*}
        U_1\bigl(
          \xi_{x_0}^{{\alpha}_{t:\infty}}\bigr)
          &\geq \lambda - \epsilon, \\
        U_2\bigl(
          \xi_{x_0}^{{\alpha}_{t:\infty}}\bigr)
          &\geq \lambda - \epsilon.
    \end{align*}
    Hence,
    \begin{align*}
        &\Vstar(x_0) \\
        &= \sup_{\alpha}
          \min\bigl\{
            U_1\bigl(
              \xi_{x_0}^{{\alpha}_{0:\infty}}
            \bigr), \\
        &\qquad
            U_2\bigl(
              \xi_{x_0}^{{\alpha}_{0:\infty}}
            \bigr)
          \bigr\} \\
        &\geq \lambda - \epsilon
          = V_{1,\infty}(x_0) - \epsilon.
    \end{align*}
    Since $\epsilon > 0$ was arbitrary,
    $\Vstar(x_0) \geq V_{1,\infty}(x_0)$.
    By symmetry,
    $\Vstar(x_0) \geq V_{2,\infty}(x_0)$.
    This shows \eqref{lem:GUU:Vinf_smaller_v}.
\end{proof}

We are now ready to prove
Lem.~\ref{lem:GUU:fixed_point_convergence}.
\begin{proof}
    The proof follows directly from
    \eqref{lem:GUU:v_smaller_Vinf} and
    \eqref{lem:GUU:Vinf_smaller_v}.
\end{proof}

\newpage
\section{General Decomposition} \label{apx:gen-proof}

Here, we offer an example of greater compositions then the major two presented in the text, in Thms.~\ref{thm:n-ra} and \ref{thm:n-ra-loop}. Namely, we give here a combination of the two cases, and show that it follows from a similar proof strategy.

\begin{tcolorbox}[DefFrame]
    \begin{restatable}{thm}{thmmaster} \label{thm:master}
        For the predicate $$\mathsf{p} := \left( \bigwedge_{i \in \mathcal{I}} \left( \mathsf{q}_i\, \mathsf{U}\, \mathsf{r}_i \right)\right) \land \mathsf{G} \left( \bigwedge_{j \in \mathcal{J}} \left( \mathsf{q}_j\, \mathsf{U}\, \mathsf{r}_j \right)\right) \land \mathsf{G} \mathsf{q} $$
        the corresponding optimal Value satisfies $V^*[\mathsf{p}](x) = V^* \big[\tilde{\mathsf{q}} \,\mathsf{U}\, \tilde{\mathsf{r}}\big](x)$
        where
        \begin{align*}
              \tilde{\mathsf{r}} := \bigvee\nolimits_i \left(\mathsf{r}_i \land \mathsf{v}_{\mathsf{p}_{-i}} \right), \qquad \tilde{\mathsf{q}} := \bigwedge_{k \in \mathcal{I} \times \mathcal{J}} \tilde{\mathsf{q}}_k \land \mathsf{q},
        \end{align*}
        $$\mathsf{p}_{-i} := \left(\bigwedge_{k \in \mathcal{I} \setminus \{i\}} (\mathsf{q}_k\, \mathsf{U}\, \mathsf{r}_k)\right) \land \mathsf{G} \left( \bigwedge_{j \in \mathcal{J}} \left( \mathsf{q}_j\, \mathsf{U}\, \mathsf{r}_j \right)\right) \land \mathsf{Gq}.$$
    \end{restatable}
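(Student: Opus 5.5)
The plan mirrors the proofs of Thms.~\ref{thm:n-ra} and \ref{thm:n-ra-loop}. First I would put $\mathsf{p}$ into an amenable ``single \texttt{Until}'' form using Lem.~\ref{lem:symbolic:master_GU_U_G}; then I would push $V^*$ through the outer structure with the \textcolor[HTML]{A64D79}{\texttt{VDR}}.

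Concretely, the first step is to merge the $\mathsf{G}$ of a conjunction into a conjunction of $\mathsf{G}$'s. Since $\mathsf{G}$ distributes over $\land$, we have $\mathsf{G}(\bigwedge_{j}(\mathsf{q}_j\,\mathsf{U}\,\mathsf{r}_j)) \equiv \bigwedge_j \mathsf{G}(\mathsf{q}_j\,\mathsf{U}\,\mathsf{r}_j)$, so $\mathsf{p}$ has exactly the form $\mathsf{p}_{\mathcal{J},\mathcal{I}}$ of Lem.~\ref{lem:symbolic:master_GU_U_G}, with the roles of the index sets matched appropriately. That lemma then gives
\[
\mathsf{p} \equiv \tilde{\mathsf{q}} \,\mathsf{U}\, \Bigl(\bigvee\nolimits_i (\mathsf{r}_i \land \mathsf{p}_{-i})\Bigr),
\]
where $\tilde{\mathsf{q}}$ is the conjunction of the $\mathsf{q}_i$, of $\mathsf{q}$, and of the ``weak'' guards $(\mathsf{q}_j\lor\mathsf{r}_j)$ for $j\in\mathcal{J}$. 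I would identify this conjunction with the $\bigwedge_k \tilde{\mathsf{q}}_k \land \mathsf{q}$ in the statement.

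The crucial check here is the one absorbed inside the lemma. Before the first $\mathsf{r}_i$ is hit, each $\mathsf{G}(\mathsf{q}_j\,\mathsf{U}\,\mathsf{r}_j)$ forces $\mathsf{q}_j\lor\mathsf{r}_j$ at every step, as in the proof of Lem.~\ref{lem:symbolic:G_of_double_U}. At the hitting time, the remaining obligation is exactly $\mathsf{p}_{-i}$, because the recurrence part is a tail property and the $\mathsf{G}\mathsf{q}$ part restarts. Since robustness semantics respects these equivalences (they are $\max$/$\min$ rearrangements), the resulting Values are equal.

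The second step applies the \textcolor[HTML]{A64D79}{\texttt{VDR}} exactly as in Thm.~\ref{thm:n-ra}. \textcolor[HTML]{A64D79}{\texttt{VDR}}-\ref{vdr:ru} replaces the right argument by its Value AP, \textcolor[HTML]{A64D79}{\texttt{VDR}}-\ref{vdr:or} splits that AP across the disjunction, and \textcolor[HTML]{A64D79}{\texttt{VDR}}-\ref{vdr:wand} with $\mathsf{c}=\mathsf{r}_i$ rewrites each disjunct as $\mathsf{r}_i\land\mathsf{v}[\mathsf{p}_{-i}]$. The \texttt{weak-AND} hypothesis holds because $\rho[\mathsf{r}_i](\xi_x^\alpha)=r_i(x)$ does not depend on $\alpha$. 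Putting these together yields $V^*[\tilde{\mathsf{q}}\,\mathsf{U}\,\tilde{\mathsf{r}}]$.

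The main obstacle is the first step. Lem.~\ref{lem:symbolic:master_GU_U_G} is stated somewhat loosely: its $\Phi$ is undefined outside the proof, and its index roles are swapped relative to the theorem. So I would re-verify the key equivalence
\[
\textstyle\bigwedge_i(\mathsf{q}_i\,\mathsf{U}\,\mathsf{r}_i)\land\mathsf{G}\Psi \equiv (\bigwedge_i\mathsf{q}_i\land\mathsf{w})\,\mathsf{U}\,\bigl(\bigvee_i \mathsf{r}_i\land\mathsf{p}_{-i}\bigr)
\]
directly by double entailment, where $\Psi := \bigwedge_j(\mathsf{q}_j\,\mathsf{U}\,\mathsf{r}_j)\land\mathsf{q}$ and $\mathsf{w} := \bigwedge_j(\mathsf{q}_j\lor\mathsf{r}_j)\land\mathsf{q}$.

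For ``$\Rightarrow$'', take $t=\min_i t_i$ and use the argument of Lem.~\ref{lem:symbolic:U_and}, together with the fact that $\mathsf{G}\Psi$ holds at all times and implies $\mathsf{w}$ pointwise. For ``$\Leftarrow$'', the prefix before $t$ must satisfy $\mathsf{G}\Psi$. I would prove this with the case split from Lem.~\ref{lem:symbolic:G_of_double_U}: either $\mathsf{r}_j$ occurs in the prefix after $\tau$, in which case the $\mathsf{q}_j$-guard holds up to it; or it does not, in which case $\mathsf{q}_j$ holds throughout and we splice in the witness supplied by $\mathsf{G}\Psi$ at time $t$.

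If $\mathcal{I}=\emptyset$, the statement degenerates. I would note that this case is instead covered by Thm.~\ref{thm:n-ra-loop} after absorbing $\mathsf{q}$ into each $\mathsf{q}_j$, per the last display of Lem.~\ref{lem:symbolic:master_GU_U_G}.
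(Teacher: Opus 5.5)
Your proposal is correct and follows the paper's proof. You rewrite $\mathsf{p}$ as a single \texttt{Until} via Lem.~\ref{lem:symbolic:master_GU_U_G} (after distributing $\mathsf{G}$ over the conjunction), push $V^*$ through with \texttt{VDR}-1, 2, 3 exactly as in Thm.~\ref{thm:n-ra}, and send the case with no non-recurrent \texttt{Until}s to Thm.~\ref{thm:n-ra-loop}; your extra double-entailment check is care the paper skips. Note that this check, like the paper's appendix lemmas, uses the strict reading $\kappa\in[t,\tau)$ of $\mathsf{U}$: under the inclusive convention adopted with Def.~\ref{def:tlpayoff}, $\mathsf{G}(\mathsf{q}_j\,\mathsf{U}\,\mathsf{r}_j)$ already forces $\mathsf{G}\mathsf{q}_j$, so the weak guard $\mathsf{q}_j\lor\mathsf{r}_j$ would have to be strengthened to $\mathsf{q}_j$.
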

\end{tcolorbox}

\proof
    The proof simply follows from the same reasoning as in the previous sections, utilizing the established relationships between the various Value functions and their decompositions. Namely, this result follows from a combination of logical rearrangement and then a usage of the algebraic properties of the Bellman equations.

    First, we may have by Lem.~\ref{lem:symbolic:master_GU_U_G} that $\mathsf{p}$ may be rewritten in one of two ways, depending on the remaining index set of Until predicates $\mathcal{J}$. Hence, the proof follows from either case.

    \textit{1. Non-empty $\mathcal{J}$} 

    In this case we have by Lem.~\ref{lem:symbolic:master_GU_U_G}, $\mathsf{p} = \tilde{\mathsf{q}} \, \mathsf{U} \, \tilde{\mathsf{r}}$, where $\tilde{\mathsf{r}}$ is given by,
    $$
    \tilde{\mathsf{r}}_{\IGUSet, \IUSet} \coloneqq \bigvee_{j \in \IUSet} \ru_j \land \Phi_{\IGUSet, \IUSet \setminus \{j \} }.
    $$
    Notably, this case is algebraically equivalent to the previous proofs (e.g. Thm.~\ref{thm:n-ra}), and hence, by Thm.~\ref{lem:equiv}, we have the given result.

    \textit{2. $\mathcal{J}= \emptyset$} 

    In this case we have by Lem.~\ref{lem:symbolic:master_GU_U_G},
    $$
    \mathsf{p} \equiv \mathsf{G}\Bigl( \bigwedge_{i \in \IGUSet} \bigl( \qgu_i \land \qg \bigr)\, \mathsf{U} \, \bigl( \rgu_i \land \qg \bigr) \Bigr)
    $$
    On the other hand, this is a special case of the $N$-$\mathcal{RA}_\ell$ problem, and thus by Thm.~\ref{thm:n-ra-loop}, we may decompose this into $N$ coupled Until-decompositions.
\endproof

\newpage
\section{Policy Synthesis} \label{apx:policy}

In this section, we extend the previous results involving the optimal action sequence $\alpha$ to a state-feedback policy $\pi: \mathcal{X} \to \mathcal{A}$.
For general TL predicates, the synthesis of a policy that matches open-loop action sequence performance requires state-augmentation \cite{sharpless2025dual,meng2025tgpo}.
The nature of temporal logic is to score satisfaction over the entire trajectory.
Hence, to play optimally, the running performance is required.
In \cite{sharpless2025dual}, the authors show that for a reduced set of dual-predicates, the optimal policy may be derived as a function of the augmented-state and each decomposed Value.
Here, we generalize these results to the decomposed Value graph that arises in the decomposition of the general predicates considered in this work.
We show how the optimal policy can be composed from the optimal policies corresponding to each of the nodes in the DVG.

Let $\predA{p}$, $\predB{p'}$, and $q$, $q'$ be arbitrary predicates.
Let $V^*[\cdot]$ be the optimal Value for the given predicates,
and let $\pi^*[\cdot]$ be an optimal policy that achieves $V^*[\cdot]$ given some (potentially augmented) state space.

\subsection{OR}
For $\predA{p} \lor \predB{p'}$, we consider the policy $\hat{\pi}$ defined as
\begin{equation}
    \hat{\pi}[\predA{p} \lor \predB{p'}](x) = \begin{dcases}
            \pi^*[\predA{p}](x) & \text{if } V^*[\predA{p}](x) \geq V^*[\predB{p'}](x), \\
            \pi^*[\predB{p'}](x) & \text{otherwise}.
        \end{dcases}
\end{equation}
This can be derived by noting that
\begin{align}
    \argmax_{\alpha} \rho[\predA{p} \lor \predB{p'}](\xi_x^{\alpha}) 
    &= \argmax_{\alpha} \max\Big\{
        \rho[\predA{p}](\xi_x^{\alpha}),\,
        \rho[\predB{p'}](\xi_x^{\alpha}) \Big\}, \\
    &= \begin{dcases}
        \argmax_{\alpha} \rho[\predA{p}](\xi_x^{\alpha}) & \text{if } V^*[\predA{p}](x) \geq V^*[\predB{p'}](x), \\
        \argmax_{\alpha} \rho[\predB{p'}](\xi_x^{\alpha}) & \text{otherwise}.
    \end{dcases}, \\
    &= \hat{\pi}[\predA{p} \lor \predB{p'}](x).
\end{align}

\subsection{Weak AND}
For $\predA{p} \land \predB{p'}$ where $\forall\alpha,\beta\in \mathcal{A}^\mathbb{N},\,\rho[\predA{p}](\xi_x^\alpha)=\rho[\predA{p}](\xi_x^\beta)$, we consider the policy $\hat{\pi}$ defined as
\begin{equation}
    \hat{\pi}\big[ \predA{p} \land \predB{p'} \big](x) = \pi^*[\predB{p'}](x).
\end{equation}
To derive this, first note that $V^*[\predA{p} \land \predB{p'}](x) = V^* [{\predV{v}[{\predA{p}}]} \land \predB{p'}](x)$ by \textcolor[HTML]{A64D79}{{\texttt{VDR}}}-\ref{vdr:wand}.
Then, we can derive the optimal policy as follows:
\begin{align}
    V^*[\predA{p} \land \predB{p'}](x)
    &= V^* [{\predV{v}[{\predA{p}}]} \land \predB{p'}](x), \\
    &= \max_{\alpha} \rho[\predA{p} \land \predB{p'}](\xi_x^{\alpha}), \\
    &=  \max_{\alpha} \min\{ \rho[\predB{p'}](\xi_x^{\alpha}), \rho[\predA{p}](\xi_x^{\alpha}) \}, \\
    &=  \max_{\alpha} \min\{ \rho[\predB{p'}](\xi_x^{\alpha}), c \}, \quad c = \rho[\predA{p}](\xi_x^\alpha) \text{ for any $\alpha$}, \\
    &=  \max_{\alpha}  \rho[\predB{p'}](\xi_x^{\alpha}),
\end{align}
where the last equality follows from the fact that $c$ is a constant independent of $\alpha$.

\subsection{NEXT}
Constructing a policy for $\mathsf{X} \predA{p}$ is slightly more tricky because
the time at which the predicate $\predA{p}$ is evaluated now matters.
We first consider a time-dependent version of $\hat{\pi}$, defined as
\begin{equation} \label{eq:policy:next}
    \hat{\pi}[\mathsf{X} \predA{p}](x, t) = \begin{dcases}
        \max_{\alpha_0} V^*[\predA{p}](f(x, \alpha_0)), & \text{if } t = 0, \\
        \pi^*[\predA{p}](x), & \text{if } t > 0.
    \end{dcases}
\end{equation}
To derive this, first note that $V^*[\mathsf{X}\predA{p}](x) = V^*[\mathsf{X}{\predV{v}[{\predA{p}}]}](x)$ by \textcolor[HTML]{A64D79}{{\texttt{VDR}}}-\ref{vdr:next}.
Then, we can derive the optimal policy as follows: when $t=0$,
\begin{align}
    V^*[\mathsf{X} \predA{p}](x)
    &= V^*[\mathsf{X}{\predV{v}[{\predA{p}}]}](x), \\
    &= \max_{\alpha} \rho[\mathsf{X} \predA{p}](\xi_x^{\alpha}), \\
    &= \max_{\alpha} \rho[\predA{p}](\xi_x^{\alpha}, 1), \\
    &= \max_{\alpha_0} V^*[\predA{p}](f(x, \alpha_0)).
\end{align}
On the other hand, when $t > 0$,
\begin{equation}
    \max_\alpha \rho[\mathsf{X} \predA{p}](\xi_x^{\alpha}, t)
    = \max_\alpha \rho[\predA{p}](\xi_x^{\alpha}, t+1).
\end{equation}
Hence,
\begin{align}
    \argmax_{\alpha_0} \max_{\alpha_{1:\infty}} \rho[\mathsf{X} \predA{p}](\xi_x^{\alpha})
    &= \argmax_{\alpha_0} \rho[\predA{p}](\xi_x^{\alpha}, 1), \\
    &= \hat{\pi}[\mathsf{X} \predA{p}](x, 0),
\end{align}
and, for $t > 0$,
\begin{align}
    \argmax_{\alpha_t} \max_{\alpha_{0:t-1}, \alpha_{t+1:\infty}} \rho[\mathsf{X} \predA{p}](\xi_x^{\alpha})
    &= \argmax_{\alpha_t} \rho[\predA{p}](\xi_x^{\alpha}, t+1), \\
    &= \pi^*[\predA{p}](x) = \hat{\pi}[\mathsf{X} \predA{p}](x, t).
\end{align}
Since we only need to remember a single bit of information to know whether $t=0$ or $t>0$,
it is sufficient to augment the state space to remember whether we are ``at the NEXT node'' or ``at the child of the NEXT node''.

\subsection{Right Until}
We now consider $\predA{p} \, \mathsf{U} \, \predB{p'}$ where $\forall\alpha,\beta\in \mathcal{A}^\mathbb{N},\,\rho[\predA{p}](\xi_x^\alpha)=\rho[\predA{p}](\xi_x^\beta)$.
This is similar to the NEXT case, where the timing becomes important. 
Let $t^*$ be the first time step at which the Value of $\predA{p} \, \mathsf{U} \, \predB{p'}$ is achieved, i.e.,
\begin{equation}
    V^*[\predA{p} \, \mathsf{U} \, \predB{p'}](x)
    = \max_\alpha \min\Big\{
        \rho[\predB{p'}](\xi_x^{\alpha}, t^*),\,
        \min_{\kappa \in [0, t^*]} \rho[\predA{p}](\xi_x^{\alpha}, \kappa)
    \Big\}.
\end{equation}
Namely, note that $t^*$ is also the first timestep where
\begin{equation} \label{eq:policy:until:comparison}
    \min\{ \rho[\predB{p'}](\xi_x^{\alpha}, t^*), \rho[\predA{p}](\xi_x^{\alpha}, t^*) \}
    \geq V^*[\predA{p} \, \mathsf{U} \, \predB{p'}](x)
    = V^*[\predA{p} \, \mathsf{U} \, {\predV{v}[{\predB{p'}}]}](x).
\end{equation}
We consider the policy $\hat{\pi}$ defined as
\begin{equation} \label{eq:policy:until}
    \hat{\pi}[\predA{p} \,\mathsf{U}\, \predB{p'}](x) = \begin{dcases}
            \pi^*\big[ \predA{p} \, \mathsf{U} \, \predV{v}[ \predB{p'} ] \big](x) & \text{if } V^*[\predA{p} \land \predB{p'}](x) < V^*[\predA{p} \, \mathsf{U} \, \predB{p'}](x), \\
            \pi^*[\predA{p} \land \predB{p'}](x) & \text{if } V^*[\predA{p} \land \predB{p'}](x) \geq V^*[\predA{p} \, \mathsf{U} \, \predB{p'}](x).
        \end{dcases}
\end{equation}
To derive this, we again first consider a time-dependent version of $\hat{\pi}$, defined as 
\begin{equation}
    \hat{\pi}[\predA{p} \,\mathsf{U}\, \predB{p'}](x, t) = \begin{dcases}
            \pi^*\big[ \predA{p} \, \mathsf{U} \, \predV{v}[ \predB{p'} ] \big](x) & \text{if } t < t^*, \\
            \pi^*[\predA{p} \land \predB{p'}](x) & \text{if } t \geq t^*.
        \end{dcases}
\end{equation}
The optimal actions for both $\predA{p} \, \mathsf{U} \, \predB{p'}$ and $\predA{p} \, \mathsf{U} \, {\predV{v}[{\predB{p'}}]}$ are the same for $t < t^*$ \cite{so2026value}.
On the other hand,
\begin{align}
    \argmax_{\alpha_{t^*:\infty}} \max_{\alpha_{0:t^*-1}} \rho[\predA{p} \, \mathsf{U} \, \predB{p'}](\xi_x^{\alpha})
    &= \argmax_{\alpha_{t^*:\infty}} \max_{\alpha_{0:t^*-1}} \min\Big\{
        \rho[\predB{p'}](\xi_x^{\alpha}, t^*),\,
        \min_{\kappa \in [0, t^*]} \rho[\predA{p}](\xi_x^{\alpha}, \kappa)
    \Big\}, \\
    &= \argmax_{\alpha_{t^*:\infty}} \min\Big\{
        \rho[\predB{p'}](\xi_x^{\alpha}, t^*),\,
        \rho[\predA{p}](\xi_x^{\alpha}, t^*) \}.
\end{align}
Again, similar to the case of NEXT, we only need to remember a single bit of information to know whether $t < t^*$ or $t \geq t^*$.
Moreover, by \eqref{eq:policy:until:comparison}, it is sufficient to augment the state space to remember whether we are ``at the UNTIL node'' or ``at the child of the UNTIL node''.

\subsection{Practical Implementation of Policy Synthesis via DVG Traversal}
As shown above, the NEXT and right-UNTIL operators require augmenting the state-space to remember whether we are at the node or its child.
To generalize this to repeated compositions of these operators (which would require remembering multiple bits of information), we augment the state space to remember which node of the DVG is currently being evaluated, and move to the child node as dictated by the piecewise definitions of the optimal policy in either \eqref{eq:policy:next} or \eqref{eq:policy:until}.
We provide a pseudocode implementation of this in \Cref{alg:policy_composition}, where we use \textit{purely propositional} (PP) to denote a node / subformula in the DVG that does not contain any temporal operators.
To handle Recurrent Reach-Avoid specifications with $\mathsf{GU}$,
the DVG's rewrites it into the form \eqref{def:predloop}, which can be handled by the above policy composition rules.

\begin{algorithm}
\caption{Policy Composition}
\label{alg:policy_composition}
\begin{algorithmic}[1]
\Require DVG root $r$, initial state $x_0$, dynamics $f$, Values $V^*[\cdot]$, policies $\pi^*[\cdot]$
\State $n \gets r$
\For{$t = 0,1,2,\ldots$}
    \State $a_t \gets \bot$
    \State $n_{\mathrm{post}} \gets n$
    \While{$a_t = \bot$}
        \If{$n = \bigvee_i \predA{p}_i$}
            \State $i^* \gets \argmax_i V^*[\predA{p}_i](x_t)$
            \State $n \gets \predA{p}_{i^*}$
        \ElsIf{$n = \predA{p} \land \predB{p'}$ with $\predA{p} \in \mathrm{PP}$}
            \State $n \gets \predB{p'}$
        \ElsIf{$n = \predA{p} \, \mathsf{U} \, \predB{p'}$ with $\predA{p} \in \mathrm{PP}$ }
            \If{$V^*[\predA{p} \land \predB{p'}](x_t) \geq V^*[n](x_t)$}
                \State $n \gets \predB{p'}$ \Comment{trigger reached; switch instantaneously}
            \Else
                \State $a_t \gets \pi^*[n](x_t)$
                \State $n_{\mathrm{post}} \gets n$
            \EndIf
        \ElsIf{$n = \mathsf{X} \predA{p}$}
            \State $a_t \in \argmax_{a \in \mathcal{A}} V^*[\predA{p}]\big(f(x_t,a)\big)$
            \State $n_{\mathrm{post}} \gets n_c$ \Comment{switch after one step}
        \Else
            \State $a_t \gets \pi^*[n](x_t)$
            \State $n_{\mathrm{post}} \gets n$
        \EndIf
    \EndWhile
    \State apply $a_t$ and observe $x_{t+1} = f(x_t,a_t)$
    \State $n \gets n_{\mathrm{post}}$
\EndFor
\end{algorithmic}
\end{algorithm}

\newpage
\section{\texttt{valtr} Details} \label{apx:valtr}

In this section, we describe our tool \texttt{valtr}, that (1.) converts temporal logic predicates into a suitable form for decomposition, and (2.) applies the main results recursively to generate the decomposed Value graph. 

To decompose the Value for a user-input predicate, the predicate must first be organized into the form given in Thm.~\ref{thm:master}. This is accomplished by lexing the temporal logic string into relevant tokens, such as atomic propositions and temporal operators, which may then be parsed to generate an abstract syntax tree (AST), which is thus a type of TL Tree (TLT). Over this AST, several passes are made to rearrange the tree into an intermediate representation. This rearrangement is accomplished by first applying well-known logical equivalences and then followed by cleaning (e.g. aggregating redundancies). The ultimate product is a TLT with structure that is amenable to the decompositional results.

To apply the main results recursively and generate the decomposed Value graph, we traverse the TLT and for each node, we apply the decomposition procedure outlined in Thm.~\ref{thm:master}. This involves identifying the relevant substructures, including constants (atomic predicates), negations, minima, maxima, and nodes which represent Value functions. After final cleaning passes, the resulting decomposed Value graph (DVG) is outputted, defining a topological order of nodes, which may be queried to assess a trajectory as well as identify dependencies, and thus suffices for dynamic programming and \texttt{VDPPO}.

\newpage
\section{\texttt{VDPPO} Details} \label{apx:vdppo}

In this section we further describe our algorithm, \texttt{VDPPO}. \texttt{VDPPO} is a specialized form of \texttt{PPO} \cite{schulman2015high}, designed to leverage the decomposed Value graph (DVG). We outline the two augmentations that distinguish it from standard PPO here.

\textbf{1. The advantage and targets are solved with $\mathcal{A}$, $\mathcal{RA}$, and $\mathcal{RA}_\ell$ Bellman eqns. and bootstrapped Values.} As given by the main results, the Bellman Value for a complex TL predicate may be decomposed into a graph of Bellman Values, connected by these atomic BEs. Hence, the Value at each node in the DVG may be approximated in the limit of discounting by the appropriate BE as a function of its dependencies: its decomposed sub-Values and the relevant predicates. To avoid topographically sequential approximation, we use the current Value approximations of the critic to solve these updates. This is denoted by the feedback loop in Fig.~\ref{fig:algplot0}.

\textbf{2. Nodes are embedded, allowing for a unified representation for each actor and critic} We hypothesize that different Values in the DVG may share some similarity, implying the policies do as well, and thus may be jointly approximated by a single representation. Namely, we augment the states with a current Value node and - with a one-hot encoding - condition the MLP for each actor and critic on mixed-node batches. We validate this hypothesis and design choice in the ablations in Sec.~\ref{apx:ablations}, demonstrating this yields equivalent performance while vastly improving the scaling ability compared to previous approaches \cite{sharpless2025dual}.

Additionally, for live roll-outs and evaluation, we define the policy such that upon satisfying the trigger condition given in Sec.\ref{apx:policy}, the current Value node switches to the triggered node in the current augmented state.

\newpage
\section{Environments} \label{apx:sim}

We give here additional details on the environments tested in this work.
The reader may refer to the main text for graphics and specs.
We will publish all code after the anonymous stage of review is complete.

All atomic propositions are defined to be within $[-1, 1]$.
By default, they are \textbf{sparse} and are $\{-1, 1\}$ valued, i.e., take the form
\begin{equation}
    r(x) = \begin{dcases}
        1, \text{if satisfied},
        -1, \text{otherwise}.
    \end{dcases}
\end{equation}
Unless specifically noted, atomic propositions are taken to be sparse.

\textbf{\texttt{DoubleInt}}: The \texttt{DoubleInt} env is defined by up to $N$ agents with 2-dimensional double integrator dynamics and velocity-tracking control. Namely, for each agent, the discrete action sets a desired velocity which is then tracked by a proportional controller in the acceleration (with $k_p=1$). The possible discrete actions correspond to $\pm1$ per dimension, multiplied by the max acceleration. Velocity and acceleration limits are set per-agent. In the three sub-envs, \texttt{Breadth}, \texttt{Depth}, \texttt{Agents (dim.)}, we vary the number of targets to reach (any order), the number of targets to reach sequentially, and the number of agents and number of targets to reach (any order) respectively. In all cases, we define a set of obstacles for which all specifications involve avoid predicates.
The atomic predicates for distance to targets is dense, linearly scaling with distance and clipped to within $[-1, 1]$.

\textbf{\texttt{Herding}}: The \texttt{Herding} env is an augmentation of the \texttt{DoubleInt} env, where we have a team of two agents (the herders) and multiple sheep agents (the herd). The sheep agents are defined by their own fixed policy which samples an action which maximizes the weighted soft-min of their distance to the herders, the walls and each another. The herders are defined such that one is twice as fast as the other, while the herders move at a maximum speed equivalent to the slow herder. A narrow gap divides the herders from the herd initially, as well as the target location of the herd and their initial position. The goal of the task is defined by moving the herd through the narrow passage toward the target region on the otherside and contain them there, while avoiding obstacles and collision. This additionally two intermediate goals to have the herd before the passage, and then to have the herd after the passage, which must be achieved sequentially. The full specification is given in the main text.
The atomic predicates for the distance to the final herding target and the collision between herd and herders is dense, linearly scaling with distance and clipped to within $[-1, 1]$.

\textbf{\texttt{Delivery}}: The \texttt{Delivery} env is an augmentation of the \texttt{DoubleInt} env, where we have a team of three agents -- two small, fast agents (the delivery robots) and one, big slow agent (the resupply truck) -- and randomly spawning targets (delivery locations). The goal of the task is for the agents to recurrently reach the target locations and then recurrently visit the resupply truck. After a delivery target is reached by the corresponding agent, the location jumps to a new random location. Additionally, the domain is defined with the same obstacles used in the \texttt{DoubleInt} env, and the team must avoid collision with the obstacles and one another, despite both needing to resupply at the mobile agent. All agents are mobile and hence the truck agent may dynamically adjust its location to suit the current positions. Note, this simulated env differs from the hardware version, which includes a different obstacle layout as well as an additional aerial obstacle (no fly zone).
The atomic predicates for distance to targets is dense, linearly scaling with distance and clipped to within $[-1, 1]$.

\textbf{\texttt{Manipulator}}: The \texttt{Manipulator} env is taken from \cite{ogbench_park2025}, and involves a manipulator which must grasp and interact with objects in the environment. The specification for this task is to place the cube inside the drawer and eventually always have the drawer closed. Additional objects exist in the environment but have no relevance to task completion.
All atomic predicates are sparse.


\newpage
\section{Baselines} \label{apx:baselines}

In this section we discuss the baselines employed in this work.

\textbf{LCRL}: This baseline \cite{hasanbeig2022lcrl} is a deep RL method that augments the MDP with an automata for learning TL solutions. Specifically, an actor-critic variation of PPO is designed such that they are conditioned on the automaton and the current state of an augmented trajectory. As this is just another variation of PPO, we employ the same parameter set as used in \texttt{VDPPO} for a fair comparison.

\textbf{TL-MPPI}: This baseline is an extension of Model Predictive Path Integral (MPPI) \cite{williams2016aggressive} to tackle TL problems \cite{halder2025trajectory}, which we denote TL-MPPI. Namely, this method plans a trajectory based on MPPI sample-based optimization of the TL robustness metric. The method in the work does not function adaptively as the controller has no memory without state-augmentation or automaton, however, we employ it as a trajectory optimization method which the agent then tracks. The parameters that worked best in the given environments included: $1000$ samples per step, a horizon of $100$ steps, $20$ iterations per step, an initial standard deviation of $50$, $\lambda=1$, and an iteration temperature (shrink) parameter of $0.6$.

\newpage
\section{Ablations} \label{apx:ablations}

Here, we provide additional ablation experiments to analyze the design of our algorithm, \texttt{VDPPO}. In \cite{sharpless2025dual}, authors similarly derived decompositional Value results, although for a greatly reduced set of predicates, and then faced the practical question of how to employ these results to learn the critics (Value estimates) effectively, deciding to use a different actor and critic for each decomposition. While this performed well for the dual-specifications that were considered, this approach scales poorly to tasks with complex logic, as the number of required actors and critics can grow combinatorially (see Thm.\ref{thm:n-ra}).

Moreover, while Values can vary significantly for different rewards and specifications, in many practical cases, tasks often involve different sub-tasks which themselves differ only by translation (e.g. identical configuration goals in different locations), order (e.g. iteratively unlock doors with keys) or other simple transformation or symmetry. 
Under certain variations, the resulting Bellman Value may indeed differ only by the same transformation. In such cases, a partial consolidation of the representations may accomplish sufficient approximation while greatly reducing the learning challenge.

In \texttt{VDPPO}, we employed this idea, by embedding all Values into a shared space with the one-hot encoding to allow the actor and critic to to each use a shared MLP trunk (see Section \ref{apx:vdppo} for details). To analyze the importance of this design choice, we compare against a version of \texttt{VDPPO} where each critic and actor has its own separate MLP trunk (i.e. no shared parameters). Moreover, we scan this comparison over an increasing range in the number of layers in the shared trunk (or each independent body, when not shared), to analyze the importance of the depth of the shared representation. The results are plotted in Fig.~\ref{fig:ablations_share}.

From a performance-only perspective, we find that sharing parameters for the value function alone erodes success rate while sharing parameters for the actor boosts success rate, and when combined, we observe performance that is nearly identical to performance without sharing. This result is inspiring as the shared architectures train nearly $N$-times faster than the standard approach employed in \cite{sharpless2025dual}, where $N$ is the quantity of decompositions.

\begin{figure}[h]
    \centering
    \includegraphics[width=0.6\linewidth]{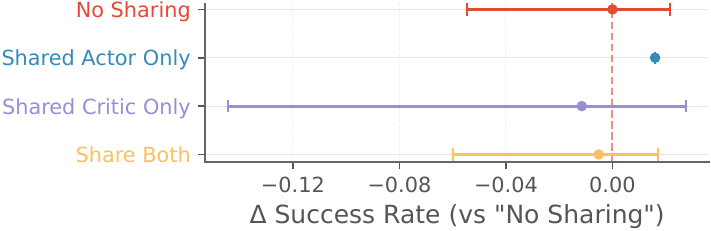}
    \caption{\textbf{Effect of parameter sharing.} Sharing parameters for the actor only improves performance while reduce the variance. }
    \label{fig:ablations_share}
\end{figure}


\newpage
\section{Hardware} \label{apx:hardware}

In the hardware experiments, we evaluate \texttt{VDPPO} performance in the Herding and Delivery tasks.
In both tasks, the state position is reported by HTC Vive base stations in communication with the an attached Lighthouse deck to each Crazyflie. The Go2 quadruped's location is integrated into the same framework by attaching a propeller-less Crazyflie to its chassis, which transmits its position data to a single computer. 
The state of each agent is concatenated to form the full state used by the \texttt{VDPPO} policy, which is inferred on the local CPU of the coordinating laptop. 
The output action velocity commands are broadcasted to each agent's onboard controller, which tracks the transmitted velocity setpoint. 


\newpage


\end{document}